\newtheorem{lemma}{Lemma}[section]
\newtheorem{proposition}{Proposition}[section]
\newtheorem{theorem}{Theorem}[section]
\newcommand{\inner}[1]{\left\langle#1\right\rangle}
\def\R{\mathbb{R}}
\newcommand{\norm}[1]{\left\|#1\right\|}
\def\ones{\mathop{\rm e}\nolimits}
\def\argmax{\mathop{\rm arg\,max}\limits}
\def\minop{\mathop{\rm min}\limits}
\def\maxop{\mathop{\rm max}\limits}
\def\min{\mathop{\rm min}\nolimits}
\def\max{\mathop{\rm max}\nolimits}
\def\ones{\mathbf{1}}
\newif\ifpaper
\title{Formal Guarantees on the Robustness of a Classifier against Adversarial Manipulation}
\author{
  Matthias Hein and Maksym Andriushchenko\\
  Department of Mathematics and Computer Science\\
  Saarland University, Saarbr{\"u}cken Informatics Campus, Germany\\
}
\begin{document}

\maketitle

\begin{abstract}
Recent work has shown that state-of-the-art classifiers are quite brittle, in the sense that a small adversarial change of an originally with high confidence correctly classified 
input leads to a wrong classification again with high confidence. This raises concerns that such classifiers are vulnerable to attacks and calls into question their usage in safety-critical systems.
We show in this paper for the first time formal guarantees on the robustness of a classifier by giving instance-specific \emph{lower bounds} on the norm of the input
manipulation required to change the classifier decision. Based on this analysis we propose the Cross-Lipschitz regularization functional. We show that using this form of regularization in kernel methods resp. neural networks improves the robustness of the classifier with no or small loss in prediction performance. 
\end{abstract}

\section{Introduction}
The problem of adversarial manipulation of classifiers has been addressed initially in the area of spam email detection, see e.g. \cite{DalEtAl2004,LowMee2005}. 
The goal of the spammer is to manipulate the spam email (the input of the classifier) in such a way that it is not detected by the classifier.
In deep learning the problem was brought up in the seminal paper by \cite{SzeEtal2014}. They showed for state-of-the-art deep neural networks, that one can manipulate an originally correctly classified input image with a \emph{non-perceivable} small transformation so that the classifier now misclassifies this image with high confidence, see \cite{GooShlSze2015} or Figure \ref{fig:adv} for an illustration. This property calls into question the usage of neural networks and other classifiers showing this behavior in safety critical systems, as they are vulnerable to attacks. On the other
hand this also shows that the concepts learned by a classifier are still quite far away from the visual perception of humans.
Subsequent research has found fast ways to generate adversarial samples with high probability \cite{GooShlSze2015,HuaEtAl2016,MooFawFro2016}
and suggested to use them during training as a form of data augmentation to gain more robustness. However, it turns out that the so-called adversarial training does not settle the problem 
as one can yet again construct adversarial examples for the final classifier. Interestingly, it has recently been shown that there exist universal adversarial changes which when
applied lead, for every image, to a wrong classification with high probability \cite{MooEtAl2016}. While one needs access to the neural network model for the generation of adversarial changes, it has
been shown that adversarial manipulations generalize across neural networks \cite{PapEtAl2016a,LiuEtAl2016,KurGooBen2016a}, which means that neural network classifiers can be attacked
even as a black-box method. The most extreme case has been shown recently \cite{LiuEtAl2016}, where they attack the commercial system Clarifai, which is a black-box system as neither the underlying classifier nor the training data are known. Nevertheless, they could successfully generate adversarial images with an existing network and fool this commercial system. This emphasizes that there are indeed severe security issues with modern neural networks.
While countermeasures have been proposed \cite{GuRig2015,GooShlSze2015,ZheEtAl2016,PapEtAl2016a,HuaEtAl2016,BasEtAl2016}, none of them provides a guarantee of preventing this behavior \cite{CarWag2017}. One might think that generative adversarial neural networks should be resistant to this problem, but it has recently been shown \cite{KosFisSon2017} that they can also be attacked by adversarial manipulation of input images.


In this paper we show for the first time instance-specific formal guarantees on the robustness of a classifier against adversarial manipulation. That means we provide \emph{lower bounds} on the norm of the change of the input required to alter the classifier decision or said
otherwise: we provide a guarantee that the classifier decision does not change in a certain ball around the considered instance. We exemplify our technique for two widely used family of classifiers: kernel methods and neural networks. Based on the analysis we propose a new regularization functional, which we call \emph{Cross-Lipschitz Regularization}. This regularization functional can be used in kernel methods and neural networks. We show that using Cross-Lipschitz regularization improves both the formal guarantees of the resulting classifier (lower bounds) as well as the change required for adversarial  manipulation (upper bounds) while maintaining similar prediction performance achievable with other forms of regularization. While there exist fast ways to generate adversarial samples  \cite{GooShlSze2015,HuaEtAl2016,MooFawFro2016} without constraints, 
we provide algorithms based on the first order approximation of the classifier which generate adversarial samples satisfying box constraints in $O(d\log d)$, where $d$ is the input dimension.


\section{Formal Robustness Guarantees for Classifiers}\label{sec:robg}
In the following we consider the multi-class setting for $K$ classes and $d$ features where one has a classifier $f:\R^d \rightarrow \R^K$ and a point $x$ is classified via $c=\argmax_{j=1,\ldots,K} f_j(x)$.
We call a classifier robust at $x$ if small changes of the input do not alter the decision. 
Formally,  the problem can be described as follows \cite{SzeEtal2014}. 
Suppose that the classifier outputs class $c$ for input $x$, that is $f_c(x)>f_j(x)$  for $j\neq c$ (we assume the decision is unique).  The problem of generating an input $x+\delta$ 
such that the classifier decision 
changes, can be formulated as
\begin{align}\label{eq:advopt}
 \minop_{\delta \in \mathbb{R}^d} \; \norm{\delta}_p, \qquad \textrm{s.th.}  \quad    \maxop_{l\neq c} \; f_l(x+\delta) \geq f_c(x+\delta) \; \textrm{ and }\; x+\delta \in C,
\end{align}
where $C$ is a constraint set specifying certain requirements on the generated input $x+\delta$, e.g., an image has to be in $[0,1]^d$. 
Typically, the optimization problem \eqref{eq:advopt} is non-convex and thus intractable. The so generated points $x+\delta$ are called \emph{adversarial samples}.
Depending on the $p$-norm the perturbations have different characteristics: for $p=\infty$ the perturbations are small and affect all features, whereas for $p=1$ one gets sparse solutions up to the extreme case that only a single feature is changed.
In \cite{SzeEtal2014} they used $p=2$ which leads to more spread but still localized perturbations.
The striking result of \cite{SzeEtal2014,GooShlSze2015} was that for most instances in computer vision datasets, the change $\delta$ necessary to alter
the decision is astonishingly small and thus clearly the label should not change. However, we will see later that our new regularizer leads to robust classifiers in the sense that the required adversarial change is so large that now also the class label changes (we have found the correct decision boundary), see Fig \ref{fig:adv}.  Already in \cite{SzeEtal2014} it is suggested to add the generated adversarial samples as a form of data augmentation during the training of neural networks in order to achieve robustness. This is denoted as \emph{adversarial training}. Later on fast ways to approximately solve \eqref{eq:advopt} were proposed in order to speed up the adversarial training process \cite{GooShlSze2015,HuaEtAl2016,MooFawFro2016}. However, in this way, given that the approximation is successful, that is $\argmax_j f_j(x+\delta)\neq c$, one gets just upper bounds
on the perturbation necessary to change the classifier decision. Also it was noted early on, that the final classifier achieved by adversarial training is again vulnerable to adversarial samples \cite{GooShlSze2015}. Robust optimization has been suggested as a measure against adversarial manipulation \cite{HuaEtAl2016,ShaYamNeg2016} which effectively boils down to adversarial training in practice. It is thus fair to say that up to date no mechanism exists which prevents the generation of adversarial samples nor can defend against it \cite{CarWag2017}.

In this paper we focus instead on robustness guarantees, that is we show that the classifier decision does not change in a small ball around the instance. Thus our guarantees hold for any method to generate adversarial samples or input transformations due to noise or sensor failure etc. Such formal guarantees are in our point of view absolutely necessary when a classifier becomes part of a safety-critical technical system such as autonomous driving. In the following we will first show how one can achieve such a guarantee and then explicitly derive bounds for
kernel methods and neural networks. We think that such formal guarantees on robustness should be investigated further and it should become standard
to report them for different classifiers alongside the usual performance measures.

\subsection{Formal Robustness Guarantee against Adversarial Manipulation}
The following guarantee holds for any classifier which is continuously differentiable with respect to the input in each output component. It is instance-specific and depends to some extent on the confidence in the decision, at least if 
we measure confidence by the relative difference $f_c(x)-\max_{j\neq c} f_j(x)$ as it is typical for the cross-entropy loss and other multi-class losses. 
In the following we use the notation $B_p(x,R)=\{ y \in \R^d \,|\, \norm{x-y}_p \leq R\}$.
\begin{theorem}\label{th:RobG}
Let $x \in \R^d$ and   $f:\R^d \rightarrow \R^K$ be a multi-class classifier with continuously differentiable components and let $c = \argmax_{j=1,\ldots,K} f_j(x)$ be the class which $f$ predicts for $x$. 
Let $q \in \R$  be defined as $\frac{1}{p}+\frac{1}{q}=1$, then for all $\delta \in \R^d$ with
\[ \norm{\delta}_p \; \leq \;  \maxop_{R>0}\min\left\{ \minop_{j \neq c} \frac{f_c(x)-f_j(x)}{\maxop_{y \in B_p(x,R)} \norm{\nabla f_c(y) - \nabla f_j(y)}_q},\; R \right\} :=\alpha,\]
it holds $c=\argmax_{j=1,\ldots,K} f_j(x+\delta)$, that is the classifier decision does not change on $B_p(x,\alpha)$.
\end{theorem}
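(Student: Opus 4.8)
The plan is to show that the defining inequality for $\alpha$ forces $f_c(x+\delta)\ge f_j(x+\delta)$ simultaneously for every competing class $j\neq c$, which is precisely the statement that $c$ remains a maximizer. Since the event $c=\argmax_{j} f_j(x+\delta)$ is a conjunction of pairwise comparisons, I would fix an arbitrary $j\neq c$ and control the scalar difference $g_j(t):=f_c(x+t\delta)-f_j(x+t\delta)$ along the straight segment $t\mapsto x+t\delta$, $t\in[0,1]$. At $t=0$ we know $g_j(0)=f_c(x)-f_j(x)>0$ by definition of $c$, and the goal is to certify that $g_j$ cannot drop below $0$ by the time $t=1$.

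The engine is the fundamental theorem of calculus. Since each $f_l$ is continuously differentiable, $g_j$ is $C^1$ on $[0,1]$ and
\[ g_j(1)=g_j(0)+\int_0^1 \inner{\nabla f_c(x+t\delta)-\nabla f_j(x+t\delta),\,\delta}\,dt. \]
I would then bound the integrand from below using H\"older's inequality with the conjugate pair $\tfrac1p+\tfrac1q=1$, giving
\[ \inner{\nabla f_c(x+t\delta)-\nabla f_j(x+t\delta),\,\delta}\ \ge\ -\norm{\nabla f_c(x+t\delta)-\nabla f_j(x+t\delta)}_q\,\norm{\delta}_p. \]
The key geometric observation is that every intermediate point stays inside the ball: $\norm{(x+t\delta)-x}_p=t\norm{\delta}_p\le\norm{\delta}_p$ for $t\in[0,1]$. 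Choosing the radius $R$ attaining the outer maximum in the definition of $\alpha$ (so that $\norm{\delta}_p\le\alpha\le R$), the whole segment lies in $B_p(x,R)$, hence $\norm{\nabla f_c(x+t\delta)-\nabla f_j(x+t\delta)}_q\le \maxop_{y\in B_p(x,R)}\norm{\nabla f_c(y)-\nabla f_j(y)}_q$ uniformly in $t$.

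Combining these ingredients and integrating the now constant bound over $[0,1]$ yields
\[ f_c(x+\delta)-f_j(x+\delta)\ \ge\ f_c(x)-f_j(x)-\norm{\delta}_p\,\maxop_{y\in B_p(x,R)}\norm{\nabla f_c(y)-\nabla f_j(y)}_q. \]
Finally, the constraint $\norm{\delta}_p\le\alpha\le \frac{f_c(x)-f_j(x)}{\maxop_{y\in B_p(x,R)}\norm{\nabla f_c(y)-\nabla f_j(y)}_q}$ coming from the inner minimum makes the right-hand side nonnegative, so $f_c(x+\delta)\ge f_j(x+\delta)$; since $j$ was arbitrary, $c$ stays a maximizer on $B_p(x,\alpha)$.

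The points requiring the most care are the bookkeeping of the nested $\maxop_R/\minop_j$ structure and the dual role of $R$. Each fixed $R$ already produces a valid robustness radius $\min\{\minop_{j\neq c}(\cdots),R\}$, and the outer $\maxop_R$ merely selects the sharpest one; one must argue that for the chosen $\delta$ there is an admissible $R$ large enough to contain the segment yet small enough to keep the gradient-difference norm controlled. This is exactly why $R$ appears both capping $\norm{\delta}_p$ inside the $\min$ and as the domain of the inner $\maxop$: it simultaneously bounds the perturbation and defines the region over which the Lipschitz-type constant is measured, and these two uses must be reconciled. A final mild subtlety is strictness at the boundary $\norm{\delta}_p=\alpha$, where the estimate only gives $f_c\ge f_j$ rather than a strict inequality; for $\delta$ strictly inside the ball the inequality is strict and $c$ is the unique $\argmax$.
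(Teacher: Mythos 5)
Your proposal is correct and follows essentially the same route as the paper's proof: the fundamental theorem of calculus along the segment $t\mapsto x+t\delta$, H\"older's inequality with the dual pair $(p,q)$, a uniform bound of the gradient difference over $B_p(x,R)$ (valid because $\norm{\delta}_p\le R$ keeps the segment in the ball), and then the minimum over $j\neq c$ and maximum over $R$. The only cosmetic difference is that you argue directly that $f_c(x+\delta)-f_j(x+\delta)\geq 0$, whereas the paper phrases the same chain of inequalities contrapositively as a lower bound on the $\norm{\delta}_p$ needed to flip the decision; your remark about non-strictness at $\norm{\delta}_p=\alpha$ is a fair observation that applies equally to the paper's own argument.
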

\ifpaper
\begin{proof}
By the main theorem of calculus, it holds that
\[ f_j(x+\delta)= f_j(x) + \int_0^1 \inner{\nabla f_j(x+t\delta),\delta} \, dt, \quad \textrm{ for } j=1,\ldots,K.\]
Thus, in order to achieve $f_j(x+\delta) \geq f_c(x+\delta)$, it has to hold that
\begin{align*}
0 \leq f_c(x)-f_j(x) &\leq \int_0^1 \inner{\nabla f_j(x+t\delta) - \nabla f_c(x+t\delta),\delta} dt\\
                     &\leq \norm{\delta}_p \int_0^1 \norm{\nabla f_j(x+t\delta) - \nabla f_c(x+t\delta)}_q \,dt,
\end{align*}
where the first inequality holds as $f_c(x)\geq f_j(x)$ for all $j=1,\ldots,K$ and in the last step we have used H{\"o}lder inequality together with the fact that the $q$-norm is dual to the $p$-norm, where $q$ is defined via $\frac{1}{p}+\frac{1}{q}=1$. Thus the minimal norm of the change $\delta$ required to change the classifier decision from $c$ to $j$ satisfies
\[ \norm{\delta}_p \geq \frac{f_c(x)-f_j(x)}{\int_0^1 \norm{\nabla f_j(x+t\delta) - \nabla f_c(x+t\delta)}_q \,dt}.\]
We upper bound the denominator over some fixed ball $B_p(x,R)$. Note that by doing this, we can only make assertions of perturbations $\delta \in B_p(0,R)$ and thus the upper bound in the guarantee is at most $R$. It holds
\[ \sup_{\delta \in B_p(0,R)} \int_0^1 \norm{\nabla f_j(x+t\delta) - \nabla f_c(x+t\delta)}_q \,dt \; \leq \; \max_{y \in B_p(x,R)} \norm{\nabla f_j(y) - \nabla f_c(y)}_q.\]
Thus we get the lower bound for the minimal norm of the change $\delta$ required to change the classifier decision from $c$ to $j$,
\[  \norm{\delta}_p \geq \min\left\{ R , \frac{f_c(x)-f_j(x)}{ \max_{y \in B_p(x,R)} \norm{\nabla f_j(y) - \nabla f_c(y)}_q}\right\}:=\alpha.\]
As we are interested in the worst case, we take the minimum over all $j\neq c$. Finally, the result holds for any fixed $R>0$ so that we can maximize over $R$ which yields the final result. 
\end{proof}
\fi
Note that the bound requires in the denominator a bound on the local Lipschitz constant of all cross terms $f_c-f_j$, which we call local cross-Lipschitz constant in the following. However, we do not require to have a global bound. The problem with a global bound is
that the ideal robust classifier is basically piecewise constant on larger regions with sharp transitions between the classes. However, the global Lipschitz constant would then just be influenced by the sharp transition zones
and would not yield a good bound, whereas the local bound can adapt to regions where the classifier is approximately constant and then yields good guarantees. In \cite{SzeEtal2014,CisEtAl2017} they suggest to study the global 
Lipschitz constant\footnote{The Lipschitz constant $L$ wrt to $p$-norm of a piecewise continuously differentiable function is given as $L=\sup_{x \in \R^d} \norm{\nabla f(x)}_q$. Then it holds, $|f(x)-f(y)| \leq L \norm{x-y}_p$.} of each $f_j$, $j=1,\ldots,K$.
A small global Lipschitz constant  for all $f_j$ implies a good bound as
\begin{align}\label{eq:LipBound}  
\norm{\nabla f_j(y) - \nabla f_c(y)}_q \, \leq \, \norm{\nabla f_j(y)}_q  + \norm{\nabla f_c(y)}_q,
\end{align}
but the converse does not hold. As discussed below it turns out that our local estimates are significantly better than the suggested global estimates which implies also better robustness guarantees. 
In turn we want to emphasize that our bound is tight, that is the bound is attained, for linear classifiers $f_j(x)=\inner{w_j,x}$, $j=1,\ldots,K$. It holds
\[ \norm{\delta}_p \; = \; \minop_{j \neq c} \frac{\inner{w_c-w_j,x}}{\norm{w_c-w_j}_q}.\]
In Section \ref{sec:adv} we refine this result for the case when the input is constrained to $[0,1]^d$. In general, it is possible to integrate  constraints on the input by simply doing the maximum over 
the intersection of $B_p(x,R)$ with the constraint set e.g. $[0,1]^d$ for gray-scale images.

\subsection{Evaluation of the Bound for Kernel Methods}
Next, we discuss how the bound can be evaluated for different classifier models. For simplicity we restrict ourselves to the case $p=2$ (which implies $q=2$) 
and leave the other cases to future work. We consider the class of kernel methods, that is
the classifier has the form
\[ f_j(x) = \sum_{r=1}^n \alpha_{jr} k(x_r, x),\]
where $(x_r)_{r=1}^n$ are the $n$ training points, $k:\R^d \times \R^d \rightarrow \R$ is a positive definite kernel function and $\alpha \in \R^{K \times n}$ are the trained parameters e.g. of a SVM. 
The goal is to upper bound the term $ \max_{y \in B_2(x,R)} \norm{\nabla f_j(y) - \nabla f_c(y)}_2$ for this classifier model. A simple calculation shows
\begin{align}\label{eq:exp-kernel}
  0\leq \norm{\nabla f_j(y) - \nabla f_c(y)}_2^2  = \sum_{r,s=1}^n (\alpha_{jr}-\alpha_{cr})(\alpha_{js}-\alpha_{cs})\inner{\nabla_y k(x_r,y), \nabla_y k(x_s,y)}
\end{align}

It has been reported that kernel methods with a Gaussian kernel are robust to noise. Thus we specialize now to this class, that is $k(x,y)=e^{-\gamma \norm{x-y}^2_2}$. In this case
\[ \inner{\nabla_y k(x_r,y), \nabla_y k(x_s,y)} = 4\gamma^2 \inner{y-x_r, y-x_s} e^{-\gamma \norm{x_r-y}^2_2} e^{-\gamma \norm{x_s-y}^2_2}.\]
\ifpaper
We will now derive lower and upper bounds on this term uniformly over $B_2(x,R)$ which allows us to derive the guarantee.
\begin{lemma}\label{le:dummy-kernel}
Let $M =  \min\Big\{\frac{\norm{2x-x_r - x_s}_2}{2},R\Big\}$, then 
\begin{align*} 
\maxop_{y \in B_2(x,R)}  \inner{y-x_r, y-x_s}  &= \inner{x-x_r,x-x_s} + R\norm{2x-x_r - x_s}_2 + R^2\\
\minop_{y \in B_2(x,R)}   \inner{y-x_r, y-x_s}  &= \inner{x-x_r,x-x_s} - M \norm{2x-x_r - x_s}_2 + M^2\\
\maxop_{y \in B_2(x,R)} e^{-\gamma \norm{x_r-y}^2_2} e^{-\gamma \norm{x_s-y}^2_2} &= e^{-\gamma \Big( \norm{x-x_r}^2_2 + \norm{x-x_s}^2_2 - 2M \norm{2x-x_r - x_s}_2 +2M^2\Big)}\\
\minop_{y \in B_2(x,R)} e^{-\gamma \norm{x_r-y}^2_2} e^{-\gamma \norm{x_s-y}^2_2} &= e^{-\gamma \Big( \norm{x-x_r}^2_2 + \norm{x-x_s}^2_2 +2 R\norm{2x-x_r - x_s}_2 + 2R^2\Big)}\\
\end{align*}
\end{lemma}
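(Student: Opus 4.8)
The plan is to collapse all four optimizations into a single elementary one-dimensional problem by a translation of coordinates. I would write every point of $B_2(x,R)$ as $y = x + u$ with $\norm{u}_2 \leq R$, and set $a = x - x_r$ and $b = x - x_s$, so that $a + b = 2x - x_r - x_s$. Then $y - x_r = a + u$ and $y - x_s = b + u$, which gives
\[ \inner{y - x_r, y - x_s} = \inner{a,b} + \inner{a+b, u} + \norm{u}_2^2, \]
and, expanding the squared norms, $\norm{x_r - y}_2^2 + \norm{x_s - y}_2^2 = \norm{a}_2^2 + \norm{b}_2^2 + 2\inner{a+b,u} + 2\norm{u}_2^2$. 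Both expressions are governed by the same core quantity $h(u) = \inner{a+b,u} + \norm{u}_2^2$, so it suffices to compute its maximum and minimum over the ball $\norm{u}_2 \leq R$.

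The decisive step is the angular reduction. For $u$ of fixed length $t \in [0,R]$, Cauchy--Schwarz gives $-t\norm{a+b}_2 \leq \inner{a+b,u} \leq t\norm{a+b}_2$, with both extremes attained by aligning $u$ with $\pm(a+b)$, while $\norm{u}_2^2 = t^2$ does not depend on the direction of $u$. Hence maximizing $h$ amounts to maximizing $t\norm{a+b}_2 + t^2$ and minimizing $h$ amounts to minimizing $-t\norm{a+b}_2 + t^2$, each over $t \in [0,R]$. The first function is increasing, so its maximum is at $t = R$, giving $\max_u h = R\norm{a+b}_2 + R^2$. The second is a convex parabola with unconstrained minimizer $t^\ast = \norm{a+b}_2/2$; clipping to the interval yields the minimizer $M = \min\{\norm{a+b}_2/2,\, R\}$ and minimum value $-M\norm{a+b}_2 + M^2$. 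This clipping is the only subtle point, and the merit of defining $M$ this way is that the single formula $-M\norm{a+b}_2 + M^2$ is correct whether the critical point lies inside $[0,R]$ or is cut off at the boundary $t=R$.

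Combining these, I would read off the first two claims by adding back $\inner{a,b} = \inner{x-x_r, x-x_s}$. For the exponential claims I would use that $t \mapsto e^{-\gamma t}$ is strictly decreasing, so the product $e^{-\gamma\norm{x_r-y}_2^2}e^{-\gamma\norm{x_s-y}_2^2} = e^{-\gamma(\norm{x_r-y}_2^2 + \norm{x_s-y}_2^2)}$ is maximized exactly where the exponent sum is minimized and minimized where it is maximized. Since that sum equals $\norm{a}_2^2 + \norm{b}_2^2 + 2h(u)$, the already-computed $\min_u h$ and $\max_u h$ give its minimum $\norm{x-x_r}_2^2 + \norm{x-x_s}_2^2 + 2(-M\norm{a+b}_2 + M^2)$ and its maximum $\norm{x-x_r}_2^2 + \norm{x-x_s}_2^2 + 2(R\norm{a+b}_2 + R^2)$; substituting $\norm{a+b}_2 = \norm{2x-x_r-x_s}_2$ and exponentiating reproduces the third and fourth displayed formulas. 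I do not expect a genuine obstacle: the whole lemma is a packaged quadratic-over-a-ball computation, and the only things to watch are the case distinction defining $M$ and the fact that all four results flow from the one optimization of $h$.
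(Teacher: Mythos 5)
Your proposal is correct and follows essentially the same route as the paper: translate to $y=x+u$ over the centered ball, reduce the linear term by Cauchy--Schwarz to a one-dimensional problem in $t=\norm{u}_2$, solve the clipped parabola to obtain $M$, and transfer the results to the exponential terms via the shared exponent and the monotonicity of $t\mapsto e^{-\gamma t}$. Your explicit unification of all four claims through the single quantity $h(u)=\inner{a+b,u}+\norm{u}_2^2$ is a slightly cleaner packaging of what the paper states as ``follows analogously,'' but it is not a different argument.
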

\begin{proof}
For the first part we use
\begin{align*}
\max_{y \in B_2(x,R)}  \inner{y-x_r, y-x_s} &= \max_{h \in B_2(0,R)}  \inner{x-x_r + h, x-x_s + h}\\ 
                                                              &= \inner{x-x_r, x-x_s} +  \max_{h \in B_2(0,R)}  \inner{h,2x - x_r -x_s} + \norm{h}_2^2\\
                                                              &= \inner{x-x_r,x-x_s} + R\norm{2x-x_r - x_s}_2 + R^2,
\end{align*}
where the last equality follows by Cauchy-Schwarz and noting that equality is attained as we maximize over the Euclidean unit ball. For the second part we consider
\begin{align*}
\min_{y \in B_2(x,R)}   \inner{y-x_r, y-x_s}  =& \min_{h \in B_2(0,R)}  \inner{x-x_r + h, x-x_s + h} \\
                                                              =& \inner{x-x_r, x-x_s} +  \min_{h \in B_2(0,R)}  \inner{h,2x - x_r -x_s} + \norm{h}_2^2\\
                                                               =&\inner{x-x_r, x-x_s} +  \min_{0\leq \alpha \leq R}  \, -\alpha \norm{2x - x_r -x_s}_2 + \alpha^2\\
                                                             =& \inner{x-x_r,x-x_s} - \min\Big\{\frac{\norm{2x-x_r - x_s}_2}{2},R\Big\} \norm{2x-x_r - x_s}_2  \\
                                                               &+ \Big(\min\Big\{\frac{\norm{2x-x_r - x_s}_2}{2},R\}\Big)^2,
\end{align*}
where in the second step we have separated direction and norm of the vector, optimization over the direction yields with Cauchy-Schwarz the result. Finally, the constrained convex one-dimensional
optimization problem can be  solved explicitly as $\alpha =\min\{\frac{\norm{2x-x_r - x_s}_2}{2},R\}$.
The proof of the other results follows analogously noting that 
\[ e^{-\gamma \norm{x+h-x_r}^2_2} e^{-\gamma \norm{x+h - x_s}^2_2} = e^{-\gamma \Big( \norm{x-x_r}^2_2 + \norm{x-x_s}^2_2 + 2\inner{h,2x-x_r-x_s} + 2\norm{h}^2_2\Big)}.\]
\end{proof}
Using this lemma it is easy to derive the final result.
\else
We derive the following bound
\fi
\begin{proposition}\label{pro:kernel-bound}
Let $\beta_r=\alpha_{jr}-\alpha_{cr}$, $r=1,\ldots,n$ and define  $M =  \min\Big\{\frac{\norm{2x-x_r - x_s}_2}{2},R\Big\}$ and $S=\norm{2x-x_r - x_s}_2$. Then
\begin{align*}
      &\max_{y \in B_2(x,R)} \norm{\nabla f_j(y) - \nabla f_c(y)}_2 \leq  2\gamma \\
 \Bigg( \sum_{\stackrel{r,s=1}{\beta_r \beta_s\geq 0}}^n \beta_r \beta_s & \Big[ \max\{\inner{x-x_r,x-x_s} + R S + R^2,\,0\} e^{-\gamma \big( \norm{x-x_r}^2_2 + \norm{x-x_s}^2_2 - 2M S +2M^2\big)}\\
           &+ \min\{\inner{x-x_r,x-x_s} + R S+ R^2,\,0 \}  e^{-\gamma \big( \norm{x-x_r}^2_2 + \norm{x-x_s}^2_2 +2 R S  + 2R^2\big)}\Big]\\
    + \sum_{\stackrel{r,s=1}{\beta_r \beta_s< 0}}^n \beta_r \beta_s  &\Big[ \max\{\inner{x-x_r,x-x_s} - M S + M^2,\, 0\} e^{-\gamma \big( \norm{x-x_r}^2_2 + \norm{x-x_s}^2_2 +2 R S + 2R^2\big)}\\
    &+\min\{\inner{x-x_r,x-x_s} - M S + M^2,\,0\}e^{-\gamma \big( \norm{x-x_r}^2_2 + \norm{x-x_s}^2_2 - 2MS  +2M^2\big)}\Big] \Bigg)^\frac{1}{2}
\end{align*}
\end{proposition}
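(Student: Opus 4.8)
The plan is to start from the exact expression \eqref{eq:exp-kernel} specialized to the Gaussian kernel. With $\beta_r = \alpha_{jr}-\alpha_{cr}$ and the formula for $\inner{\nabla_y k(x_r,y),\nabla_y k(x_s,y)}$ already derived above, this reads
\[ \norm{\nabla f_j(y) - \nabla f_c(y)}_2^2 = 4\gamma^2 \sum_{r,s=1}^n \beta_r \beta_s\, \inner{y-x_r,y-x_s}\, e^{-\gamma\norm{x_r-y}_2^2}\, e^{-\gamma\norm{x_s-y}_2^2}. \]
Writing $A_{rs}(y) := \inner{y-x_r,y-x_s}$ and $E_{rs}(y) := e^{-\gamma\norm{x_r-y}_2^2} e^{-\gamma\norm{x_s-y}_2^2} > 0$, I would first pull the maximum inside the sum via $\maxop_{y\in B_2(x,R)} \sum_{r,s}\beta_r\beta_s A_{rs}E_{rs} \leq \sum_{r,s}\maxop_{y\in B_2(x,R)} \beta_r\beta_s A_{rs}E_{rs}$, so that it suffices to bound each summand separately and take a square root at the end.

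Next I would split the double sum by the sign of $\beta_r\beta_s$. For $\beta_r\beta_s\geq 0$ the summand is maximized by maximizing the product $A_{rs}E_{rs}$, whereas for $\beta_r\beta_s<0$ it is maximized by minimizing $A_{rs}E_{rs}$. The core task is therefore to produce clean upper and lower bounds on $A_{rs}(y)E_{rs}(y)$ over $B_2(x,R)$ for each pair $(r,s)$. I cannot simply multiply the extremal values from Lemma \ref{le:dummy-kernel}, because $A_{rs}$ and $E_{rs}$ do not attain their extrema at the same $y$; decoupling these two factors is the main obstacle.

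The observation that resolves it is that $E_{rs}(y)>0$ everywhere, so the sign of $A_{rs}(y)$ alone controls the sign of the product, and a short case analysis decouples the factors. For the upper bound (with $S=\norm{2x-x_r-x_s}_2$): if $\maxop_y A_{rs}=\inner{x-x_r,x-x_s}+RS+R^2\geq 0$, then on $\{A_{rs}\geq 0\}$ one has $A_{rs}E_{rs}\leq (\maxop_y A_{rs})(\maxop_y E_{rs})$, while on $\{A_{rs}<0\}$ the product is negative and hence still below that value; if instead $\maxop_y A_{rs}<0$, then $A_{rs}(y)<0$ throughout and the product is largest when $E_{rs}$ is smallest, giving $(\maxop_y A_{rs})(\minop_y E_{rs})$. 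Both regimes are captured uniformly by $\max\{\maxop_y A_{rs},0\}\,\maxop_y E_{rs}+\min\{\maxop_y A_{rs},0\}\,\minop_y E_{rs}$, which is exactly the first bracket of the claim. The symmetric argument for the lower bound, using $\minop_y A_{rs}=\inner{x-x_r,x-x_s}-MS+M^2$, yields $\max\{\minop_y A_{rs},0\}\,\minop_y E_{rs}+\min\{\minop_y A_{rs},0\}\,\maxop_y E_{rs}$, the second bracket.

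Finally I would assemble the pieces: substitute the four extremal values from Lemma \ref{le:dummy-kernel} into these two bracketed expressions, then multiply each by the corresponding $\beta_r\beta_s$. For $\beta_r\beta_s\geq 0$ this directly gives an upper bound on the summand; for $\beta_r\beta_s<0$ the factor is negative, so multiplying the \emph{lower} bound on $A_{rs}E_{rs}$ by it flips the inequality into the desired \emph{upper} bound on the summand. Summing over the two sign classes, factoring out $4\gamma^2$, and taking the square root produces the stated inequality. The only points requiring care are the consistent handling of the $\max\{\cdot,0\}/\min\{\cdot,0\}$ truncation across both sign cases, and tracking which exponential ($\maxop_y E_{rs}$, carrying $-2MS+2M^2$, versus $\minop_y E_{rs}$, carrying $+2RS+2R^2$) pairs with which truncated inner-product term.
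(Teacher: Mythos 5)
Your proposal is correct and follows essentially the same route as the paper: bound each summand of \eqref{eq:exp-kernel} separately, split by the sign of $\beta_r\beta_s$, and decouple the inner-product factor from the (positive) exponential factor via a sign case analysis so that the extremal values from Lemma \ref{le:dummy-kernel} can be combined; your $\max\{\cdot,0\}/\min\{\cdot,0\}$ formulation is exactly the uniform way of writing the paper's ``$ac\leq bd$'' case distinctions. You actually spell out the key subtlety (that the two factors do not attain their extrema at the same $y$) more explicitly than the paper's terse proof does.
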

\ifpaper
\begin{proof}
We bound each term in the sum in Equation \ref{eq:exp-kernel} separately using that $a c \leq b d$ if $b,c \geq 0$, $a\leq b$ and $c\leq d$ or $b \leq 0$, $d\geq 0$, $a\leq b$ and $c\geq d$, where $c,d$ correspond to the exponential
terms and $a,b$ to the upper bounds of the inner product. Similarly, $ab \geq cd$ if $b,c\geq 0$, $a\geq b$ and $c\geq d$ or $a\leq 0$, $d\geq 0$, $a\geq b$ and $c \leq d$. The individual upper and lower bounds are taken from
Lemma \ref{le:dummy-kernel}.
\end{proof}
\fi
While the bound leads to non-trivial estimates as seen in Section \ref{sec:exp}, the bound is not very tight. The reason is that the sum is bounded elementwise, which is quite pessimistic. We think that better bounds are possible but have to postpone this to future work.

\subsection{Evaluation of the Bound for Neural Networks}
We derive the bound for a neural network with one hidden layer. In principle, the technique we apply below can be used for arbitrary layers but the computational complexity increases rapidly. The problem is that in the directed
network topology one has to consider almost each path separately to derive the bound. Let $U$ be the number of hidden units and $w,u$ are the weight matrices of the output resp. input layer. We assume that the activation function
$\sigma$ is continuously differentiable and assume that the derivative $\sigma'$ is monotonically increasing. Our prototype activation function we have in mind and which we use later on in the experiment is the differentiable 
approximation, $\sigma_{\alpha}(x)=\frac{1}{\alpha}\log(1+e^{\alpha x})$ of the ReLU activation function $\sigma_{\mathrm{ReLU}}(x)=\max\{0,x\}$. Note that $\lim_{\alpha \rightarrow \infty} \sigma_\alpha(x)=\sigma_{\mathrm{ReLU}}(x)$
and $\sigma'_\alpha(x)=\frac{1}{1+e^{-\alpha x}}$. The output of the neural network can be written as
\[ f_j(x) = \sum_{r=1}^U w_{jr}\, \sigma\Big(\sum_{s=1}^d u_{rs} x_s\Big), \quad j=1,\ldots,K,\]
where for simplicity we omit any bias terms, but it is straightforward to consider also models with bias.
A direct computation shows that 
\begin{align}\label{eq:dummyNN}
  \norm{\nabla f_j(y) - \nabla f_c(y)}_2^2  = \sum_{r,m=1}^U (w_{jr}-w_{cr})(w_{jm}-w_{cm})\sigma'(\inner{u_r,y})\sigma'(\inner{u_m,y})\sum_{l=1}^d u_{rl}u_{ml},
\end{align}
where $u_r \in \R^{d}$ is the $r$-th row of the weight matrix $u \in \R^{U \times d}$.
The resulting bound is given in the following proposition.
\begin{proposition}\label{pro:boundNN}
Let $\sigma$ be a continuously differentiable activation function with $\sigma'$ monotonically increasing. Define 
$\beta_{rm} =  (w_{jr}-w_{cr})(w_{jm}-w_{cm})\sum_{l=1}^d u_{rl}u_{ml}$.
Then
\begin{align*}
\max_{y \in B_2(x,R)} \norm{\nabla f_j(y) - \nabla f_c(y)}_2\\
\leq \Big[ \sum_{r,m=1}^U  \max\{\beta_{rm},0\}&\sigma'\big(\inner{u_r,x} + R \norm{u_r}_2\big) \sigma'\big(\inner{u_m,x} + R \norm{u_m}_2\big) \\+  \min\{\beta_{rm},0\}&\sigma'\big(\inner{u_r,x}-R \norm{u_r}_2\big) \sigma'\big(\inner{u_m,x} - R \norm{u_m}_2\big) \Big]^\frac{1}{2}
\end{align*}
\end{proposition}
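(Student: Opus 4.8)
The plan is to bound the quantity term-by-term exactly as in the proof of Proposition \ref{pro:kernel-bound}, starting from the closed form \eqref{eq:dummyNN} for $\norm{\nabla f_j(y) - \nabla f_c(y)}_2^2$. Since this quantity is nonnegative and the square root is monotone, it suffices to upper bound
\[ \maxop_{y \in B_2(x,R)} \sum_{r,m=1}^U \beta_{rm}\, \sigma'(\inner{u_r,y})\,\sigma'(\inner{u_m,y}) \]
and then take its square root. First I would replace the maximum of the sum by the sum of the maxima of the individual summands, which is a valid upper bound; this decouples the problem into $U^2$ scalar optimization problems indexed by $(r,m)$.

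For each summand I would first determine the range of the inner products as $y$ ranges over $B_2(x,R)$. Writing $y = x + h$ with $\norm{h}_2 \leq R$ and applying Cauchy--Schwarz gives $\inner{u_r,y} = \inner{u_r,x} + \inner{u_r,h} \in [\inner{u_r,x} - R\norm{u_r}_2,\ \inner{u_r,x} + R\norm{u_r}_2]$, with both endpoints attained. Because $\sigma'$ is monotonically increasing, the extreme values of $\sigma'(\inner{u_r,y})$ over $B_2(x,R)$ are therefore attained at these endpoints: the maximum at $\sigma'(\inner{u_r,x}+R\norm{u_r}_2)$ and the minimum at $\sigma'(\inner{u_r,x}-R\norm{u_r}_2)$. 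I would then split on the sign of $\beta_{rm}$: when $\beta_{rm}\geq 0$ the summand is maximized by maximizing the product $\sigma'(\inner{u_r,y})\sigma'(\inner{u_m,y})$, whereas when $\beta_{rm}<0$ it is maximized by minimizing that product. In either case the two factors share the same $y$, so I would use the standard bound $\maxop_y g(y)h(y) \leq (\maxop_y g)(\maxop_y h)$ (and the analogous lower bound for the minimum), which is valid precisely because $\sigma'\geq 0$ --- a nonnegativity that follows from $\sigma$ being monotone and holds for the softplus prototype, where $\sigma'_\alpha(x) = (1+e^{-\alpha x})^{-1}>0$.

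Collecting the two cases and encoding the sign split through $\max\{\beta_{rm},0\}$ and $\min\{\beta_{rm},0\}$ yields exactly the bracketed expression in the statement, with the $+R\norm{\cdot}_2$ (right-endpoint) values attached to the nonnegative part of $\beta_{rm}$ and the $-R\norm{\cdot}_2$ (left-endpoint) values attached to the negative part; taking the square root then finishes the proof. The one place that requires care, and which I expect to be the main obstacle, is the joint maximization of the product $\sigma'(\inner{u_r,y})\sigma'(\inner{u_m,y})$ over a \emph{single} $y$: one cannot optimize the two factors independently unless both are nonnegative, so the hypothesis $\sigma'\geq 0$ is essential, not merely that $\sigma'$ is increasing. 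This decoupling is also the source of looseness in the bound --- as in the kernel case it is an elementwise estimate rather than a joint one --- so the resulting guarantee is an upper bound that is generally not attained.
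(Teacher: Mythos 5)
Your proposal is correct and follows essentially the same route as the paper's proof: Cauchy--Schwarz plus monotonicity of $\sigma'$ to evaluate the extrema of $\sigma'(\inner{u_r,y})$ at the endpoints $\inner{u_r,x}\pm R\norm{u_r}_2$, followed by element-wise bounding of the sum in \eqref{eq:dummyNN} according to the sign of $\beta_{rm}$. Your one addition --- making explicit that the decoupling of the product $\sigma'(\inner{u_r,y})\sigma'(\inner{u_m,y})$ over a single $y$ requires $\sigma'\geq 0$, not just monotonicity --- is a point the paper leaves implicit, and it is a correct and worthwhile observation.
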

\ifpaper
\begin{proof}
The proof is based on the fact that due the monotonicity of $\sigma'$ and with Cauchy-Schwarz,
\[ \max_{y \in B_2(x,R)} \sigma'(\inner{u_r,y}) = \max_{h \in B_2(0,R)} \sigma'(\inner{u_r,x}+\inner{u_r,h}) = \sigma'(\inner{u_r,x} + R \norm{u_r}_2).\]
Similarly, one gets
\[ \min_{y \in B_2(x,R)} \sigma'(\inner{u_r,y}) = \min_{h \in B_2(0,R)} \sigma'(\inner{u_r,x}+\inner{u_r,h}) = \sigma'(\inner{u_r,x} - R \norm{u_r}_2).\]
The rest of the result follows by element-wise bounding the terms in the sum in Equation \ref{eq:dummyNN}.
\end{proof}
\fi
As discussed above the global Lipschitz bounds of the individual classifier outputs, see \eqref{eq:LipBound}, lead to an upper bound of our desired local 
cross-Lipschitz constant. In the experiments below our local bounds on the Lipschitz constant are up to 8 times smaller, than what one would achieve via the global Lipschitz bounds of \cite{SzeEtal2014}. This shows that their global approach is much too rough to get meaningful robustness
guarantees.


\section{The Cross-Lipschitz Regularization Functional}\label{sec:reg}
We have seen in Section \ref{sec:robg} that if 
\begin{align}\label{eq:lip} 
\maxop_{j \neq c} \maxop_{y \in B_p(x,R)} \norm{\nabla f_c(y) - \nabla f_j(y)}_q,
\end{align}
is small and $f_c(x)-f_j(x)$ is large, then we get good robustness guarantees. The latter property is typically
already optimized in a multi-class loss function. We consider for all methods in this paper the cross-entropy loss
so that the differences in the results only come from the chosen function class (kernel methods versus neural networks)
and the chosen regularization functional. The cross-entropy loss $L:\{1,\ldots,K\} \times \R^K \rightarrow \R$  is given as
\[ L(y,f(x)) = - \log\Big(\frac{e^{f_y(x)}}{\sum_{k=1}^K e^{f_k(x)}}\Big) = \log\Big(1 + \sum_{k\neq y}^K e^{f_k(x)-f_y(x)}\Big).\]
In the latter formulation it becomes apparent that the loss tries to make the difference $f_y(x)-f_k(x)$ as large as possible for all $k=1,\ldots,K$.

As our goal are good robustness guarantees it is natural to consider a proxy of the quantity in \eqref{eq:lip} for regularization.
We define the \textbf{Cross-Lipschitz Regularization} functional as 
\begin{align}\label{eq:crosslip}
\Omega(f) = \frac{1}{nK^2}\sum_{i=1}^n \sum_{l,m=1}^K \norm{\nabla f_l(x_i) - \nabla f_m(x_i)}_2^2,
\end{align}
where the $(x_i)_{i=1}^n$ are the training points. The goal of this regularization functional is to make the \emph{differences} of the classifier functions
at the data points as constant as possible. In total by minimizing
\begin{align}\label{eq:L-Alg}
 \frac{1}{n}\sum_{i=1}^n L\big(y_i,f(x_i)\big) + \lambda \Omega(f),
\end{align}
over some function class we thus try to maximize $f_c(x_i)-f_j(x_i)$ and at the same time keep $\norm{\nabla f_l(x_i) - \nabla f_m(x_i)}_2^2$ small uniformly over all
classes. This automatically enforces robustness of the resulting classifier. It is important to note that this regularization functional is coherent with the loss
as it shares the same degrees of freedom, that is adding the same function $g$ to all outputs: $f'_j(x)=f_j(x)+g(x)$ leaves loss and regularization functional invariant. This is the main difference to \cite{CisEtAl2017}, where they enforce the global Lipschitz constant to be smaller than one.

\subsection{Cross-Lipschitz Regularization in Kernel Methods}
In kernel methods one uses typically the regularization functional induced by the kernel which is given as the squared norm of the function, $f(x)=\sum_{i=1}^n \alpha_i k(x_i,x)$, in the corresponding reproducing
kernel Hilbert space $H_k$, $\norm{f}^2_{H_k} = \sum_{i,j=1}^n \alpha_i \alpha_j k(x_i,x_j).$
In particular, for translation invariant kernels one can make directly a connection to penalization of derivatives of the function $f$ via the Fourier transform, see 
\cite{SchSmo2002}. However, penalizing higher-order derivatives is irrelevant for achieving robustness. Given the kernel expansion of $f$,
one can write the Cross-Lipschitz regularization function as
\begin{align*}
 \Omega(f) = \frac{1}{nK^2}\sum_{i,j=1}^n \sum_{l,m=1}^K \sum_{r,s=1}^n (\alpha_{lr}-\alpha_{mr})(\alpha_{ls}-\alpha_{ms})\inner{\nabla_y k(x_r,x_i), \nabla_y k(x_s,x_i)}
\end{align*}
$\Omega$ is convex in $\alpha \in \R^{K \times n}$ as $k'(x_r,x_s)=\inner{\nabla_y k(x_r,x_i), \nabla_y k(x_s,x_i)}$ is a positive definite kernel for any $x_i$ and with the convex cross-entropy loss the learning problem in \eqref{eq:L-Alg} is convex. 

\subsection{Cross-Lipschitz Regularization  in Neural Networks}
The standard way to regularize neural networks is \emph{weight decay}; that is, the squared Euclidean norm of all weights is added to the objective. More recently dropout \cite{SriEtAl2014}, which can be seen as a form of stochastic regularization, has been introduced. Dropout can also be interpreted
as a form of regularization of the weights \cite{SriEtAl2014,HelLon2015}. It is interesting to note that classical regularization functionals which penalize derivatives of
the resulting classifier function are not typically used in deep learning, but see \cite{DruCun1992,SchHoc1995}. As noted above we restrict ourselves to one hidden layer neural networks to simplify notation, that is, $f_j(x) = \sum_{r=1}^U w_{jr}\, \sigma\big(\sum_{s=1}^d u_{rs} x_s\big), \quad j=1,\ldots,K.$
Then we can write the Cross-Lipschitz regularization as
\begin{align*}
 \Omega(f) 
           &=  \frac{2}{nK^2} \sum_{r,s=1}^U \Big(\sum_{l=1}^K w_{lr}w_{ls} - \sum_{l=1}^K w_{lr} \sum_{m=1}^K w_{ms}\Big) \sum_{i,j=1}^n \sigma'(\inner{u_r,x_i})\sigma'(\inner{u_s,x_i}) \sum_{l=1}^d u_{rl}u_{sl}
\end{align*}
which leads to an expression which can be fast evaluated using vectorization. Obviously, one can also implement the Cross-Lipschitz Regularization
also for all standard deep networks.

\section{Box Constrained Adversarial Sample Generation}\label{sec:adv}
The main emphasis of this paper are robustness guarantees without resorting to particular ways how to generate adversarial samples. On the other hand while Theorem \ref{th:RobG}
gives lower bounds on the required input transformation, efficient ways to approximately solve the adversarial sample generation in \eqref{eq:advopt} are helpful to get upper bounds
on the required change. Upper bounds allow us to check how tight our derived lower bounds are. As all of our experiments will be concerned with images, it is reasonable that our adversarial samples are also images. However, up to our knowledge, the current main techniques to generate adversarial samples \cite{GooShlSze2015,HuaEtAl2016,MooFawFro2016} integrate box constraints by clipping the results to $[0,1]^d$. We provide in the following fast algorithms to generate adversarial samples
which lie in $[0,1]^d$. The strategy is similar to \cite{HuaEtAl2016}, where they use a linear approximation of the classifier to derive
adversarial samples with respect to different norms. Formally, 
\[ f_j(x+\delta) \approx f_j(x) + \inner{\nabla f_j(x),\delta}, \quad j=1,\ldots,K.\]
Assuming that the linear approximation holds, the optimization problem \eqref{eq:advopt} integrating box constraints for changing class $c$ into $j$ becomes
\begin{align}\label{opt:lp-box} 
   \min_{\delta \in \R^d}\; & \norm{\delta}_p\\
   \textrm{ sbj. to: }      & f_j(x)-f_c(x) \geq \inner{\nabla f_c(x)-\nabla f_j(x),\delta} \nonumber\\
                            & 0\leq x_j + \delta_j \leq 1 \nonumber
\end{align}
In order to get the minimal adversarial sample we have to solve this for all $j\neq c$ and take the one with minimal $\norm{\delta}_p$. This yields the
minimal adversarial change for linear classiifers.
Note that \eqref{opt:lp-box} is a convex optimization problem, which can be reduced to a one-parameter problem in the dual. This allows to derive the following result (proofs and algorithms are in the supplement).
\begin{proposition}\label{pro:generate}
Let $p \in \{1,2,\infty\}$, then  \eqref{opt:lp-box} can be solved in $O(d \log d)$ time.
\end{proposition}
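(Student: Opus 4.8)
The plan is to exploit that \eqref{opt:lp-box} is a convex program with a single nontrivial linear constraint, and to reduce it to maximizing a concave function of one scalar dual variable. First I would simplify notation by writing $w = \nabla f_c(x) - \nabla f_j(x)$ and $b = f_j(x) - f_c(x)$, so that the constraint reads $\inner{w,\delta}\le b$, and rewriting the box constraints as $\delta_i \in [a_i,c_i]$ with $a_i = -x_i \le 0 \le 1-x_i = c_i$. Since $x$ is classified as $c$ we have $b<0$, hence $\delta=0$ violates the linear constraint and it must be active at the optimum. Strong duality holds (the box has nonempty interior, providing a Slater point whenever the problem is feasible), so it suffices to introduce one multiplier $\lambda \ge 0$ for $\inner{w,\delta}\le b$ and to maximize $g(\lambda) = -\lambda b + \min_{a\le\delta\le c}\bigl[\,\norm{\delta}_p + \lambda\inner{w,\delta}\,\bigr]$ over $\lambda\ge 0$.

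The key observation is that, for fixed $\lambda$, the inner minimization over the box decouples across coordinates and admits a closed form. For $p=1$ the objective is $\sum_i(\abs{\delta_i}+\lambda w_i\delta_i)$, so each coordinate is a one-dimensional piecewise-linear problem whose minimizer equals $0$, $a_i$, or $c_i$ according to whether $\lambda\abs{w_i}$ lies below or above $1$. For $p=2$ I would instead minimize $\tfrac12\norm{\delta}_2^2$, which has the same minimizer over the convex feasible set, yielding the separable problem $\sum_i(\tfrac12\delta_i^2+\lambda w_i\delta_i)$ with minimizer $\delta_i^*(\lambda)=\mathrm{clip}(-\lambda w_i,a_i,c_i)$. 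The case $p=\infty$ is handled slightly differently since $\norm{\cdot}_\infty$ is not separable: here I would use the primal epigraph form, fixing $t=\norm{\delta}_\infty$ and noting that feasibility in $t$ reduces to testing whether $\min_\delta \inner{w,\delta}$ over the tightened box $\delta_i\in[\max(a_i,-t),\min(c_i,t)]$ is at most $b$, a monotone piecewise-linear function of $t$. In every case a single coordinate costs $O(1)$, so the relevant one-dimensional function is evaluable in $O(d)$.

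By complementary slackness the optimal $\lambda$ is characterized by the scalar equation $h(\lambda):=\inner{w,\delta^*(\lambda)}=b$ (resp. $m(t)=b$ for $p=\infty$), where $h$ is monotonically non-increasing in $\lambda$ --- piecewise constant for $p=1$, piecewise linear for $p=2$ --- with at most $O(d)$ breakpoints, located at $\lambda=1/\abs{w_i}$ for $p=1$, at $\lambda=-a_i/w_i,-c_i/w_i$ for $p=2$, and at $t=\abs{a_i},c_i$ for $p=\infty$. The algorithm then sorts these $O(d)$ breakpoints in $O(d\log d)$ time, locates by binary search the single interval on which the monotone function crosses the level $b$, and solves the resulting linear (or trivial) equation on that piece in closed form to obtain $\lambda^*$ (resp. $t^*$); back-substitution gives $\delta^*$ in $O(d)$. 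The total cost is dominated by the sort, i.e. $O(d\log d)$. A feasibility test comes for free: if the limiting value of $h$, namely the minimum of $\inner{w,\delta}$ over the whole box, still exceeds $b$, then no adversarial change for class $j$ exists under the linearization.

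The main obstacle I anticipate is careful bookkeeping at the breakpoints rather than any conceptual difficulty. For $p=1$ the function $h$ is only piecewise constant, so it can jump across the target level $b$ exactly at a breakpoint $\lambda=1/\abs{w_i}$; there the inner minimizer is set-valued in coordinate $i$, and one must choose $\delta_i$ within its allowed interval so that $\inner{w,\delta}=b$ holds exactly, which recovers the primal optimum. One must also treat coordinates with $w_i=0$ (which contribute no breakpoint and are set to $0$), verify that the constructed $\delta^*$ attains the dual value so that strong duality certifies optimality, and confirm that minimizing $\tfrac12\norm{\delta}_2^2$ in place of $\norm{\delta}_2$ leaves the minimizer unchanged. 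The remaining steps are the routine coordinatewise case analyses sketched above.
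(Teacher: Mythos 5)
Your proposal is correct and follows essentially the same route as the paper: dualize the single linear constraint to get a one-dimensional monotone equation in $\lambda$ (with the coordinatewise clipping/saturation characterization of $\delta^*(\lambda)$ for $p=2$ and $p=1$), use the epigraph variable $t$ and the monotone feasibility test for $p=\infty$, and pay $O(d\log d)$ for sorting the breakpoints plus a linear scan. The only presentational difference is that you keep the box constraints implicit in a partial dual for all three norms, whereas the paper works out full KKT conditions for $p=2$ and an explicit LP dual for $p=1$; the resulting characterizations and algorithms coincide.
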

\ifpaper
We start with problem for $p=2$ which is given as:
\begin{align}\label{opt:l2-box} 
   \min_{\delta \in \R^d}\; & \norm{\delta}_2\\
   \textrm{ sbj. to: }       & f_j(x)-f_c(x) \geq \inner{\nabla f_c(x)-\nabla f_j(x),\delta} \nonumber\\
                            & 0\leq x_j + \delta_j \leq 1 \nonumber
\end{align}
\begin{lemma}\label{le:proj-l2}
Let $m=\argmax_j f_j(x)$ and define $v=\nabla f_m(x)-\nabla f_j(x)$ and $0>c=f_j(x)-f_m(x)$. If a solution of problem \eqref{opt:l2-box} exists, then it is given as
\[ \delta_r = \begin{cases}    -\lambda v_r, & \textrm{ if } -x_r \leq -\lambda v_r \leq 1-x_r,\\
                               1-x_r ,              & \textrm{ if }  -\lambda v_r \geq  1-x_r,\\
                               -x_r,                & \textrm{ if } -x_r \geq -\lambda v_r.\end{cases}\]
The optimal $\lambda\geq 0$ can be obtained by solving 
\[ c=\inner{v,\delta} = -\lambda \sum_{-x_r \leq -\lambda v_r \leq 1-x_r} v_r^2 + \sum_{-\lambda v_r > 1-x_r} v_r (1-x_r) - \sum_{-\lambda v_r < -x_r} v_r x_r.\]
If Problem \ref{opt:l2-box} is infeasible, then this equation has no solution for $\lambda \geq 0$. In both the feasible and infeasible case the solution can
be found in $O(d \log d)$.
The algorithm is given in Algorithm \ref{alg:L2}.
\end{lemma}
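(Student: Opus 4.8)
The plan is to treat \eqref{opt:l2-box} as a convex quadratic program with a single linear inequality constraint and coordinate-wise box constraints, and to solve it through its KKT system. First I would replace the objective $\norm{\delta}_2$ by the equivalent smooth, strictly convex objective $\frac{1}{2}\norm{\delta}_2^2$. Introducing a multiplier $\lambda \ge 0$ for the constraint $\inner{v,\delta}\le c$ and multipliers $\mu_r,\nu_r \ge 0$ for the box constraints $-x_r \le \delta_r \le 1-x_r$, the stationarity condition reads $\delta_r + \lambda v_r - \mu_r + \nu_r = 0$ for each coordinate. Since the feasible set is polyhedral, KKT is necessary and sufficient whenever the problem is feasible. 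Because $\inner{v,0}=0 > c$, the point $\delta=0$ is infeasible, so the linear constraint cannot be slack at the optimum: complementary slackness forces $\lambda > 0$ and hence $\inner{v,\delta}=c$.

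Next I would solve stationarity together with complementary slackness coordinate by coordinate. For fixed $\lambda$, a short case distinction on whether the lower, the upper, or neither box constraint is active shows that $\mu_r,\nu_r\ge 0$ can be chosen consistently precisely when $\delta_r$ equals the projection of $-\lambda v_r$ onto the interval $[-x_r,\,1-x_r]$; this is exactly the three-case formula in the statement. Substituting this $\delta_r(\lambda)$ into the active constraint $\inner{v,\delta}=c$ then yields the scalar equation for $\lambda$ given in the lemma, with the three sums corresponding to the three regions of the projection.

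It then remains to pin down $\lambda \ge 0$. I would set $g(\lambda):=\inner{v,\delta(\lambda)}$ and observe that each term $v_r \delta_r(\lambda)$ is continuous and nonincreasing in $\lambda$: on the unclipped region it equals the monotone map $-\lambda v_r^2$, and clipping only flattens it to a constant, preserving monotonicity. Hence $g$ is continuous, piecewise linear, and nonincreasing, with $g(0)=0$ because $0\in[-x_r,1-x_r]$. Consequently $g(\lambda)=c<0$ admits a root $\lambda\ge 0$ if and only if $\lim_{\lambda\to\infty} g(\lambda)\le c$, and this limiting value is precisely the constraint value attained when every coordinate is clipped to the box endpoint determined by the sign of $v_r$. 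When this minimal value exceeds $c$ the scalar equation has no solution and the problem is infeasible, which is exactly the feasibility dichotomy asserted; strict convexity guarantees the resulting minimizer $\delta$ is unique even on flat pieces of $g$.

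Finally, for the running time I would note that the breakpoints of $g$ are the values of $\lambda$ at which some coordinate enters or leaves its box, namely the ratios $x_r/v_r$ and $(x_r-1)/v_r$, of which there are at most $2d$. Sorting these breakpoints costs $O(d\log d)$, after which a single sweep---maintaining the running sums in the scalar equation incrementally as coordinates switch regions---locates the interval on which $g$ crosses $c$ and then solves the remaining linear equation in $O(1)$; the sorting step dominates, giving the claimed $O(d\log d)$ bound. I expect the main obstacle to be the bookkeeping in this sweep, namely correctly tracking which coordinates are clipped and verifying that the incrementally updated sums reproduce $g$ exactly, together with a clean treatment of the infeasibility test; the KKT derivation of the projection formula itself is routine.
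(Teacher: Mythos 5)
Your proposal is correct and follows essentially the same route as the paper: form the Lagrangian of the $\frac{1}{2}\norm{\delta}_2^2$ problem, read off from the KKT conditions that $\delta_r$ is the projection of $-\lambda v_r$ onto $[-x_r,1-x_r]$, use monotonicity of $\lambda\mapsto\inner{v,\delta(\lambda)}$ to reduce to a scalar root-finding problem, and resolve it by sorting the breakpoints and doing a linear sweep. Your explicit feasibility dichotomy via $\lim_{\lambda\to\infty}\inner{v,\delta(\lambda)}$ and the uniqueness remark are small elaborations the paper leaves implicit, not a different argument.
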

\begin{proof}
The Lagrangian is given by 
\[ L(\delta,\lambda,\alpha,\beta) = \frac{1}{2}\norm{\delta}^2_2 + \lambda  \big(\inner{v,\delta}-c\big) + \inner{\alpha,x+\delta-\ones} - \inner{\beta,x+\delta}.\]
The KKT conditions become
\begin{align*}
   \delta + \lambda v + \alpha - \beta &=0\\
   \alpha_r (x_r + \delta_r - 1) &=0, \quad \forall r=1,\ldots,d\\
   \beta_r  (x_r + \delta_r    ) &=0, \quad \forall r=1,\ldots,d\\
   \lambda (\inner{v,\delta}-c) &=0\\
   \alpha_r &\geq 0, \quad \forall r=1,\ldots,d\\
   \beta_r &\geq 0, \quad \forall r=1,\ldots,d\\
   \lambda &\geq 0.
\end{align*}
We deduce that if $\beta_r>0$ then $\alpha_r=0$ which implies 
\[ \delta_r = -x_r = -\lambda v_r + \beta_r \quad \Longrightarrow \quad \beta_r = \max\{0,-x_r+\lambda v_r\}.\]
Similarly, if $\alpha_r>0$ then $\beta_r=0$ which implies
\[ \delta_r =1-x_r = -\lambda v_r - \alpha_r \quad \Longrightarrow \quad \alpha_r = \max\{0, x_r-1-\lambda v_r\}.\]
It follows
\[ \delta_r = \begin{cases} -\lambda v_r & \textrm{ if } -x_r < -\lambda v_r < 1-x_r\\ 1-x_r & \textrm{ if } -\lambda v_r > 1-x_r \\ -x_r & \textrm{ if } -\lambda v_r <-x_r\end{cases}.\]
We can determine $\lambda$ by inspecting $\inner{v,\delta}$ which is given as
\[ \inner{v,\delta} = -\lambda \sum_{-x_r \leq -\lambda v_r \leq 1-x_r} v_r^2 + \sum_{-\lambda v_r > 1-x_r} v_r (1-x_r) - \sum_{-\lambda v_r < -x_r} v_r x_r.\]
Note that $\lambda \geq 0$ and $1-x_r\geq 0$ and thus  $-\lambda v_r > 1-x_r$ implies $v_r<0$, thus $v_r (1-x_r)\leq 0$ and similarly $-\lambda v_r < -x_r$
implies $v_r>0$ and thus also $-v_r x_r<0$. Note that the term $\inner{v,\delta}$ is monotonically decreasing as $\lambda$ is increasing. Thus one can sort
$\max\{ \frac{x_r-1}{v_r},\frac{x_r}{v_r}\}$ in increasing order and they represent the thresholds when the summation changes. Then we compute $\inner{v,\delta}$ for
all of these thresholds and determine the largest threshold $\lambda^*$ such that $\inner{v,\delta}\leq c$. This fixes the index sets of all sums. Then we determine $\lambda$ by
\[ \lambda = \frac{ \sum\limits_{-\lambda^* v_r > 1-x_r} v_r (1-x_r) - \sum\limits_{-\lambda^* v_r < -x_r} v_r x_r-c}{\sum\limits_{-x_r \leq -\lambda^* v_r \leq 1-x_r} v_r^2}.\] 
In total sorting takes time $O(d \log d)$ and solving for $\lambda^*$ has complexity $O(d)$.
\end{proof}
\begin{algorithm}
\caption{\label{alg:L2} Computation of box-constrained adversarial samples wrt to $\norm{\cdot}_2$-norm.}
\begin{algorithmic}
\STATE INPUT: $c=f_j(x)-f_m(x)$ and $v=\nabla f_m(x)-\nabla f_j(x)$ (j, desired class, m original class)
\STATE sort $\gamma_r = \max\{ \frac{x_r-1}{v_r},\frac{x_r}{v_r}\}$ in increasing order $\pi$
\STATE s=0; $\rho=0$
\WHILE{$\rho>c$}
\STATE $s \leftarrow s + 1$
\STATE compute $\rho=\inner{v,\delta(\gamma_{\pi_s})}$, where $\delta(\lambda)$ is the function defined in Lemma \ref{le:proj-l2}
\ENDWHILE
\IF{$\rho \leq c$}
\STATE compute $I_m = \{r \,|\,-x_r \leq -\gamma_{\pi_{s-1})}v_r \leq 1-x_r\}$, 
\STATE compute $I_u=\{r\,|\, -\gamma_{\pi_{s-1}} v_r > 1-x_r\}$, 
\STATE compute $I_l=\{r\,|\,-\gamma_{\pi_{s-1}} v_r < -x_r\}$
\STATE $\lambda = \frac{ \sum\limits_{r \in I_u} v_r (1-x_r) - \sum\limits_{r \in I_l} v_r x_r-c}{\sum\limits_{r \in I_m} v_r^2}.$
\STATE $\delta = \max\{-x_r,\min\{-\lambda v_r, 1-x_r\}\}$
\ELSE
\STATE Problem has no feasible solution
\ENDIF
\end{algorithmic}
\end{algorithm}
Next we consider the case $p=1$.
\begin{align}\label{opt:l1-box} 
   \min_{\delta \in \R^d}\; & \norm{\delta}_1\\
   \textrm{ sbj. to: }       & f_j(x)-f_c(x) \geq \inner{\nabla f_c(x)-\nabla f_j(x),\delta}\nonumber\\
                            & 0\leq x_j + \delta_j \leq 1\nonumber
\end{align}
\begin{lemma}
Let $m=\argmax_j f_j(x)$ and define $v=\nabla f_m(x)-\nabla f_j(x)$ and $c=f_j(x)-f_m(x)<0$, then the solution of the problem \eqref{opt:l1-box} can be found by Algorithm \ref{alg:L1}.
\begin{algorithm}
\caption{\label{alg:L1} Computation of box-constrained adversarial samples wrt to $\norm{\cdot}_1$-norm.}
\begin{algorithmic}
\STATE INPUT: $c=f_j(x)-f_m(x)$ and $v=\nabla f_m(x)-\nabla f_j(x)$ (j, desired class, m original class)
\STATE sort $|v_i|$ in decreasing order
\STATE s:=0, g:=0,
\WHILE{$g>c$}
\STATE $s \leftarrow s + 1$
\IF{$v_{\pi_s}>0$}
\STATE $\delta_{\pi_s}=-x_{\pi_s}$
\ELSE 
\STATE $\delta_{\pi_s}=1-x_{\pi_s}$
\ENDIF
\STATE $g \leftarrow g + \delta_{\pi_s}v_{\pi_s}$
\ENDWHILE
\IF{$g \leq c$}
\STATE $g \leftarrow g - v_{\pi_s}\delta_{\pi_s}$
\STATE $\delta_{\pi_s} \leftarrow \frac{c-g}{v_{\pi_s}}$
\ELSE
\STATE Problem has no feasible solution
\ENDIF
\end{algorithmic}
\end{algorithm}
\end{lemma}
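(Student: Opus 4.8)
The plan is to recognize problem \eqref{opt:l1-box} as a one-dimensional fractional covering problem and to show that Algorithm \ref{alg:L1} is exactly the greedy solver for it. Writing the linear constraint as $\inner{v,\delta} \leq c$ with $v = \nabla f_m(x)-\nabla f_j(x)$ and $c = f_j(x)-f_m(x) < 0$, and the box constraints as $-x_r \leq \delta_r \leq 1-x_r$ (using $x \in [0,1]^d$), the first step is a sign-alignment reduction: in any optimal $\delta$ one may assume $\delta_r v_r \leq 0$ for every $r$. Indeed, if $\delta_r v_r > 0$ then resetting $\delta_r := 0$ keeps the box constraints valid, strictly decreases $\norm{\delta}_1$, and only makes $\inner{v,\delta}$ smaller, hence preserves feasibility, contradicting optimality. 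The same argument forces $\delta_r = 0$ whenever $v_r = 0$.

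After this reduction I would substitute $t_r := |\delta_r|$ and $a_r := |v_r|$, and set the per-coordinate cap $b_r := x_r$ when $v_r > 0$ and $b_r := 1-x_r$ when $v_r < 0$. Because $\delta_r$ now points opposite to $v_r$, one has $\inner{v,\delta} = -\sum_r a_r t_r$ and $\norm{\delta}_1 = \sum_r t_r$, so \eqref{opt:l1-box} becomes
\[
\minop_{t} \; \sum_{r} t_r \quad \textrm{s.th.} \quad \sum_{r} a_r t_r \geq -c, \quad 0 \leq t_r \leq b_r,
\]
with $-c > 0$. This is a continuous knapsack-type covering problem in which each unit of budget $t_r$ covers $a_r$ units of the required demand $-c$, so the cost-efficiency of coordinate $r$ is exactly $a_r = |v_r|$.

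The core of the argument is then the optimality of the greedy rule \emph{fill the coordinates with the largest $|v_r|$ to capacity first}. I would prove this by a standard exchange argument: at an optimum the covering constraint is tight (otherwise shrink any positive $t_r$), and one cannot simultaneously have a high-rate coordinate $r$ below its cap and a strictly-lower-rate coordinate $s$ above zero, since transferring mass from $s$ to $r$ while holding $\sum_r a_r t_r$ fixed strictly lowers $\sum_r t_r$. Hence an optimal $t$ saturates the caps of the coordinates in decreasing order of $|v_r|$ and fills at most one coordinate fractionally, which is precisely the trajectory of Algorithm \ref{alg:L1}: the while-loop accumulates $g = \inner{v,\delta} = -\sum_r a_r t_r$, saturating each $\delta_{\pi_s}$ to its box extreme ($-x_{\pi_s}$ or $1-x_{\pi_s}$) until $g \leq c$, and the final adjustment $\delta_{\pi_s} := (c-g)/v_{\pi_s}$ partially fills the last coordinate so that $\inner{v,\delta}=c$ exactly.

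It remains to check the boundary behaviour. I would verify that the fractional value $(c-g)/v_{\pi_s}$ always lies inside the corresponding box interval (it interpolates between the previous extreme and zero, because the last full step overshot $c$), so the returned $\delta$ is feasible; and that the ``no feasible solution'' branch triggers exactly when $\sum_r |v_r| b_r < -c$, i.e. when even saturating every coordinate cannot drive $\inner{v,\delta}$ down to $c$, which is the true feasibility threshold for \eqref{opt:l1-box}. The main obstacle I anticipate is not the greedy optimality itself, which is classical, but the reduction step: cleanly justifying that the nonsmooth $\ell_1$ objective together with the box constraints lets us decouple signs and pass to the covering form without losing any optimal solution, and handling degenerate coordinates with $v_r=0$ together with ties in $|v_r|$, which do not affect the optimal value.
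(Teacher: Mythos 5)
Your proposal is correct, but it reaches Algorithm \ref{alg:L1} by a genuinely different route than the paper. The paper's proof is a duality argument: it rewrites \eqref{opt:l1-box} as a linear program with auxiliary variables $t_i$ (via $-t_i \leq \delta_i \leq t_i$), forms the Lagrangian, successively eliminates the multipliers, and reduces everything to a one-dimensional dual problem in the single multiplier $\lambda$ of the linear constraint; complementary slackness then gives the threshold characterization ($-1<\lambda v_i<1 \Rightarrow \delta_i=0$, $\lambda v_i>1 \Rightarrow \delta_i=-x_i$, $\lambda v_i<-1 \Rightarrow \delta_i=1-x_i$), from which the sort-by-$|v_i|$ structure of the algorithm is read off, with the tie cases $|\lambda v_i|=1$ resolved by enforcing $c=\inner{v,\delta}$. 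You instead argue purely in the primal: the sign-alignment reduction ($\delta_r v_r \leq 0$ w.l.o.g., and $\delta_r=0$ when $v_r=0$), the substitution $t_r=|\delta_r|$, $a_r=|v_r|$, $b_r \in \{x_r,\,1-x_r\}$ turning \eqref{opt:l1-box} into a continuous covering knapsack, and the classical exchange argument showing that filling coordinates to capacity in decreasing order of $|v_r|$, with at most one fractional coordinate, is optimal. Your boundary checks are also sound: the interpolation argument for the last fractional step (it lies between the previous extreme and zero precisely because the last full step overshot $c$) and the infeasibility criterion $\sum_r |v_r| b_r < -c$ match what the algorithm's failure branch detects. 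What the paper's duality route buys is uniformity --- essentially the same Lagrangian machinery handles the $p=2$ and $p=\infty$ cases (Algorithms \ref{alg:L2} and \ref{alg:Linf}), so all three norms are treated by one template, and the dual variable makes the monotone threshold structure explicit. What your route buys is elementarity and transparency: no KKT apparatus is needed, greedy optimality and the handling of ties and of coordinates with $v_r=0$ are self-contained, and infeasibility appears naturally as exhaustion of the total covering capacity rather than as non-existence of a dual solution.
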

\begin{proof}
The result is basically obvious but we derive it formally. First of all we rewrite \eqref{opt:l1-box} as a linear program.
\begin{align}\label{opt:l1-box-LP} 
   \minop_{\delta \in \R^d}\; & \sum_{i=1}^d t_i\\
   \textrm{ sbj. to: }       & c \geq \inner{v,\delta} \nonumber\\
                            & -x_j \leq \delta_j \leq 1-x_j \nonumber\\
                            & -t_i \leq \delta_i \leq t_i\nonumber\\
                            &  t_i\geq 0 \nonumber
\end{align}
The Lagrangian of this problem is
\begin{align*}
L(\delta,t,\alpha,\beta,\gamma,\theta,\kappa,\lambda) &= 
\inner{t,\ones} + \lambda(\inner{v,\delta}-c) - \inner{\alpha,\delta+x} + \inner{\beta,\delta -\ones +x}  -\inner{\gamma,\delta+t} + \inner{\theta,\delta-t} - \inner{\kappa,t}\\
&= \inner{t,\ones-\gamma-\theta-\kappa} + \inner{\delta,\beta-\alpha+\lambda v+\theta-\gamma} +\inner{\beta-\alpha,x} - \inner{\beta,\ones} - \lambda c
\end{align*}
Minimization of the Lagrangian over $t$ resp. $\delta$ leads only to a non-trivial result if
\begin{align*}
   \ones-\gamma-\theta-\kappa &=0\\
   \beta-\alpha+\lambda v+\theta-\gamma&=0
\end{align*}
We get the dual problem
\begin{align}\label{opt:l1-box-LP-dual} 
   \maxop_{\alpha,\beta,\theta,\gamma,\kappa,\lambda}\; & \inner{\beta-\alpha,x} - \inner{\beta,\ones} - \lambda c\\
   \textrm{ sbj. to: }       & \ones-\gamma-\theta-\kappa =0 \nonumber\\
                            & \beta-\alpha+\lambda v+\theta-\gamma=0\nonumber\\
                            & \alpha\geq 0, \; \beta\geq 0, \; \theta \geq 0,\;\gamma\geq 0,\;\kappa\geq 0,\;\lambda\geq 0
\end{align}
Using the equalities we can now simplify the problem by replacing $\alpha$ and using the fact that $\kappa$ is not part of the objective, the positivity just induces an additional constraint. We get
\[ \alpha = \beta +\lambda v+\theta-\gamma.\]
Plugging this into the problem \eqref{opt:l1-box-LP-dual} we get
\begin{align}\label{opt:l1-box-LP-dual} 
   \maxop_{\beta,\theta,\gamma,\lambda}\; & -\inner{\lambda v+\theta-\gamma,x} - \inner{\beta,\ones} - \lambda c\\
    \textrm{ sbj. to: }      & \ones-\gamma-\theta \geq 0 \nonumber\\
                            & \beta +\lambda v+\theta-\gamma\geq 0\nonumber\\
                            & \beta\geq 0, \; \theta \geq 0,\;\gamma\geq 0,\;\lambda\geq 0
\end{align}
We get the constraint $\beta \geq \max\{0,-\lambda v-\theta+\gamma\}$ (all the inequalities and functions are taken here componentwise) and thus we can explicitly maximize over $\beta$
\begin{align}\label{opt:l1-box-LP-dual} 
   \maxop_{\theta,\gamma,\lambda}\; & -\inner{\lambda v+\theta-\gamma,x} - \sum_{i=1}^d \max\{0,-\lambda v_i-\theta_i+\gamma_i\} - \lambda c\\
    \textrm{ sbj. to: }      & \gamma+\theta \leq \ones \nonumber\\
                            & \theta \geq 0,\;\gamma\geq 0,\;\lambda\geq 0
\end{align}
As $0\leq x_i \leq 1$ it holds for all $\theta\geq 0,\gamma\geq0$ 
\[ \inner{-\lambda v-\theta+\gamma,x} - \sum_{i=1}^d \max\{0,-\lambda v_i-\theta_i+\gamma_i\} \leq 0.\]
The maximum is attained if $\gamma_i-\theta_i-\lambda v_i=0$ resp. with the constraints on $\gamma_i,\theta_i$ this is equivalent to $-1\leq \lambda v_i\leq 1$.
Suppose that $\lambda v_i>1$ then the maximum is attained for $\gamma_i=1$ and $\theta_i=0$, and for $\lambda v_i <-1$ the maximum is attained for $\gamma_i=0$ and $\theta_i=1$.
Thus by solving explicitly for $\theta$ and $\gamma$ we obtain 
\begin{align}\label{opt:l1-box-LP-dual2} 
   \maxop_{\lambda}\; & \sum_{\lambda v_i > 1} (1-\lambda_i v_i)x_i + \sum_{\lambda v_i < -1} (1-\lambda_i v)(x_i-1)- \lambda c\\
   \textrm{ sbj. to: } & \lambda\geq 0
\end{align}
Note that the first two terms are decreasing with $\lambda$ and the last term is increasing with $\lambda$. Let $\lambda^*$ be the optimum, then
we have the following characterization
\begin{align*}
    -1 < \lambda v_i < 1 \quad &\Longrightarrow \quad \gamma_i+\theta_i<1 \quad \Longrightarrow \kappa_i >0 \quad \Longrightarrow t_i=0 \Longrightarrow \delta_i=0\\
    \lambda v_i > 1 \quad &\Longrightarrow \quad\gamma_i=1,\theta_i=0,\beta_i=0,\alpha_i>0 \quad\Longrightarrow \quad \delta_i =-x_i,\\
    \lambda v_i <-1 \quad &\Longrightarrow \quad\gamma_i=0,\theta_i=1,\beta_i>0 \quad\Longrightarrow \quad \delta_i=1-x_i,
\end{align*} 
The cases $|\lambda v_i|=1$ are undetermined but given that $\lambda>0$ the remaining values can be fixed by solving for $c=\inner{v,\delta}$.
The time complexity is again determined by the initial sorting step of $O(d \log d)$. The following linear scan requires $O(d)$.
\end{proof}
Finally, we consider the case $p=\infty$.
\begin{align}\label{opt:linf-box} 
   \minop_{\delta \in \R^d}\; & \norm{\delta}_\infty\\
   \textrm{ sbj. to: }        & f_j(x)-f_c(x) \geq \inner{\nabla f_c(x)-\nabla f_j(x),\delta}\nonumber\\
                            & 0\leq x_j + \delta_j \leq 1\nonumber
\end{align}
\begin{lemma}
Let $m=\argmax_j f_j(x)$ and define $v=\nabla f_m(x)-\nabla f_j(x)$ and $c=f_j(x)-f_m(x)<0$, then the solution of \eqref{opt:linf-box} can be found by solving for
$t\geq 0$,
\[ \sum_{v_i>0} v_i \max\{-t,-x_r\} + \sum_{v_i<0} v_i \min\{t,1-x_r\}=c.\]
which is done in Algorithm \ref{alg:Linf}.
\end{lemma}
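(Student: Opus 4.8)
The plan is to exploit the special structure of the $\ell_\infty$-norm by turning the minimization of $\norm{\delta}_\infty$ into the minimization of a single scalar $t=\norm{\delta}_\infty$. Following the notation of the lemma, the class index $c$ appearing in \eqref{opt:linf-box} is the predicted class $m=\argmax_j f_j(x)$, so writing $v=\nabla f_m(x)-\nabla f_j(x)$ and $c=f_j(x)-f_m(x)<0$ the linear constraint reads $\inner{v,\delta}\leq c$ and the box constraint reads $-x_i\leq \delta_i\leq 1-x_i$. Introducing $t\geq 0$, problem \eqref{opt:linf-box} is equivalent to minimizing $t$ subject to the existence of a $\delta$ with $|\delta_i|\leq t$, with $-x_i\leq \delta_i\leq 1-x_i$ for all $i$, and with $\inner{v,\delta}\leq c$. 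The problem thus decouples into an outer one-dimensional minimization over $t$ and an inner feasibility check.

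For fixed $t$ each coordinate $\delta_i$ is confined to the interval $[\max\{-t,-x_i\},\,\min\{t,1-x_i\}]$, and since the only coupling constraint $\inner{v,\delta}\leq c$ and the objective are separable and linear in $\delta$, the inner feasibility question is decided by the minimal attainable value of $\inner{v,\delta}$ over this product of intervals. This minimum is obtained coordinatewise by pushing $\delta_i$ to the lower endpoint when $v_i>0$ and to the upper endpoint when $v_i<0$ (coordinates with $v_i=0$ are irrelevant), giving exactly
\[ g(t)=\sum_{v_i>0} v_i \max\{-t,-x_i\} + \sum_{v_i<0} v_i \min\{t,1-x_i\}. \]
Hence a given $t$ is feasible if and only if $g(t)\leq c$.

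The next step is to observe that $g$ is continuous, piecewise linear, and monotonically non-increasing in $t$ (raising $t$ only relaxes each interval in the direction that decreases $\inner{v,\delta}$). Since $0\leq x_i\leq 1$ we have $g(0)=0>c$, so the smallest feasible $t$ — which is the optimal value — is the unique root $t^*$ of $g(t)=c$, provided the problem is feasible; feasibility is exactly the condition $\lim_{t\to\infty} g(t)=\sum_{v_i>0}v_i(-x_i)+\sum_{v_i<0}v_i(1-x_i)\leq c$, i.e. the full box already admits the constraint. To solve $g(t)=c$ I would collect the breakpoints $t=x_i$ (for $v_i>0$) and $t=1-x_i$ (for $v_i<0$) at which the active clamp of a coordinate changes, sort them, and scan in increasing order; on each interval between consecutive breakpoints $g$ is affine, so once the scan locates the interval containing the root, $t^*$ is found by solving a single linear equation, as in Algorithm \ref{alg:Linf}. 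The cost is dominated by the sort, giving $O(d\log d)$. The optimal $\delta$ is then recovered by the same corner rule at $t=t^*$, with coordinates not clamped by the box set to $\pm t^*$.

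The step I expect to require the most care is the reduction from the original joint optimization to scalar root-finding: one must justify that testing feasibility of the coupling constraint over the box reduces to comparing its separable minimum $g(t)$ against $c$, and that at the root the constructed corner solution genuinely attains $\norm{\delta}_\infty=t^*$ rather than a smaller norm, which follows because strict decrease of $g$ at $t^*$ forces at least one unclamped coordinate to sit at $\pm t^*$. The remaining bookkeeping — handling ties at breakpoints and the infeasible case — is routine, and an LP-duality derivation analogous to the $p=1$ and $p=2$ cases would yield the same equation.
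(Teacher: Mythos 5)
Your proposal is correct and follows essentially the same route as the paper: both reduce the $\ell_\infty$ problem to a one-dimensional search over $t=\norm{\delta}_\infty$, compute the coordinatewise minimum $g(t)$ of $\inner{v,\delta}$ over the box intersected with $[-t,t]^d$ via the corner rule, and exploit the monotonicity of $g$ to find the root of $g(t)=c$ by sorting the breakpoints in $O(d\log d)$. Your added remarks on attainment of the norm at $t^*$ and the feasibility criterion are consistent with (and slightly more explicit than) the paper's argument.
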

\begin{proof}
We can rewrite the optimization problem into a linear program
\begin{align}\label{opt:linf-box1} 
   \minop_{t \in \R, \delta \in \R^d}\; & t\\
   \textrm{ sbj. to: }       & c \geq \inner{v,\delta}\nonumber\\
                            & -x_j \leq \delta_j \leq 1-x_j, \quad j=1,\ldots,d\nonumber\\
                            & -t \leq \delta_j \leq t, \quad j=1,\ldots,d\nonumber\\
                            & t \geq 0
\end{align}
Thus we have $\max\{-t,-x_r\} \leq \delta_r \leq \min\{t,1-x_r\}$ for $r=1,\ldots,d$. Then it holds
\begin{align*}
\inner{v,\delta}&\geq \sum_{v_r>0} v_r \max\{-t,-x_r\} + \sum_{v_r<0} v_r \min\{t,1-x_r\}\\
                &= - \sum_{v_r>0,t\geq x_r} v_r x_r - t \sum_{v_r>0,t<x_r} v_r + t \sum_{v_r<0,t<1-x_r} v_r + \sum_{v_r<0,t\geq 1-x_r} v_r (1-x_r),
\end{align*}
and the lower bound can be attained if $\delta_r=\min\{t,1-x_r\}$ for $v_r<0$ and $\delta_r = \max\{-t,-x_r\}$ for $v_r>0$, $\delta_r=0$ if $v_r=0$.
Note that both terms are monotonically decreasing with $t$. The algorithm \ref{alg:Linf} has complexity $O(d\log d)$ due to the initial sorting step
followed by steps of complexity $O(d)$.
\end{proof}

\begin{algorithm}
\caption{\label{alg:Linf}Computation of box-constrained adversarial samples wrt to $\norm{\cdot}_\infty$-norm.}
\begin{algorithmic}
\STATE INPUT: $c=f_j(x)-f_m(x)$ and $v=\nabla f_m(x)-\nabla f_j(x)$ (j, desired class, m original class)
\STATE $d_+: = |\{l \,|\,v_l >0\}|$ and $d_- := |\{l \,|\, v_l < 0 \}|$
\STATE sort $\{x_l \,|\, v_l>0\}$ in increasing order $\pi$, sort $\{1-x_l\,|\, v_l<0\}$ in increasing order $\rho$
\STATE s:=1, r:=1, g:=0, t:=0, $\kappa_+=\sum_{v_l>0} v_l$, $\kappa_-=\sum_{v_l<0} v_l$, $\gamma_+=\gamma_-=0$.
\WHILE{$g>c$ AND ($s\leq d_+$ OR $t\leq d_-$)}
\IF{$x_{\pi_s} < 1-x_{\rho_r}$}
\STATE $t=x_{\pi_s}$, $\quad\kappa_+ \leftarrow \kappa_+ - v_{\pi_s}$, $\quad\gamma_+ \leftarrow \gamma_+ - v_{\pi_s}x_{\pi_s}$,
\STATE $s \leftarrow s+1$
\ELSE 
\STATE $t=1-x_{\rho_r}$,  $\quad\kappa_- \leftarrow \kappa_- - v_{\rho_r}$, $\quad\gamma_- \leftarrow \gamma_- - v_{\rho_r}(1-x_{\rho_r})$,
\STATE $t \leftarrow t + 1$
\ENDIF 
\STATE $g=\gamma_+ + \gamma_- + t\,(\kappa_+ - \kappa_-)$
\ENDWHILE
\IF{$g \leq c$}
\STATE undo last step
\STATE $t = (c - \gamma_+ - \gamma_-)/(\kappa_- - \kappa_+)$
\STATE compute $\delta_r = \begin{cases} \min\{t, 1-x_r\} & \textrm{ for } v_r>0\\ \max\{-t,-x_r\} & \textrm{ for }v_r<0\end{cases}$
\ELSE
\STATE Problem has no feasible solution
\ENDIF
\end{algorithmic}
\end{algorithm}
\else
\fi
For nonlinear classifiers a change of the decision is not guaranteed and thus we use later on a binary search with a variable $c$ instead of $f_c(x)-f_j(x)$.


\section{Experiments}\label{sec:exp}
The goal of the experiments is the evaluation of the robustness of the resulting classifiers and not
necessarily state-of-the-art results in terms of test error. In all  cases we compute the robustness guarantees from Theorem \ref{th:RobG}
(lower bound on the norm of the minimal change required to change the classifier decision), where we optimize over $R$ using binary search, and adversarial samples with the algorithm for the $2$-norm from Section \ref{sec:adv} (upper bound on the norm of the minimal change required to change the classifier decision),
where we do a binary search in the classifier output difference in order to find a point on the decision boundary. Additional experiments can be found in the supplementary material.
\paragraph{Kernel methods:} We optimize the cross-entropy loss once with the standard regularization (Kernel-LogReg)
and with Cross-Lipschitz regularization (Kernel-CL). Both are convex optimization problems and we use L-BFGS
to solve them.  We use the Gaussian kernel $k(x,y)=e^{-\gamma \norm{x-y}^2}$
where $\gamma=\frac{\alpha}{\rho^2_{\mathrm{KNN40}}}$ and $\rho_{\mathrm{KNN40}}$ is the mean of the 40 nearest neighbor distances on the training set
and $\alpha \in \{0.5,1,2,4\}$. We show the results for MNIST (60000 training and 10000 test samples). However, we have checked that parameter selection using a subset of 50000 images from the
training set and evaluating on the rest yields indeed the parameters which give the best test errors when trained on the full set.
The regularization parameter is chosen in $\lambda \in \{10^{-k}| k \in \{5,6,7,8\}\}$ for Kernel-SVM and $\lambda \in \{10^{-k}\,|\, k \in \{0,1,2,3\}\}$
for our Kernel-CL. The results of the optimal parameters are given in the following table and the performance of all parameters is shown in 
Figure \ref{exp:Kernel}. Note that due to the high computational complexity we could evaluate the robustness guarantees only for the optimal parameters.
\begin{center}
\begin{minipage}{0.49\textwidth}
{\small 
\begin{tabular}{l|l|l|l}
          &   &                                              &  \\
          & &     avg.    $\norm{\cdot}_2$ & avg.$\norm{\cdot}_2$ \\ 
         &  test                      & adv.                         & rob. \\ 
         &  error                    & samples     & guar. \\ 
\hline
No Reg.  &  2.23\%   &  2.39 &  0.037 \\
($\lambda=0$) & & & \\
K-SVM   &  1.48\%   &  1.91 &  0.058 \\
K-CL    &  1.44\%   &   3.12 &  0.045
\end{tabular}
} 
\end{minipage}
\begin{minipage}{0.45\textwidth}
\includegraphics[width=\textwidth]{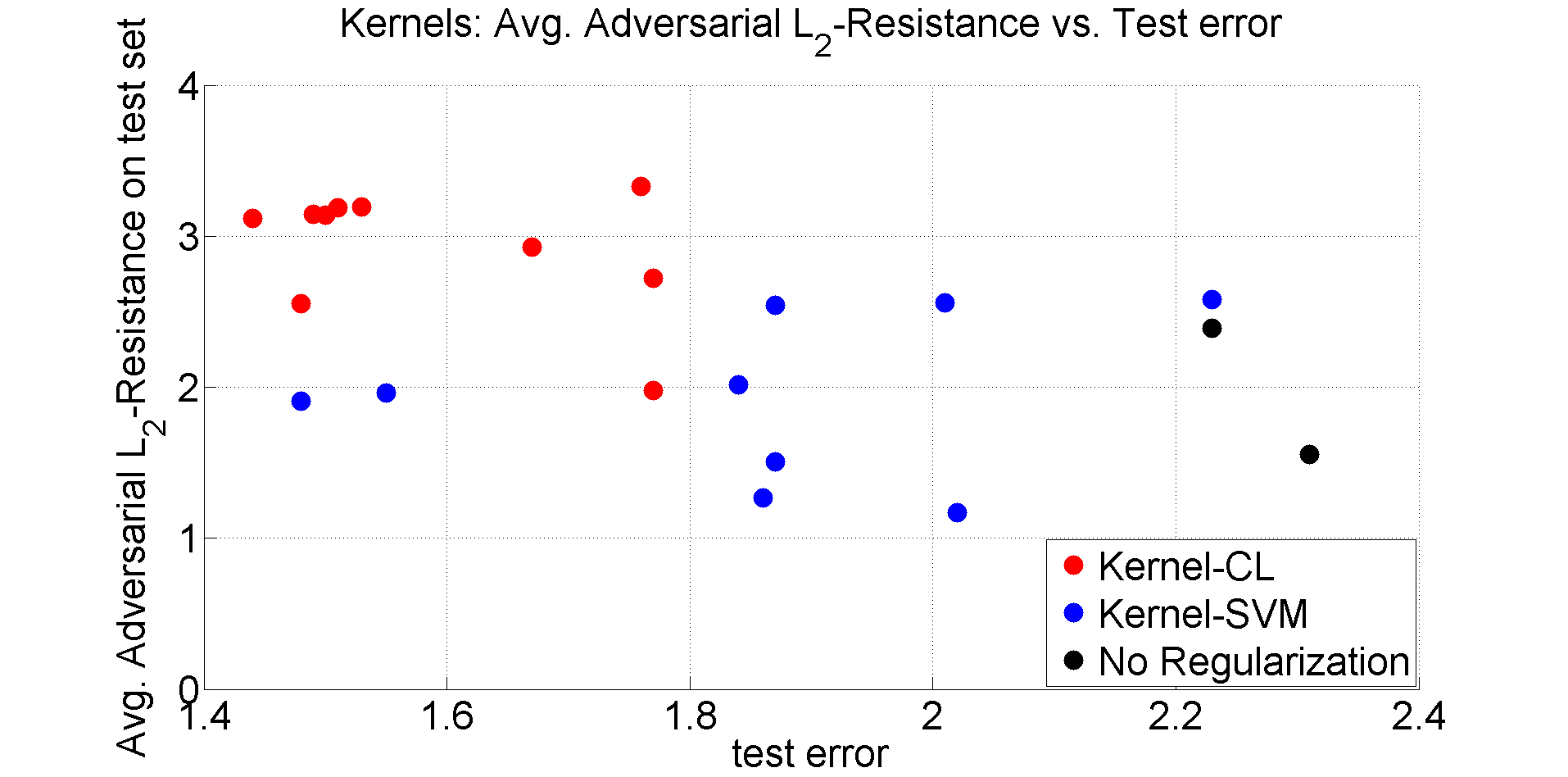}

\end{minipage}
\captionof{figure}{\label{exp:Kernel} \textbf{Kernel Methods:} Cross-Lipschitz regularization achieves both better test error and robustness
against adversarial samples (upper bounds, larger is better) compared to the standard regularization. The robustness guarantee is weaker than for neural networks but this is most likely due to the relatively loose bound.}
\end{center}

\paragraph{Neural Networks:} Before we demonstrate how upper and lower bounds improve using cross-Lipschitz regularization, we first want to highlight
the importance of the usage of the local cross-Lipschitz constant in Theorem \ref{th:RobG} for our robustness guarantee. 
\paragraph{Local versus global Cross-Lipschitz constant:} While no robustness guarantee has been proven before, it has been discussed in \cite{SzeEtal2014} that penalization of the global Lipschitz constant should improve robustness, see also \cite{CisEtAl2017}. For that purpose they derive the Lipschitz constants of several different layers and use the fact that the Lipschitz constant of a composition of functions is upper bounded by the product of the Lipschitz constants of the functions. In analogy, this would mean that the term $\sup_{y \in B(x,R)} \norm{\nabla f_c(y)-\nabla f_j(y)}_2$, which we have upper bounded in Proposition \ref{pro:boundNN}, in the denominator in Theorem \ref{th:RobG} could 
be replaced\footnote{Note that then the optimization of $R$ in Theorem \ref{th:RobG} would be unnecessary.} by the global Lipschitz constant of $g(x):=f_c(x)-f_j(x)$.  which is given as
$\sup_{y \in \R^d} \norm{\nabla g(x)}_2 = \sup_{x \neq y} \frac{|g(x)-g(y)|}{\norm{x-y}_2}.$
We have with $\norm{U}_{2,2}$ being the largest singular value of $U$,
\begin{align*} |g(x)-g(y)| &= \inner{w_c-w_j,\sigma(Ux)-\sigma(Uy)} \leq \norm{w_c-w_j}_2 \norm{\sigma(Ux)-\sigma(Uy)}_2 \\&\leq \norm{w_c-w_j}_2\norm{U(x-y)}_2
\leq \norm{w_c-w_j}_2 \norm{U}_{2,2} \norm{x-y}_2,\end{align*}
where we used that $\sigma$ is contractive as $\sigma'(z)=\frac{1}{1+e^{-\alpha z}}$ and thus we get
\[  \sup_{y \in \R^d} \norm{\nabla f_c(x)-\nabla f_j(x)}_2 \leq  \norm{w_c-w_j}_2 \norm{U}_{2,2}.\]
The advantage is clearly that this global Cross-Lipschitz constant can just be computed once and by using it in Theorem \ref{th:RobG} one can evaluate the guarantees very quickly. However, it turns out that one gets significantly better robustness guarantees by using the local Cross-Lipschitz constant in terms of the
bound derived in Proposition \ref{pro:boundNN} instead of the just derived global Lipschitz constant. Note that the optimization over $R$ in Theorem \ref{th:RobG} is done using a binary search, noting that the bound of the local Lipschitz constant in Proposition \ref{pro:boundNN} is monotonically decreasing in $R$.
We have the following comparison in Table \ref{tab:LocalGlobal}. We want to highlight that the robustness guarantee with the global Cross-Lipschitz constant was \emph{always} worse than when using the local Cross-Lipschitz constant across all regularizers and data sets. Table \ref{tab:LocalGlobal} shows that the guarantees using the local Cross-Lipschitz can be up to eight times better than for the global one. As these are just one hidden layer networks, it is obvious
that robustness guarantees for deep neural networks based on the global Lipschitz constants will be too coarse to be useful.
\begin{table}
\begin{center}
\begin{tabular}{|@{\;}c@{\;}|@{\;}c@{\;}|@{\;}c@{\;}|@{\;}c@{\;}||@{\;}c@{\;}|@{\;}c@{\;}|@{\;}c@{\;}|@{\;}c@{\;}|}
\multicolumn{4}{c}{MNIST (plain)} & \multicolumn{4}{c}{CIFAR10 (plain)}\\
None & Dropout & Weight Dec. & Cross Lip. & None & Dropout & Weight Dec. & Cross Lip. \\
\hline
0.69 & 0.48 & 0.68  & 0.21 & 0.22 & 0.13 & 0.24 & 0.17\\
\hline
\end{tabular}
\end{center}
\caption{\label{tab:LocalGlobal} We show the average ratio $\frac{\alpha_{\textrm{global}}}{\alpha_{\textrm{local}}}$ of the robustness guarantees $\alpha_{\textrm{global}}, \alpha_{\textrm{local}}$ from Theorem \ref{th:RobG} on the test data for MNIST and CIFAR10
and different regularizers. The guarantees using the local Cross-Lipschitz constant are up to eight times better than with the global one.} 
\end{table}  

\paragraph{Experiments:} We use a one hidden layer network with 1024 hidden units and the softplus activation function with $\alpha=10$. Thus
the resulting classifier is continuously differentiable. We compare three different regularization techniques: weight decay, dropout
and our Cross-Lipschitz regularization. Training is done with SGD. For each method we have adapted the learning rate (two per method) and regularization parameters (4 per method) so that all methods achieve good performance. We do experiments for MNIST and CIFAR10
in three settings: plain, data augmentation and adversarial training. The exact settings of the parameters and the augmentation
techniques are described 
\ifpaper below.\else in the supplementary material.\fi The results for MNIST are shown in Figure \ref{exp:NN-MNIST} and the
results for CIFAR10 are in \ifpaper Figure \ref{exp:NN-CIFAR}.\else the supplementary material.\fi For MNIST there is a clear trend that our Cross-Lipschitz regularization improves the 
robustness of the resulting classifier while having competitive resp. better test error. It is surprising that data augmentation does not lead to more robust models. However, adversarial training improves the guarantees as well as adversarial resistance. For CIFAR10 the picture is mixed, our CL-Regularization performs well for the augmented task in test error and upper bounds but is not significantly better in the robustness guarantees. The problem
might be that the overall bad performance due to the simple model is preventing a better behavior. Data augmentation leads to better test error but the robustness properties (upper and lower bounds) are basically unchanged. Adversarial training slightly improves performance compared to the plain setting 
and improves upper and lower bounds in terms of robustness. We want to highlight that our guarantees (lower bounds) and the upper bounds from the adversarial samples
are  not too far away.
\ifpaper
$\;$\\
For MNIST (all settings) the learning rate is for all methods chosen from $\{0.2,0.5\}$.
The regularization parameters for weight decay are chosen from $\{10^{-5},10^{-4},10^{-3},10^{-2}\}$, for Cross-Lipschitz
from $\{10^{-5},10^{-4},5*10^{-4},10^{-3}\}$ and the dropout probabilities are taken from $\{0.4,0.5,0.6,0.7\}$.
For CIFAR10 the learning rate is for all methods chosen from $\{0.04,0.1\}$, the regularization parameters for weight decay
and Cross-Lipschitz are $\{10^{-5},10^{-4},5*10^{-4},10^{-3}\}$ and  dropout probabilities are taken from $\{0.5,0.6,0.7,0.8\}$.
For CIFAR10 with data augmentation we choose the learning rate for all methods from $\{0.04,0.1\}$,  the regularization parameters for weight decay are $\{10^{-6}, 10^{-5},10^{-4},10^{-3}\}$
and for Cross-Lipschitz $\{10^{-5},10^{-4},5*10^{-4},10^{-3}\}$ and the dropout probabilities are taken from $\{0.5,0.6,0.7,0.8\}$.
Data augmentation for MNIST means that we apply random rotations in the angle $[-\frac{\pi}{20},\frac{\pi}{20}]$
and random crop  from 28x28 to 24x24. For CIFAR-10 we apply the same and additionally we mirror the image (left to right)
with probability 0.5 and apply random brightness $[-0.1,0.1]$ and random contrast change $[0.6,1.4]$. In each substep we ensure
that we get an image in $[0,1]^d$ by clipping. We implemented adversarial training by generating adversarial samples wrt to the infinity norm with 
the code from Section \ref{sec:adv} and replaced 50\% of each batch as adversarial samples. Finally, we use for SGD batchsize 64
in all experiments.
\fi
\begin{figure}
\centering
\begin{tabular}{c|c}
 Adversarial Resistance (Upper Bound)  & Robustness Guarantee (Lower Bound)\\
  wrt to $L_2$-norm                             & wrt to $L_2$-norm\\
  \includegraphics[width=0.45\textwidth]{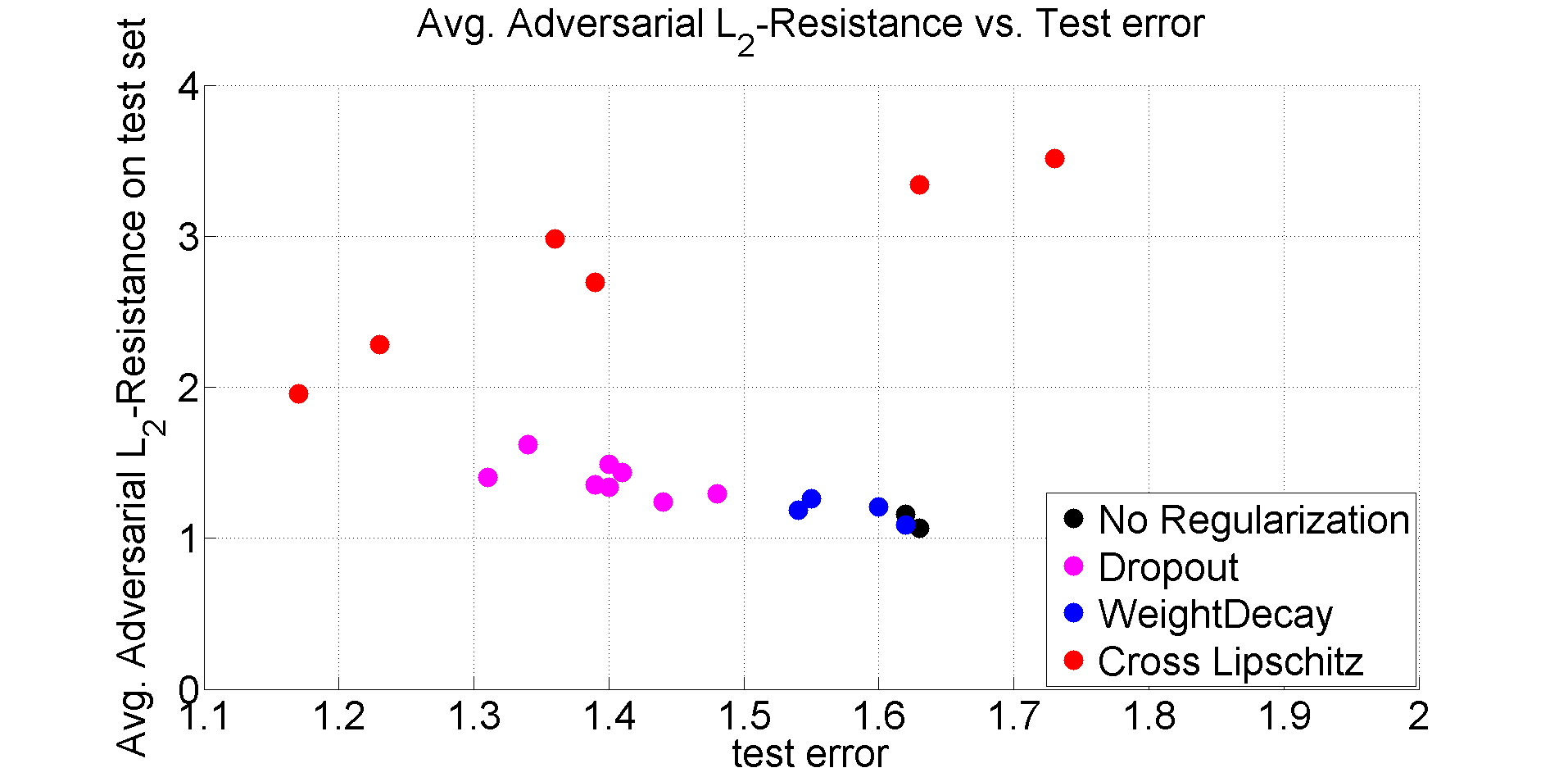}     &\includegraphics[width=0.45\textwidth]{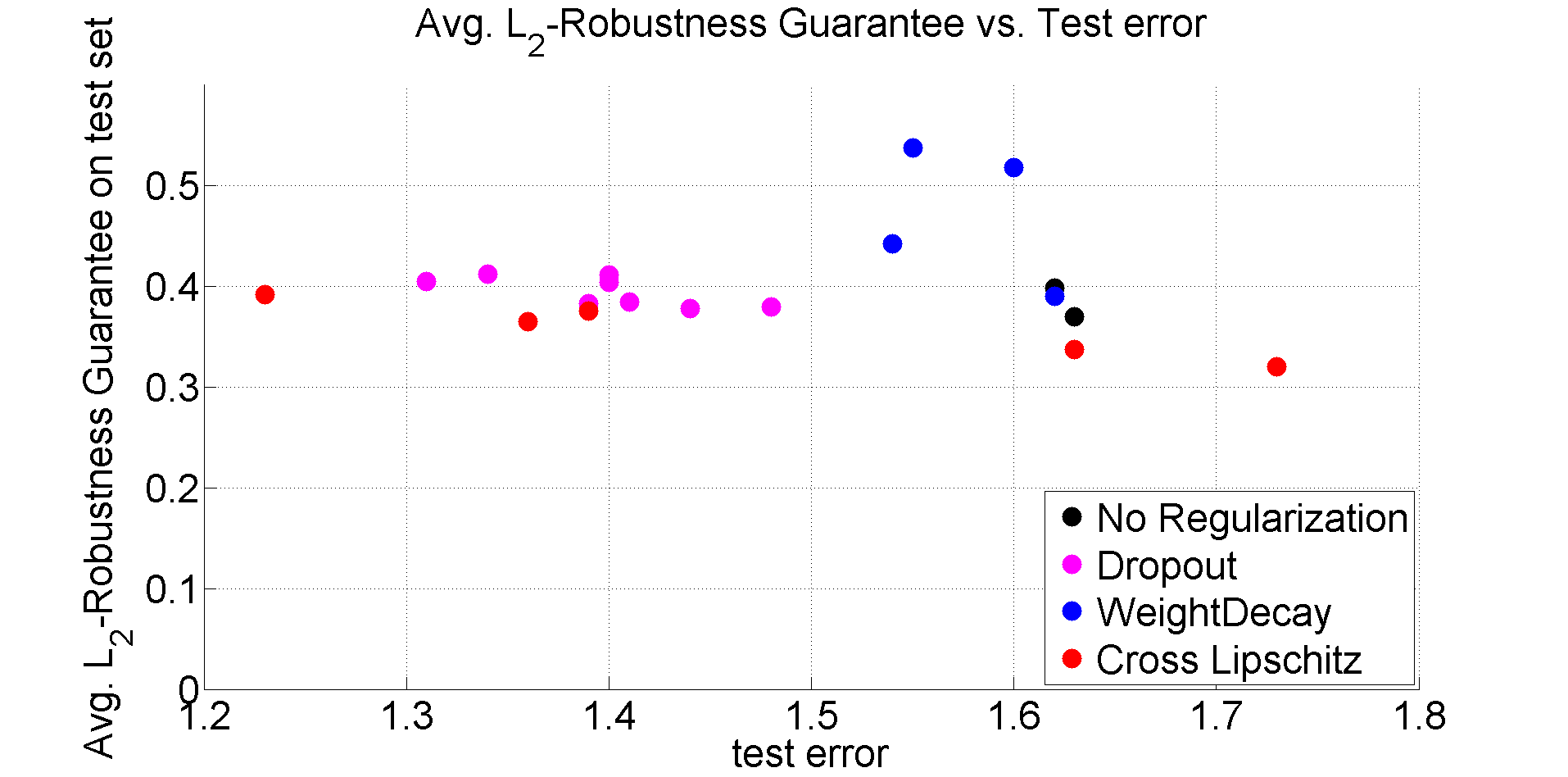}\\
  \includegraphics[width=0.45\textwidth]{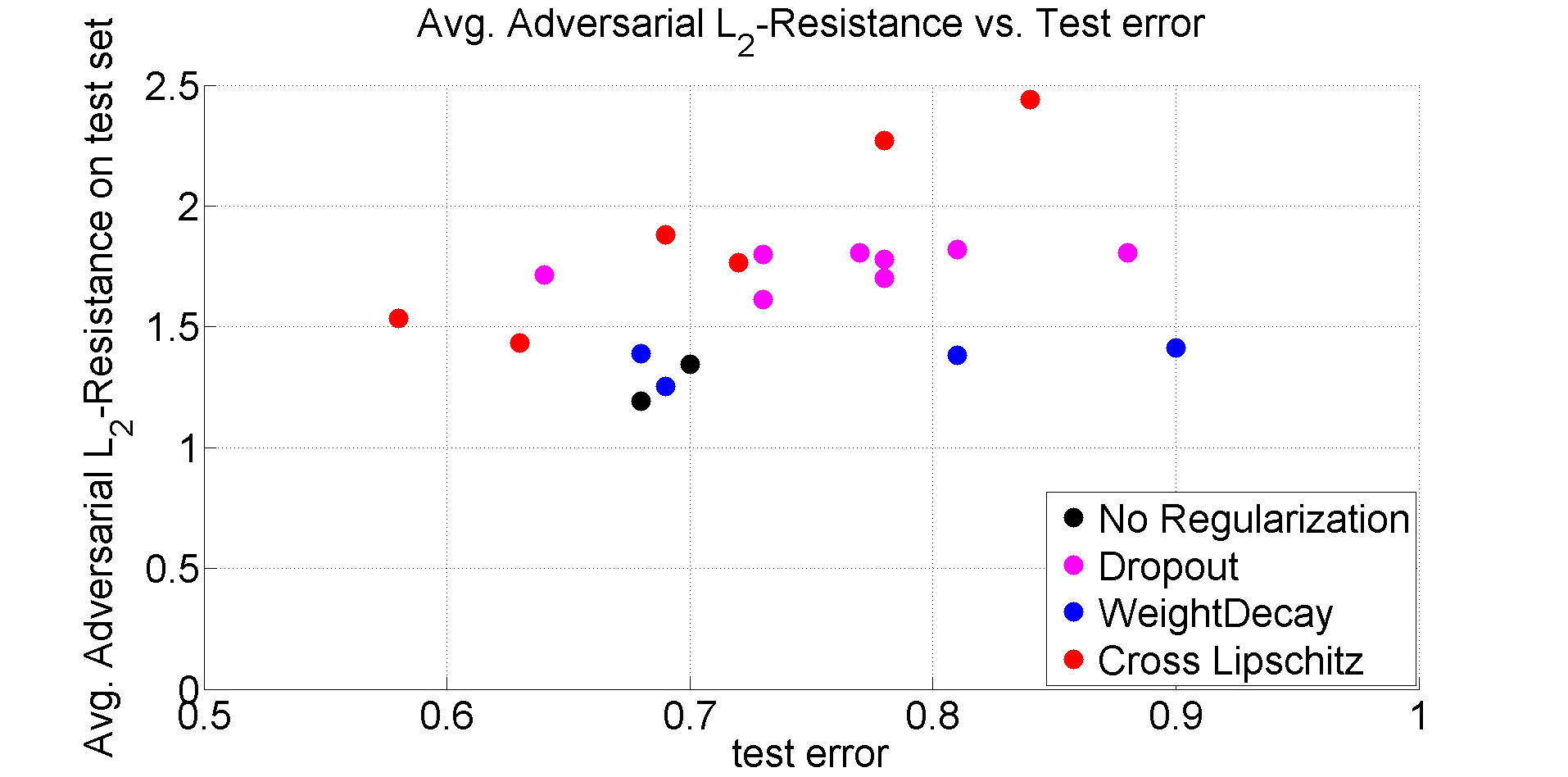}&\includegraphics[width=0.45\textwidth]{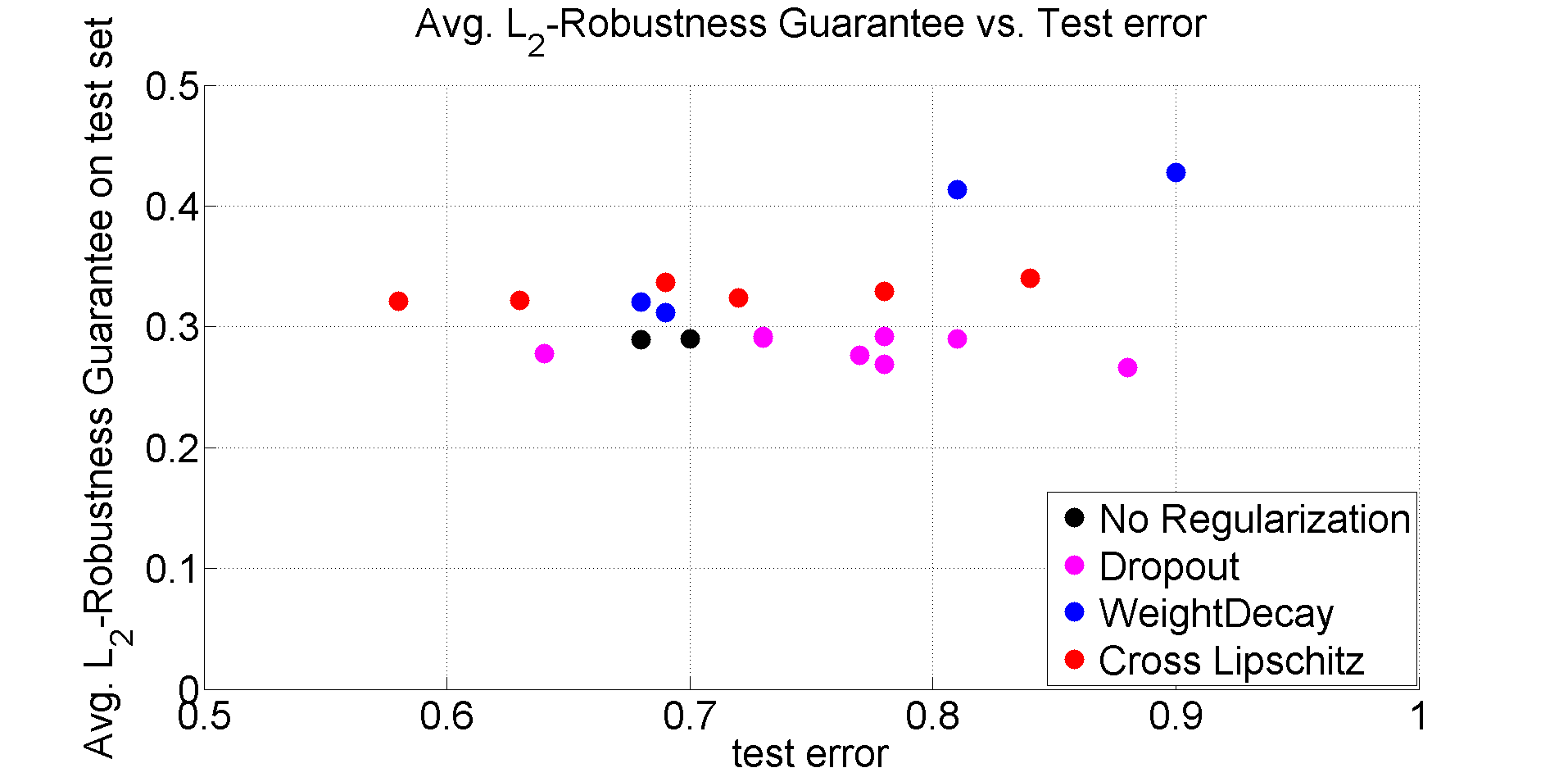}\\
  \includegraphics[width=0.45\textwidth]{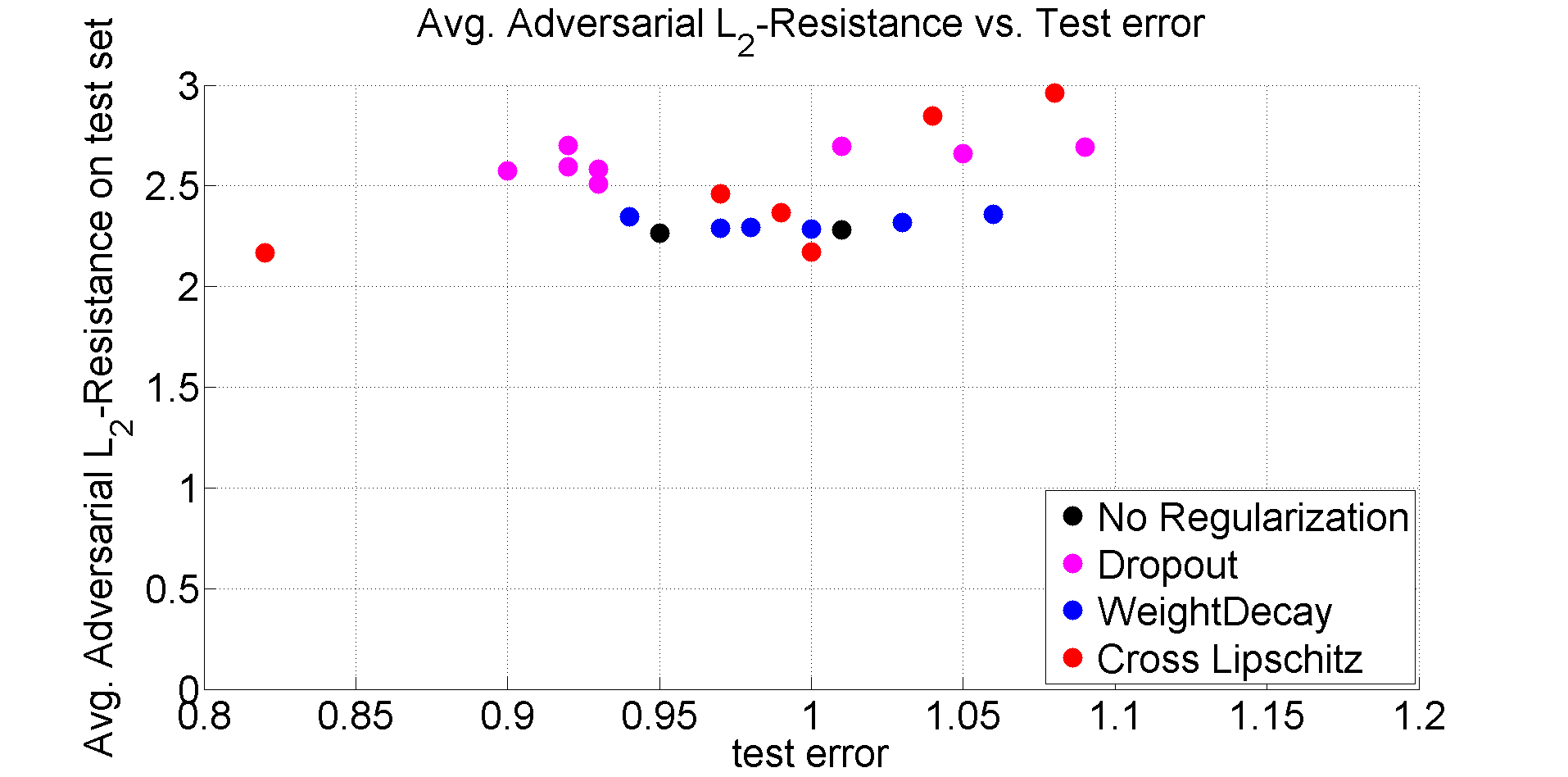}&\includegraphics[width=0.45\textwidth]{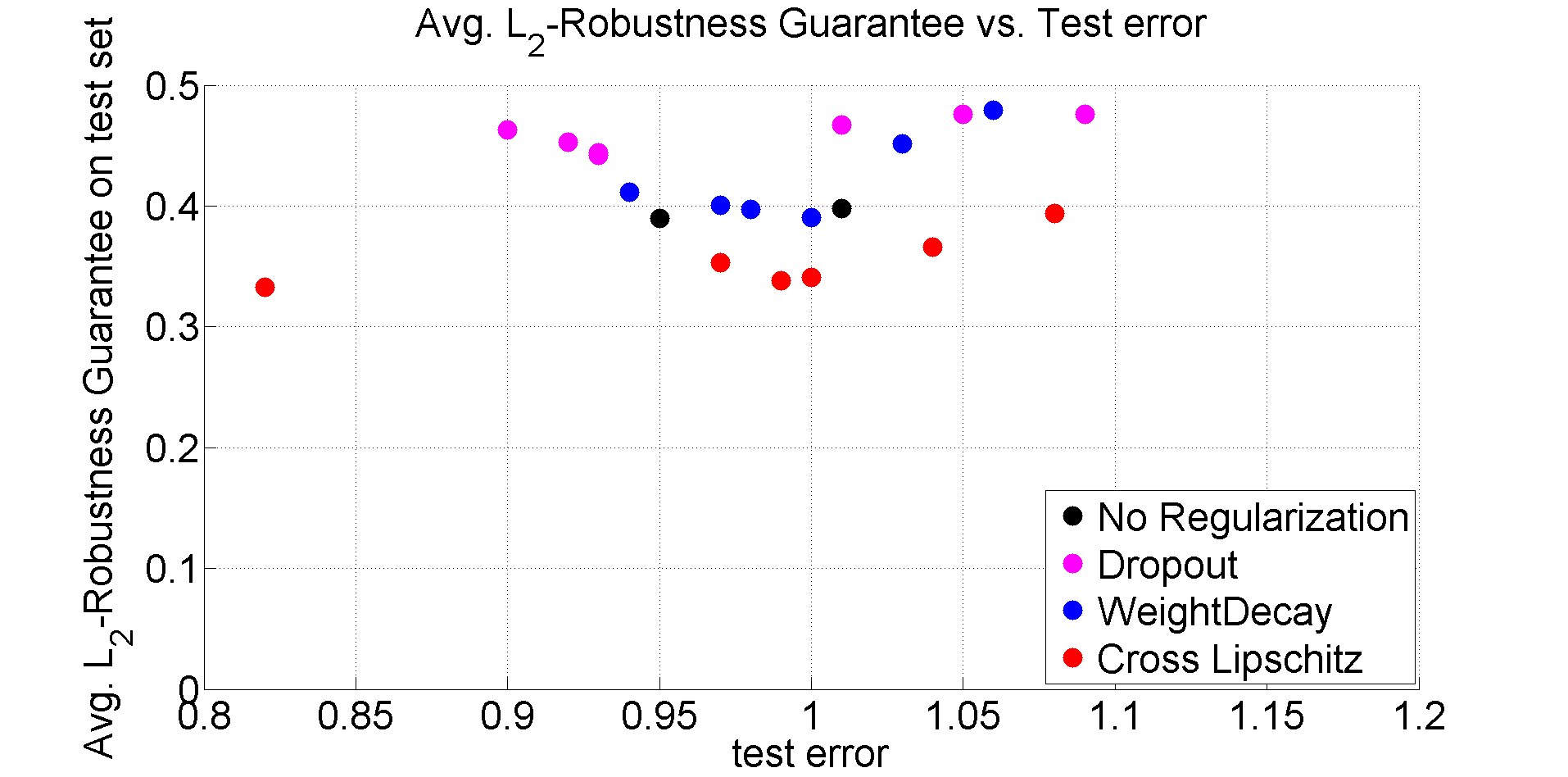}\\
\end{tabular}
\caption{\label{exp:NN-MNIST} \textbf{Neural Networks, Left:} Adversarial resistance wrt to $L_2$-norm on MNIST. \textbf{Right:} Average robustness guarantee wrt to $L_2$-norm on MNIST for different neural networks (one hidden layer, 1024 HU) and hyperparameters. The Cross-Lipschitz regularization leads to better robustness with similar or better prediction
performance. \textbf{Top row:} plain MNIST, \textbf{Middle:} Data Augmentation, \textbf{Bottom:} Adv. Training}
\end{figure}
%
\ifpaper
\begin{figure}
{\scriptsize 
\begin{tabular}{c|c}
 Adversarial Resistance (Upper Bound)  & Robustness Guarantee (Lower Bound)\\
  wrt to $L_2$-norm                             & wrt to $L_2$-norm\\
  \includegraphics[width=0.45\textwidth]{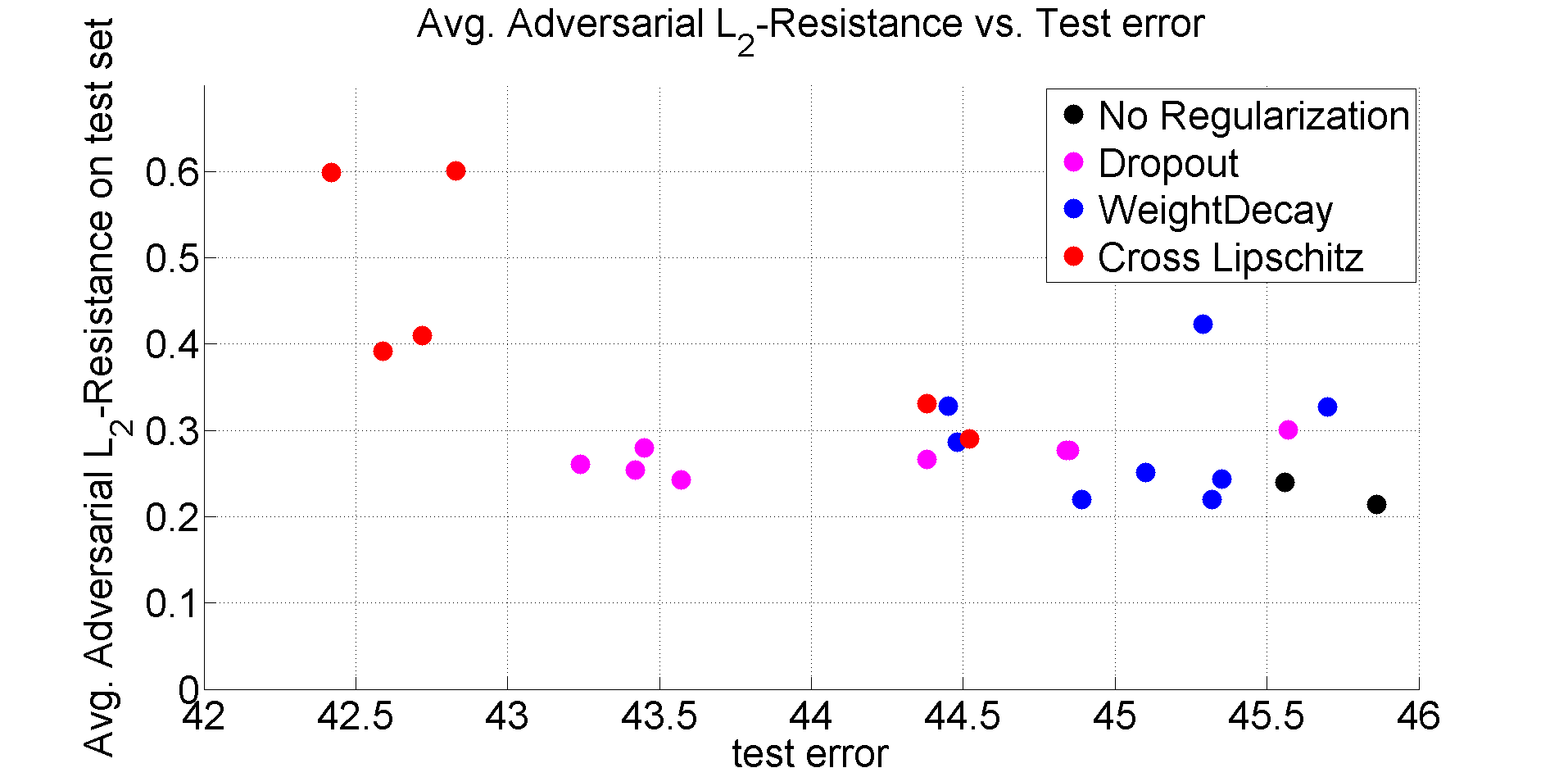}     &\includegraphics[width=0.45\textwidth]{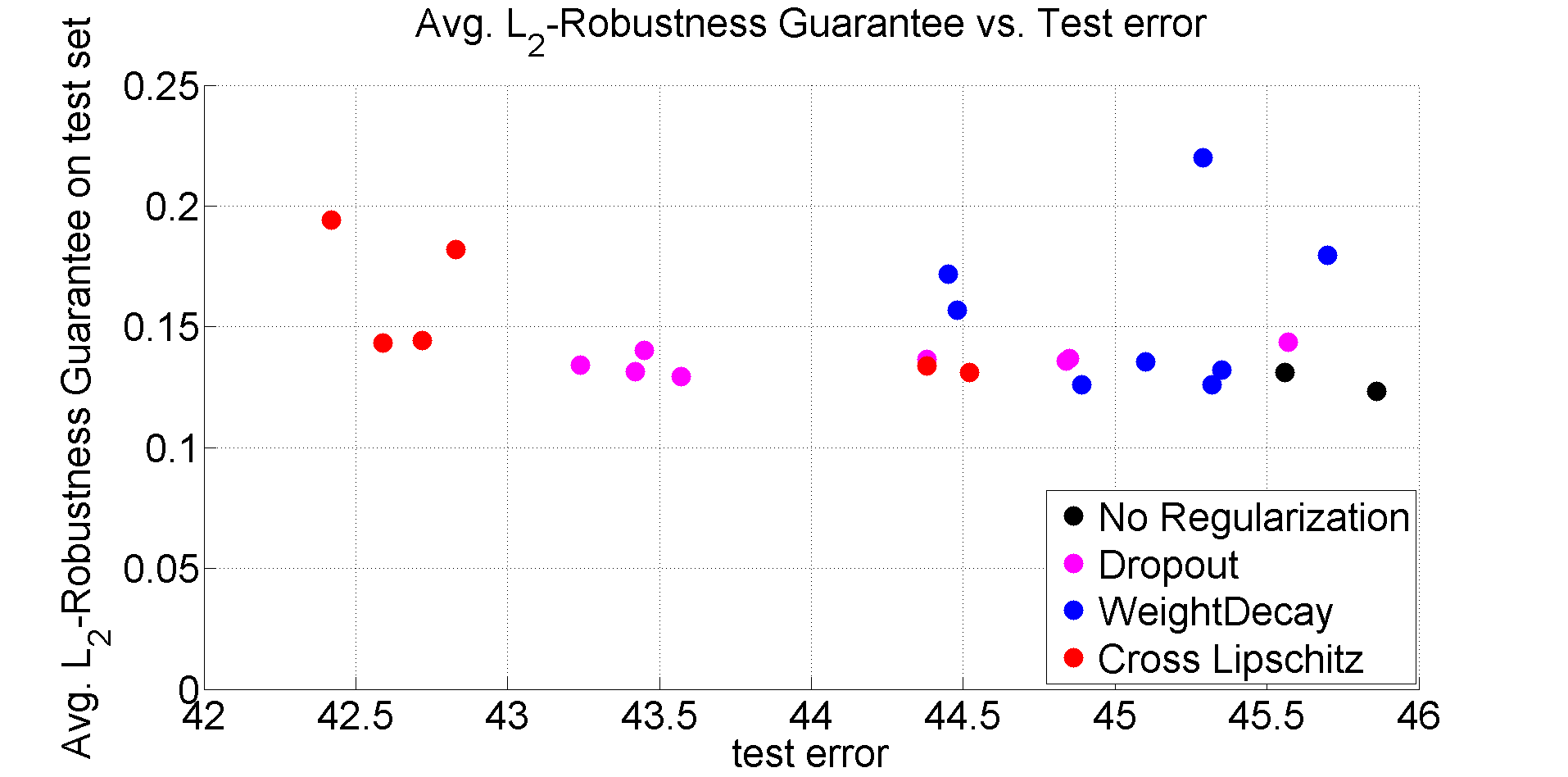}\\
  \includegraphics[width=0.45\textwidth]{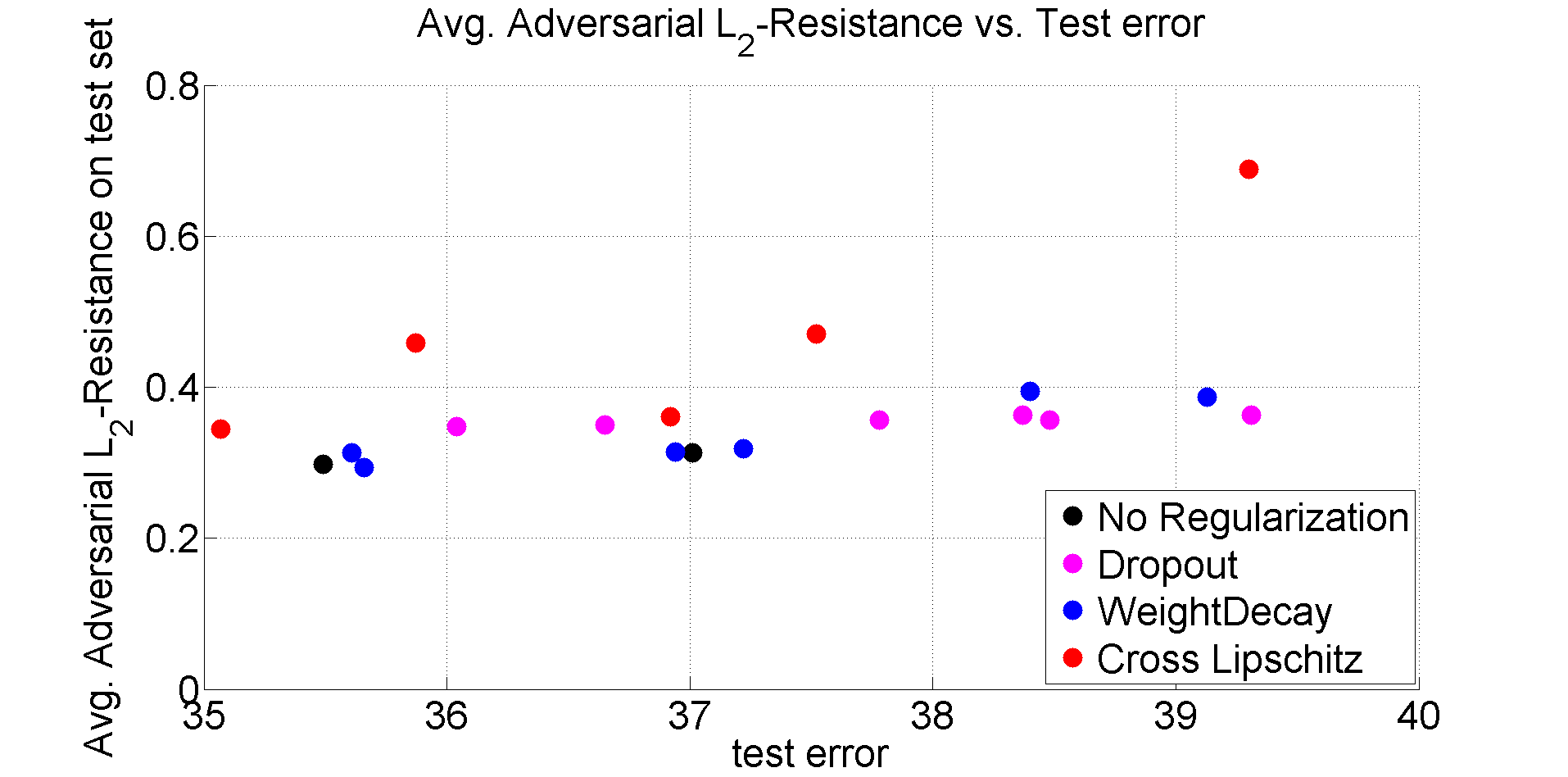}&\includegraphics[width=0.45\textwidth]{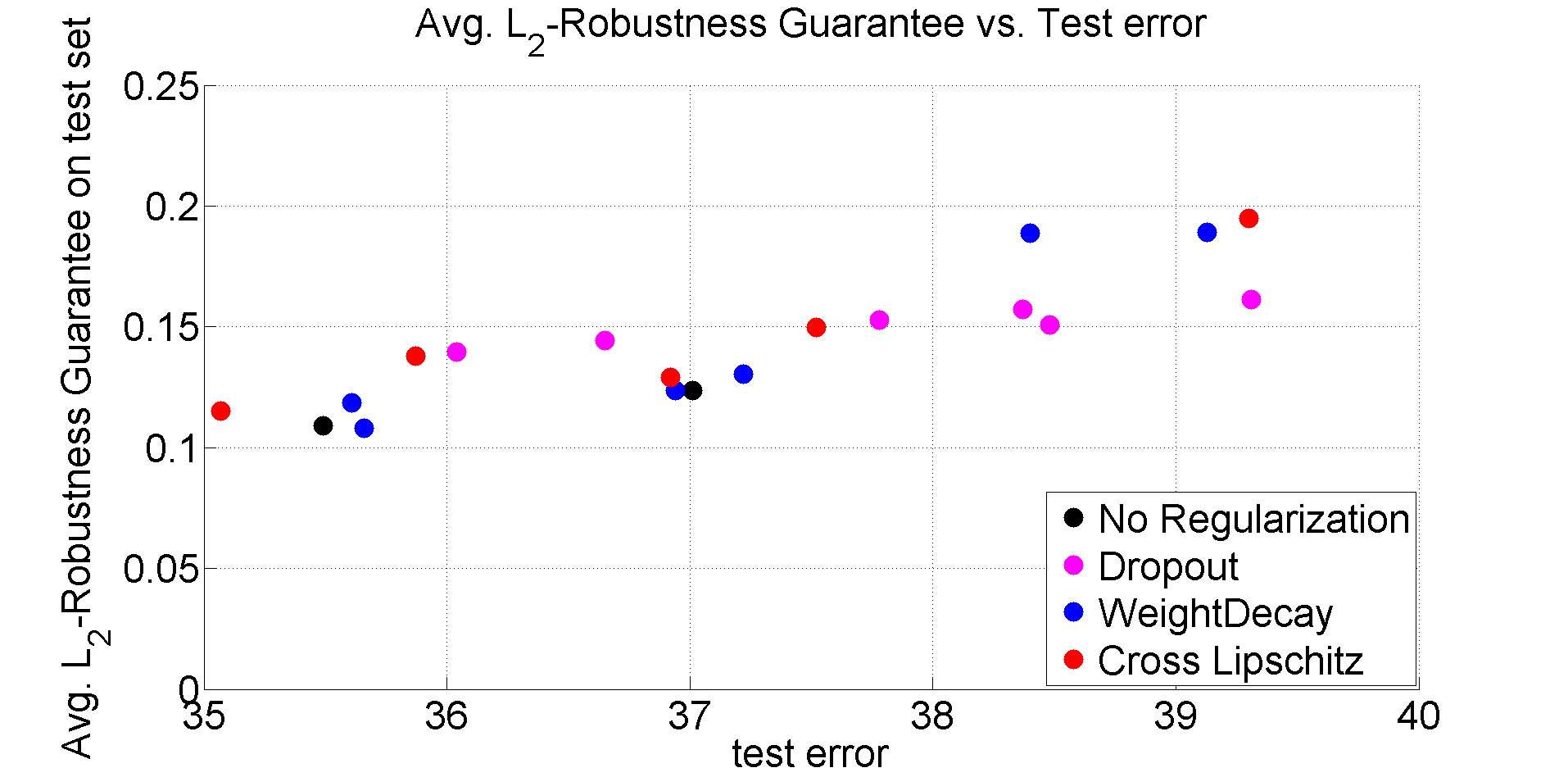}\\
 \includegraphics[width=0.45\textwidth]{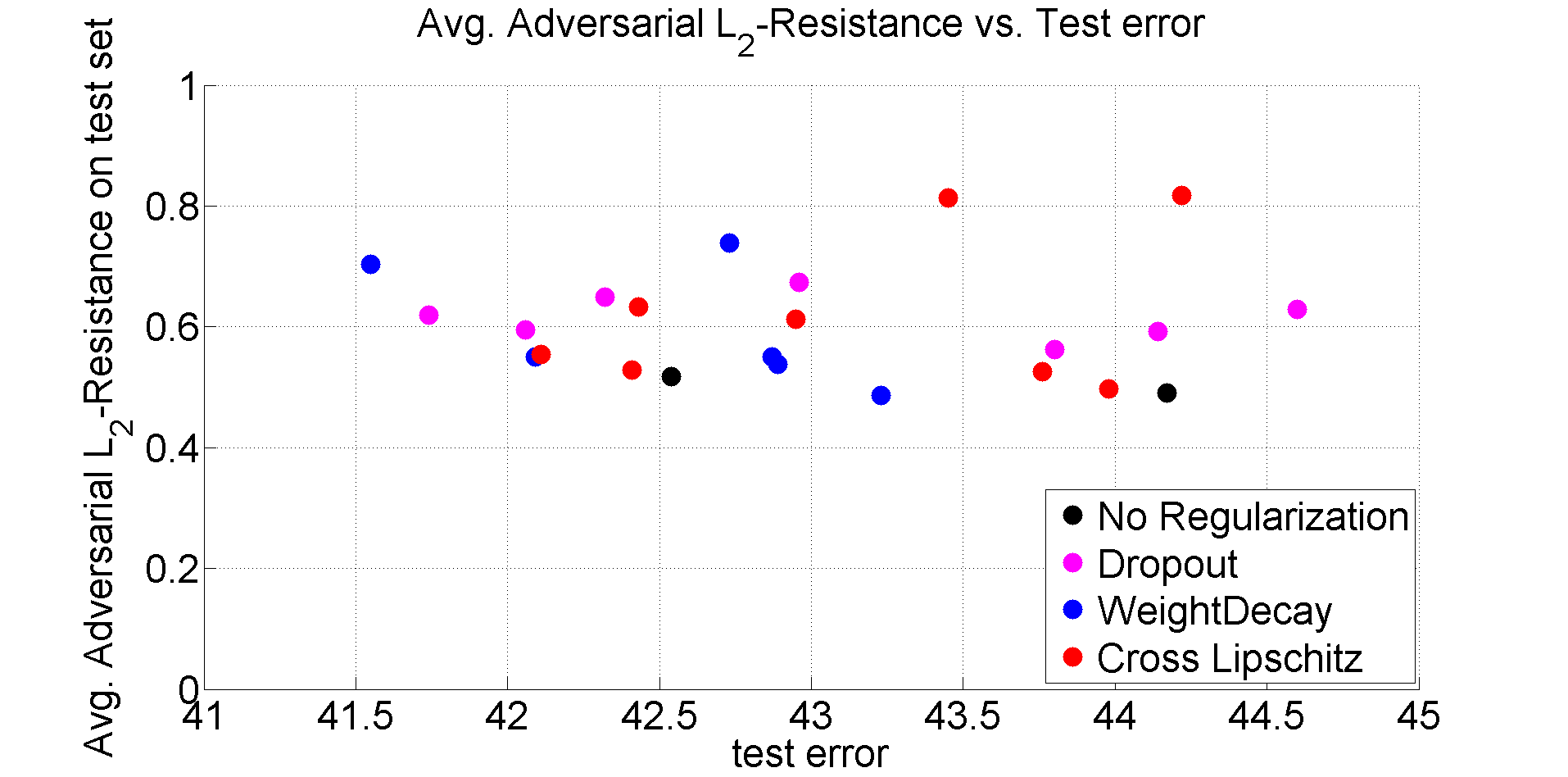}&\includegraphics[width=0.45\textwidth]{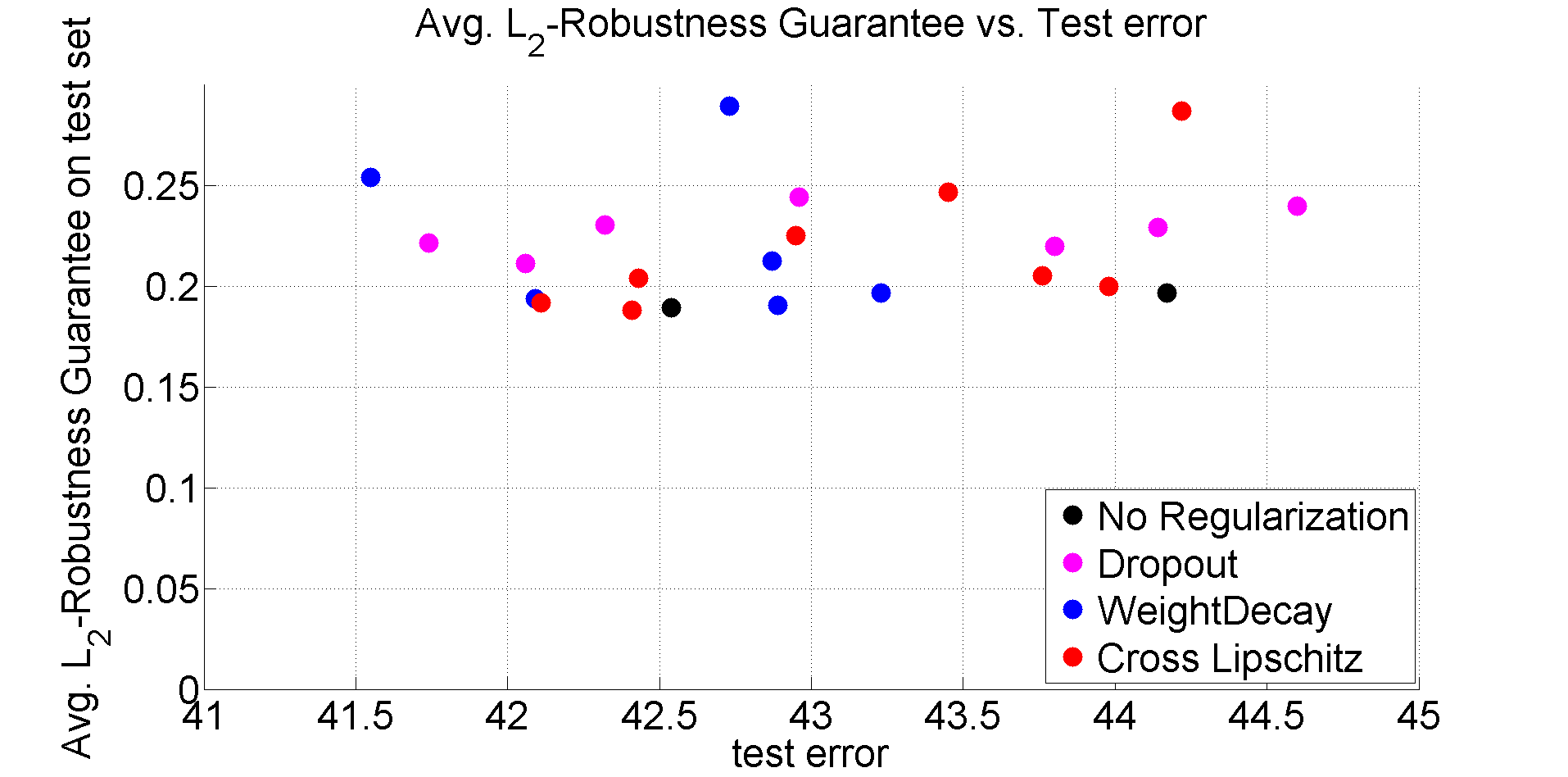}\\
\end{tabular}}
\caption{\label{exp:NN-CIFAR}  \textbf{Left:} Adversarial resistance wrt to $L_2$-norm on test set of CIFAR10. \textbf{Right:} Average robustness guarantee on the test set wrt to $L_2$-norm
for the test set of CIFAR10 for different neural networks (one hidden layer, 1024 HU) and hyperparameters. While Cross-Lipschitz regularization yields good test errors, the guarantees are not necessarily stronger. \textbf{Top row:} CIFAR10 (plain), \textbf{Middle:} CIFAR10 trained with data augmentation, \textbf{Bottom:} Adversarial Training.}
\end{figure}
\fi

\paragraph{Illustration of adversarial samples:} we take one test image from MNIST and apply the adversarial generation
from Section \ref{sec:adv} wrt to the $2$-norm to generate the adversarial samples for the different kernel methods and  neural networks
(plain setting), where we use for each method the parameters leading to best test performance. All classifiers change their originally correct decision to a ``wrong'' one. It is interesting to note that for Cross-Lipschitz regularization (both kernel method and neural network) 
the ``adversarial'' sample is really at the decision boundary between $1$ and $8$ (as predicted) and thus the new decision is actually correct.
This effect is strongest for our Kernel-CL, which also requires the
strongest modification to generate the adversarial sample. The situation is different for neural networks, where the classifiers obtained
from the two standard regularization techniques are still vulnerable, as the adversarial sample is still clearly a 1 for dropout and weight decay.
\begin{table}[b]
\begin{tabular}{ccc} \includegraphics[width=0.23\textwidth]{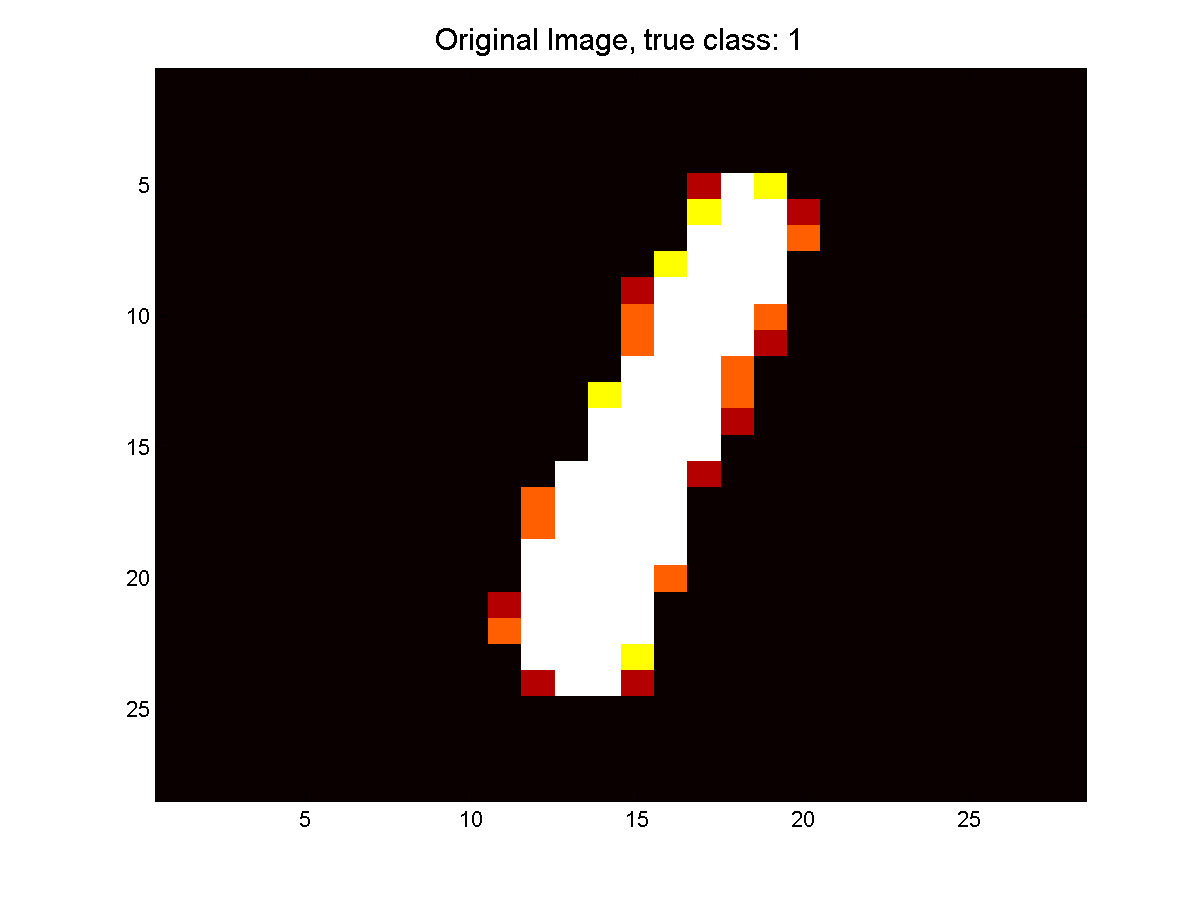}&\includegraphics[width=0.23\textwidth]{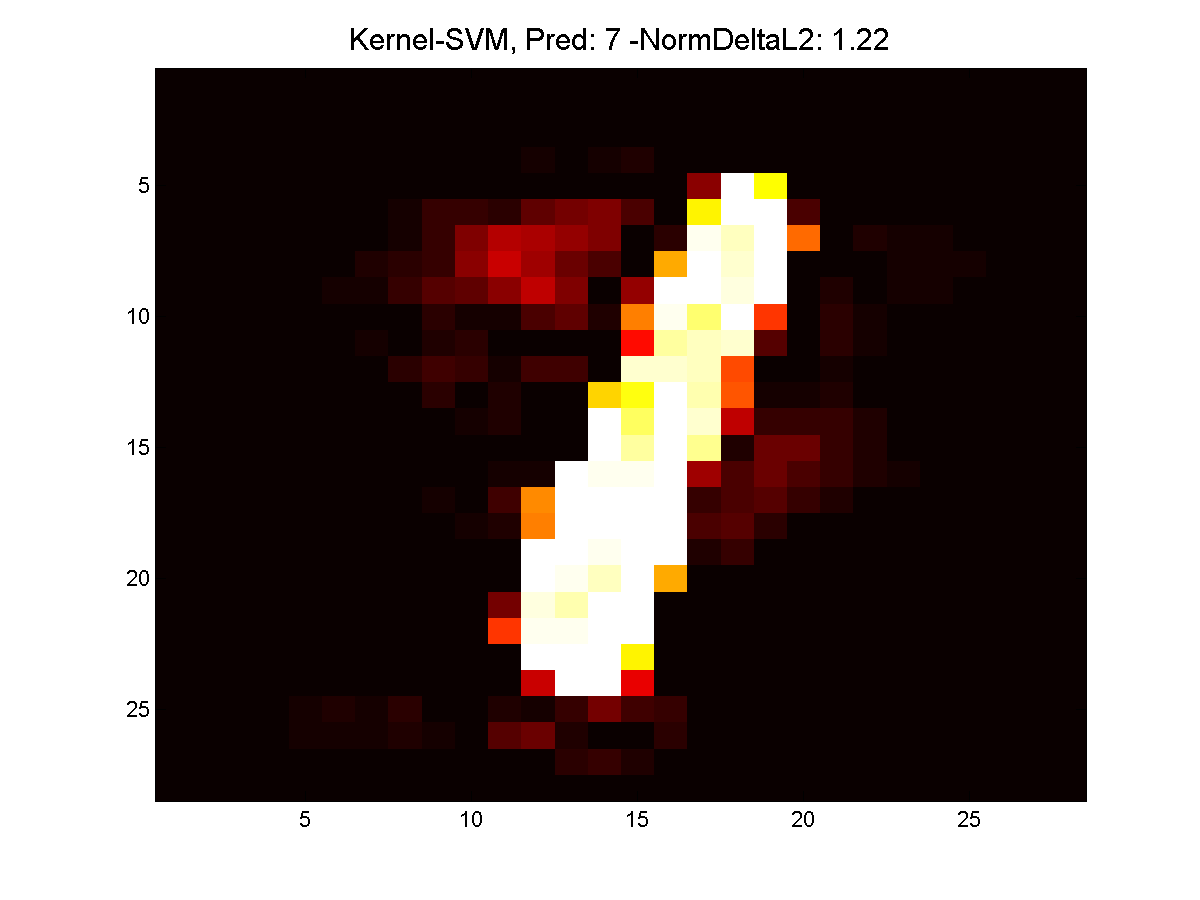}&\includegraphics[width=0.23\textwidth]{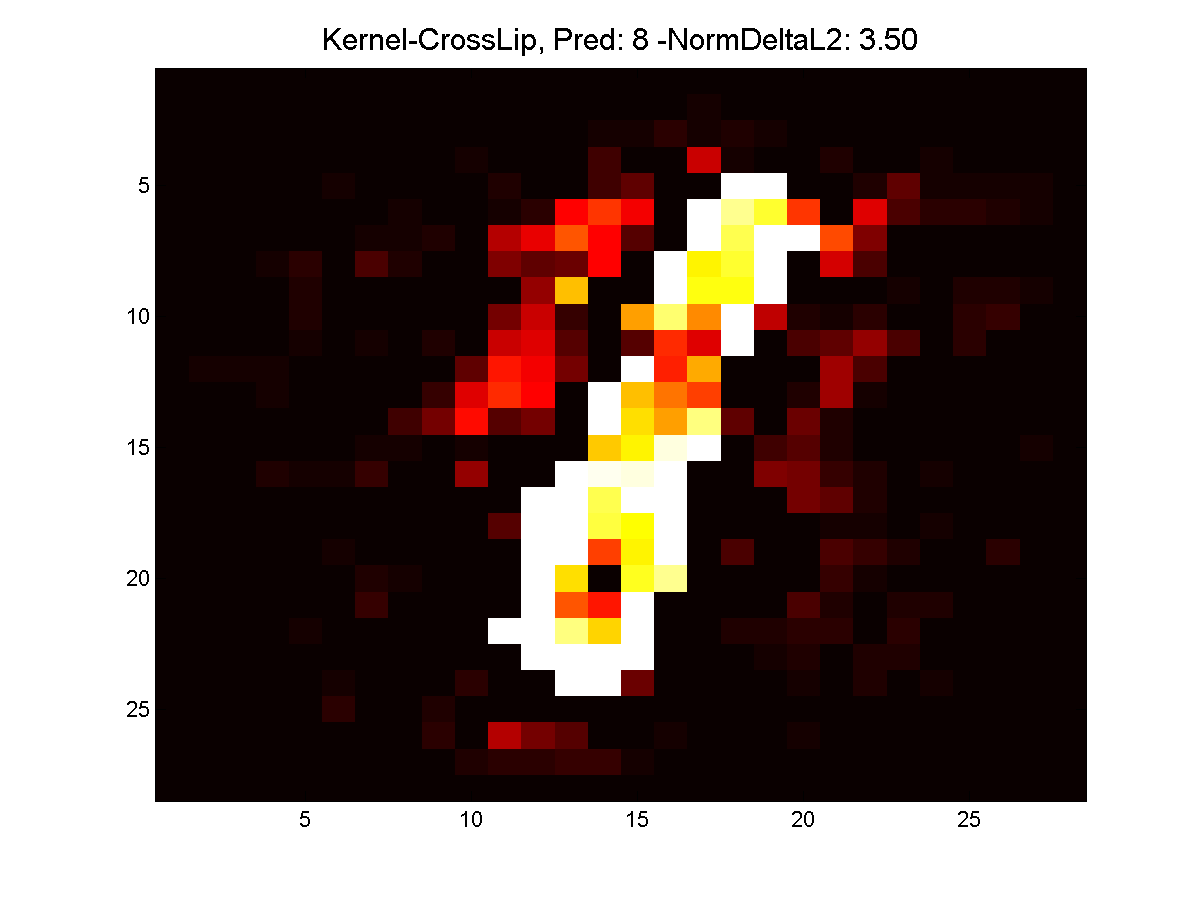}\\
Original, 	Class 1 & K-SVM, Pred:7, $\norm{\delta}_2=1.2$ & K-CL, Pred:8, $\norm{\delta}_2=3.5$\\
\includegraphics[width=0.23\textwidth]{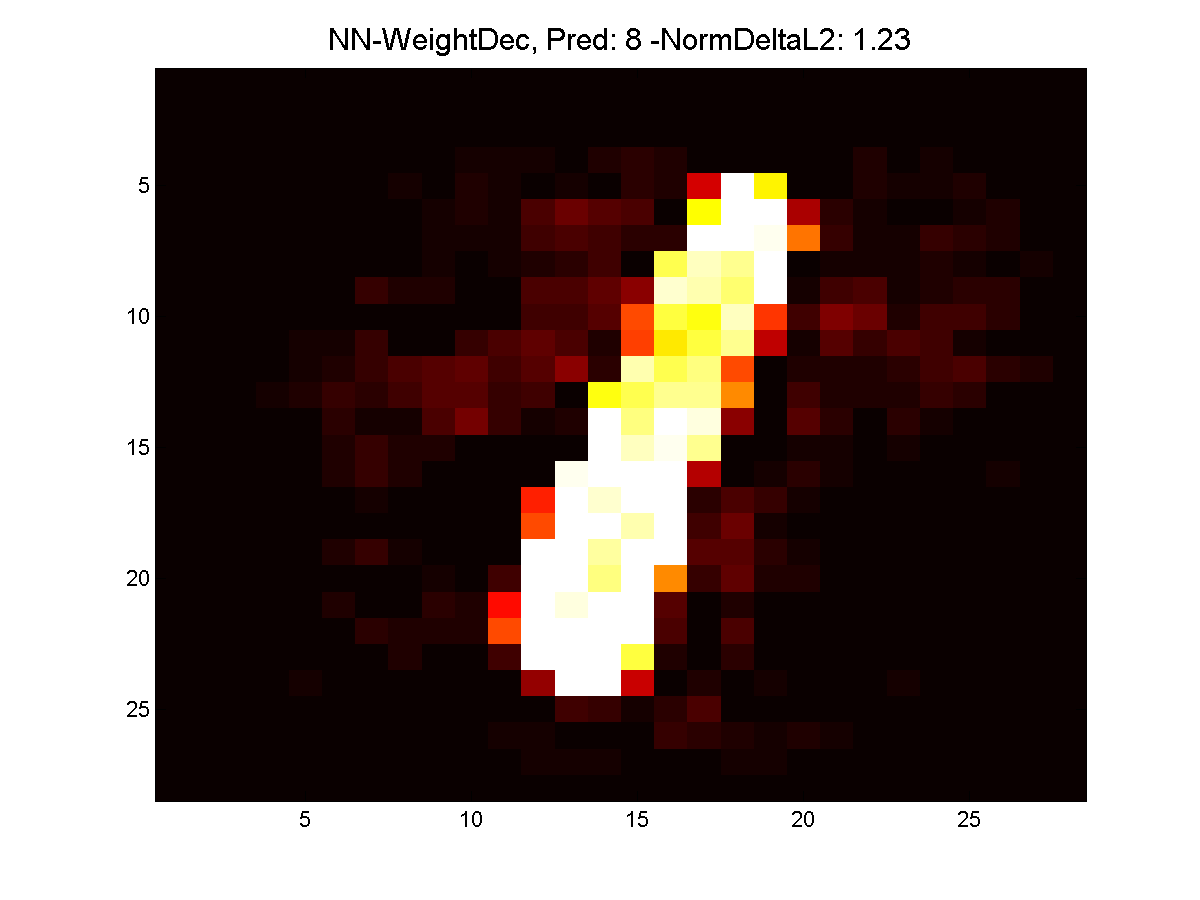}&\includegraphics[width=0.23\textwidth]{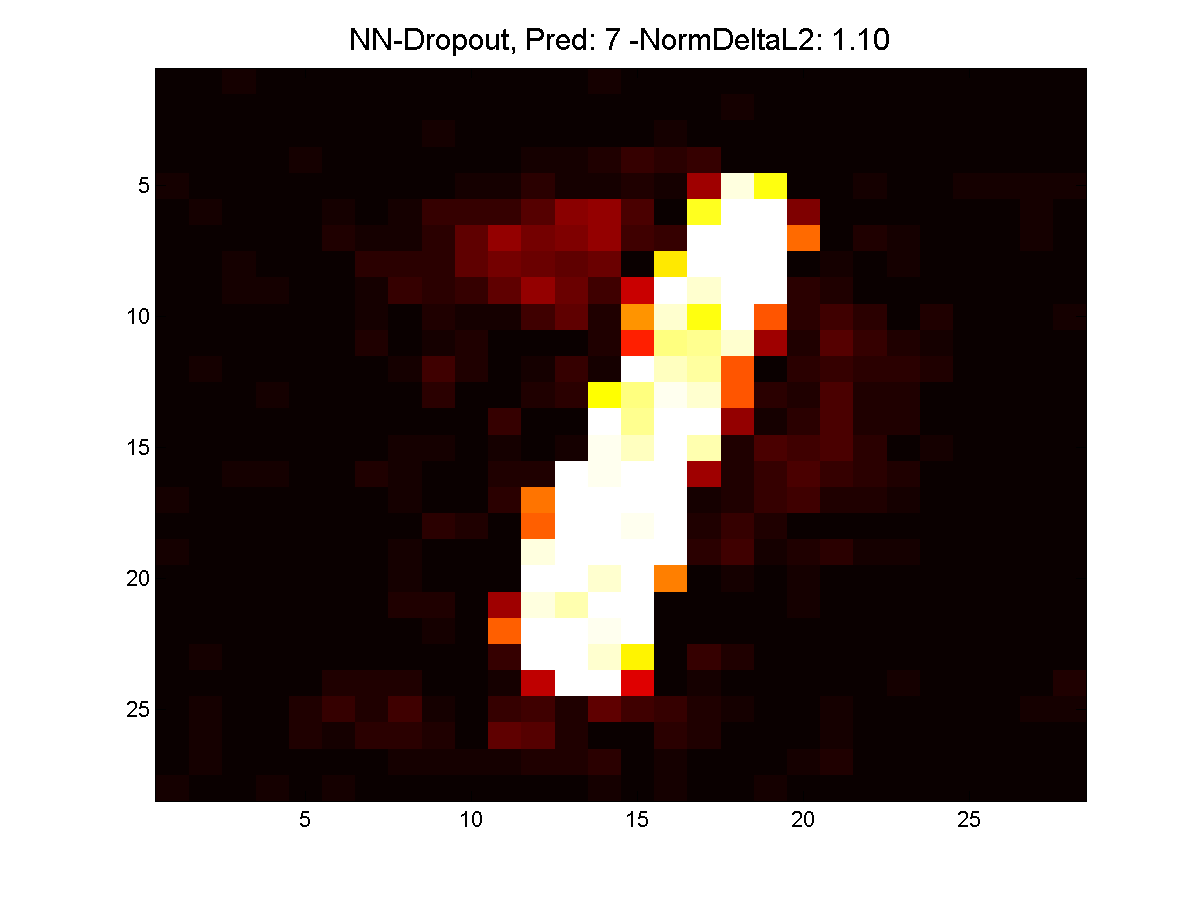}&\includegraphics[width=0.23\textwidth]{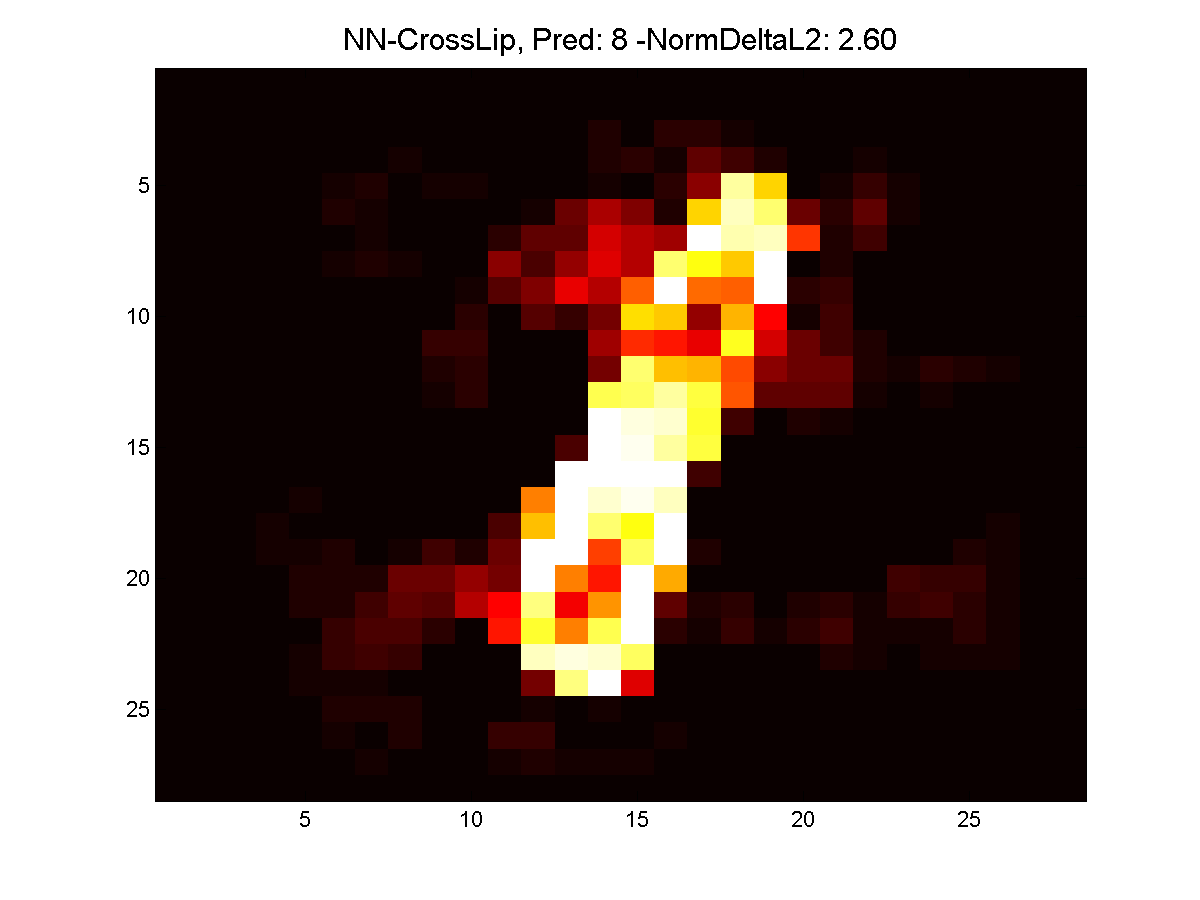}\\
NN-WD, Pred:8, $\norm{\delta}_2=1.2$ & NN-DO, Pred:7, $\norm{\delta}_2=1.1$ & NN-CL, Pred:8, $\norm{\delta}_2=2.6$
\end{tabular}
\captionof{figure}{\label{fig:adv} Top left: original test image, for each classifier we generate the corresponding adversarial sample
which changes the classifier decision (denoted as Pred). Note that for Cross-Lipschitz regularization this new decision makes (often) sense, whereas
for the neural network models (weight decay/dropout) the change is so small that the new decision is clearly wrong.}
\end{table}
\ifpaper
We show further examples below.
\begin{center}
\begin{tabular}{ccc} \includegraphics[width=0.23\textwidth]{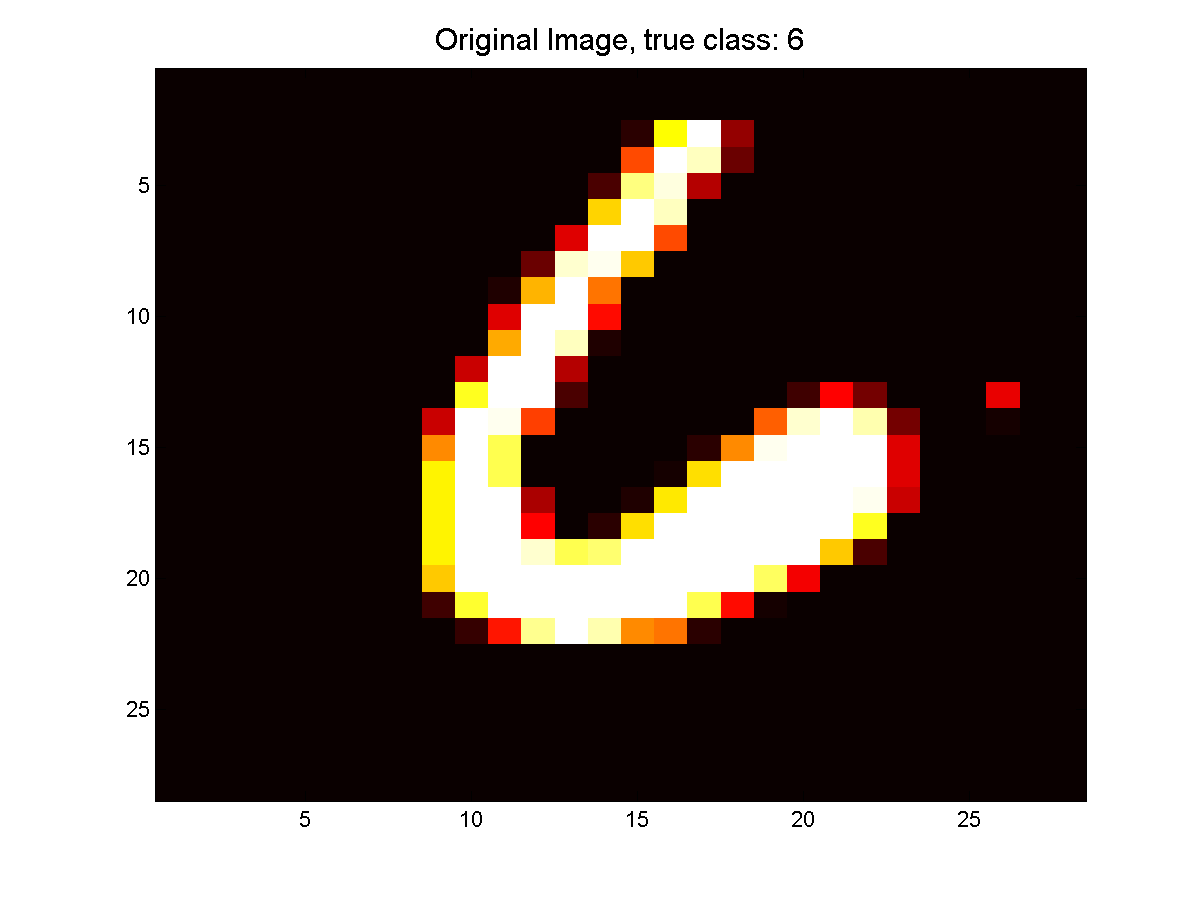}&\includegraphics[width=0.23\textwidth]{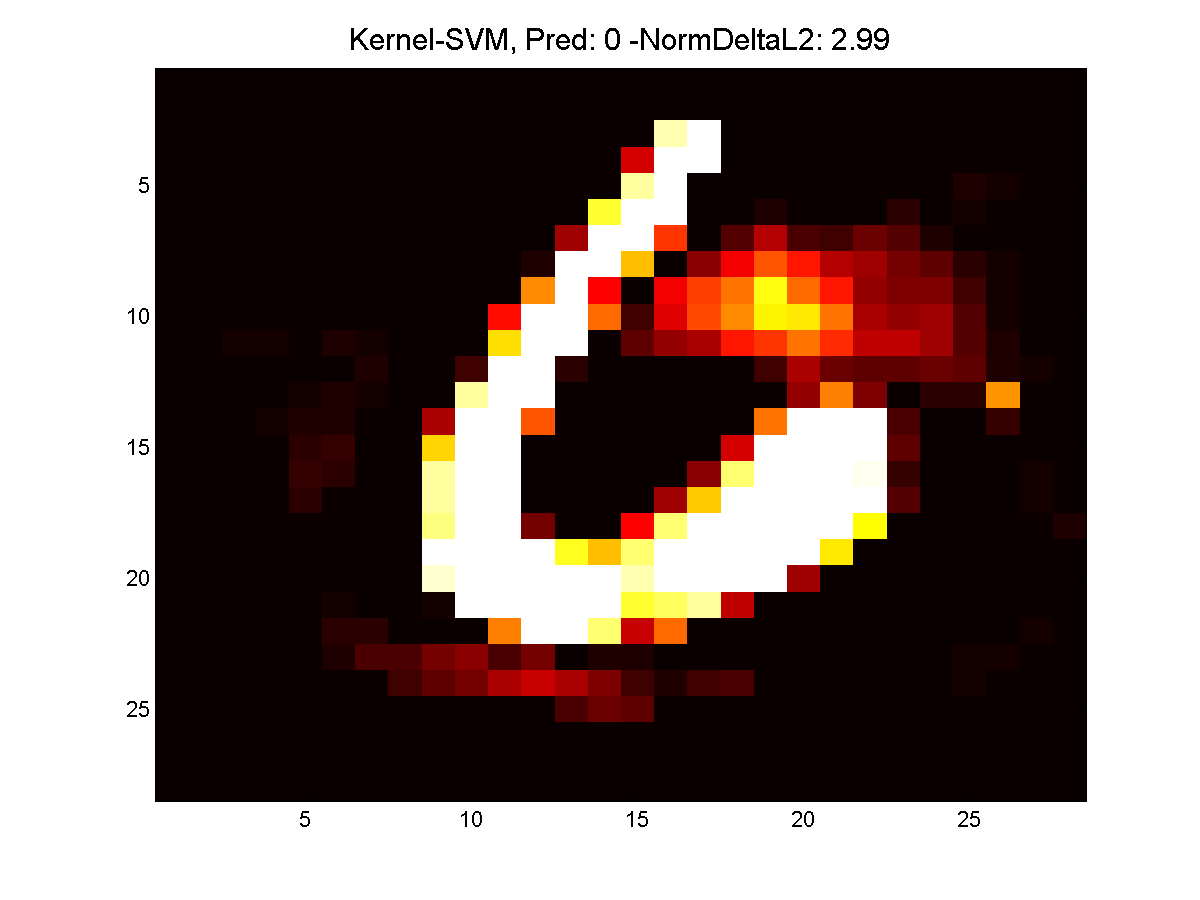}&\includegraphics[width=0.23\textwidth]{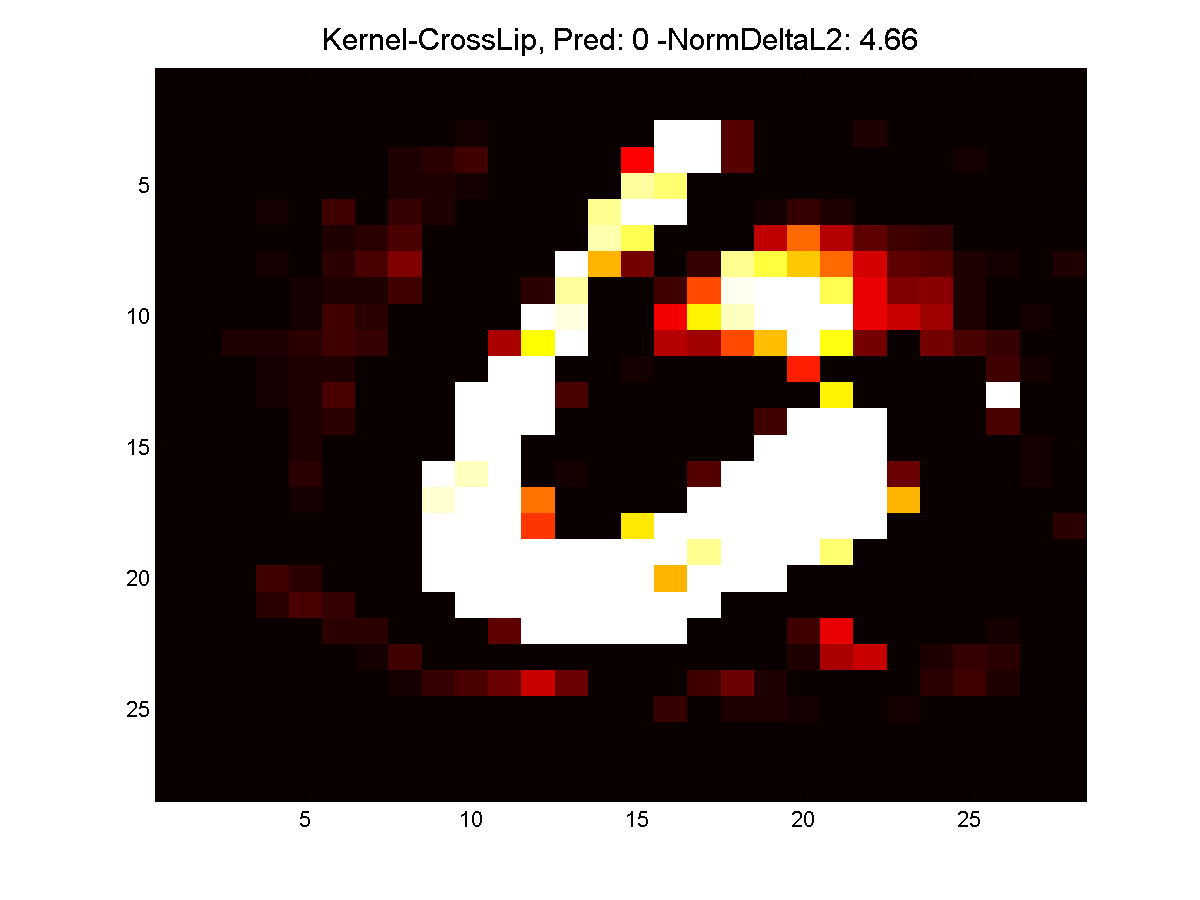}\\
Original, 	Class 6 & K-SVM, Pred:0, $\norm{\delta}_2=3.0$ & K-CL, Pred:0, $\norm{\delta}_2=4.7$\\
\includegraphics[width=0.23\textwidth]{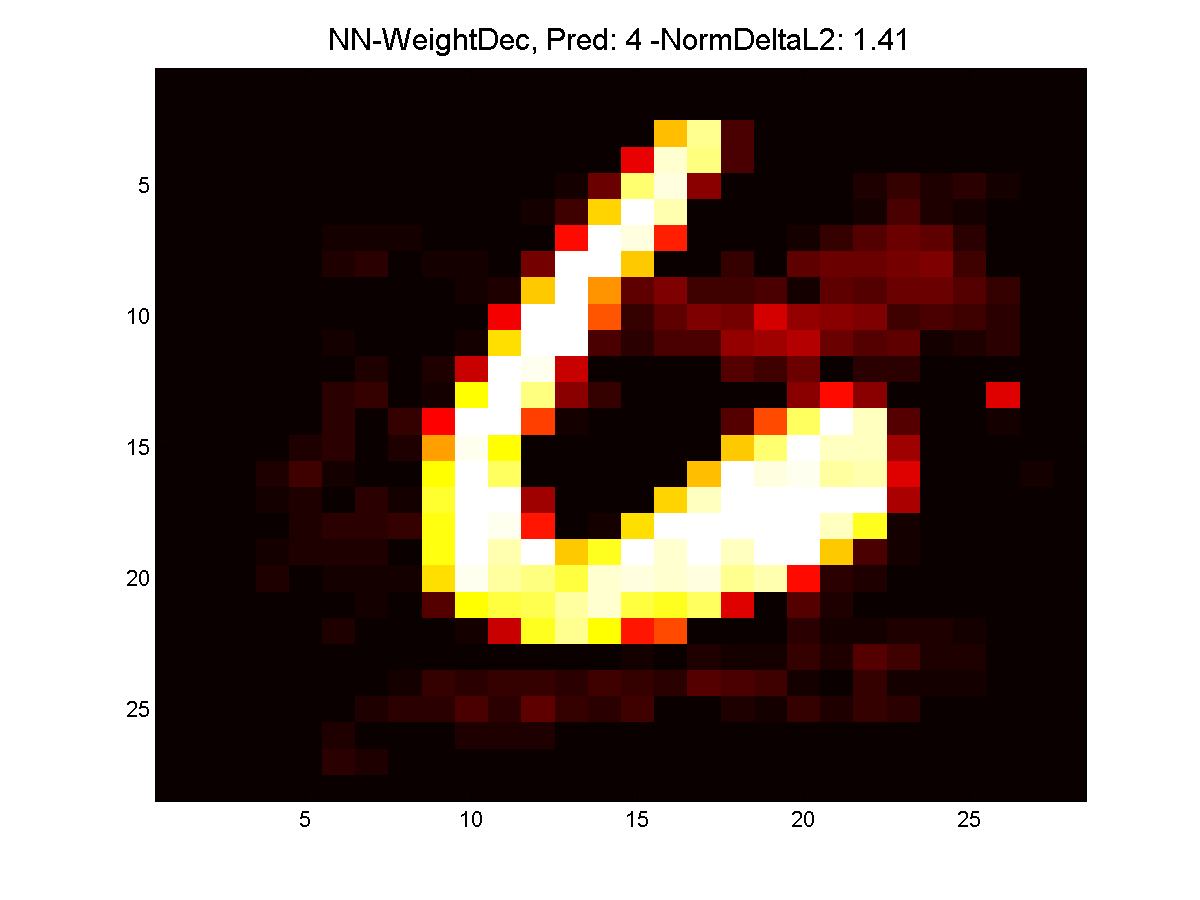}&\includegraphics[width=0.23\textwidth]{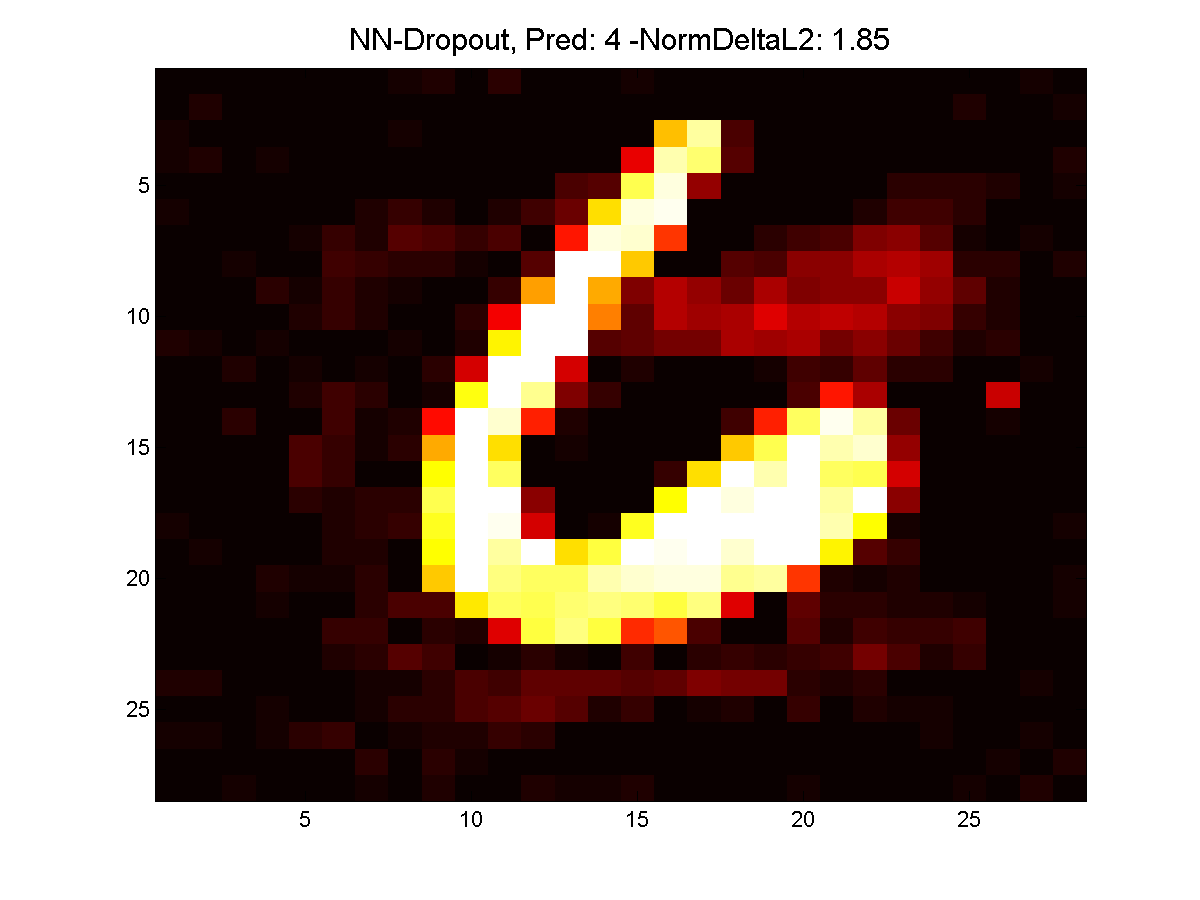}&\includegraphics[width=0.23\textwidth]{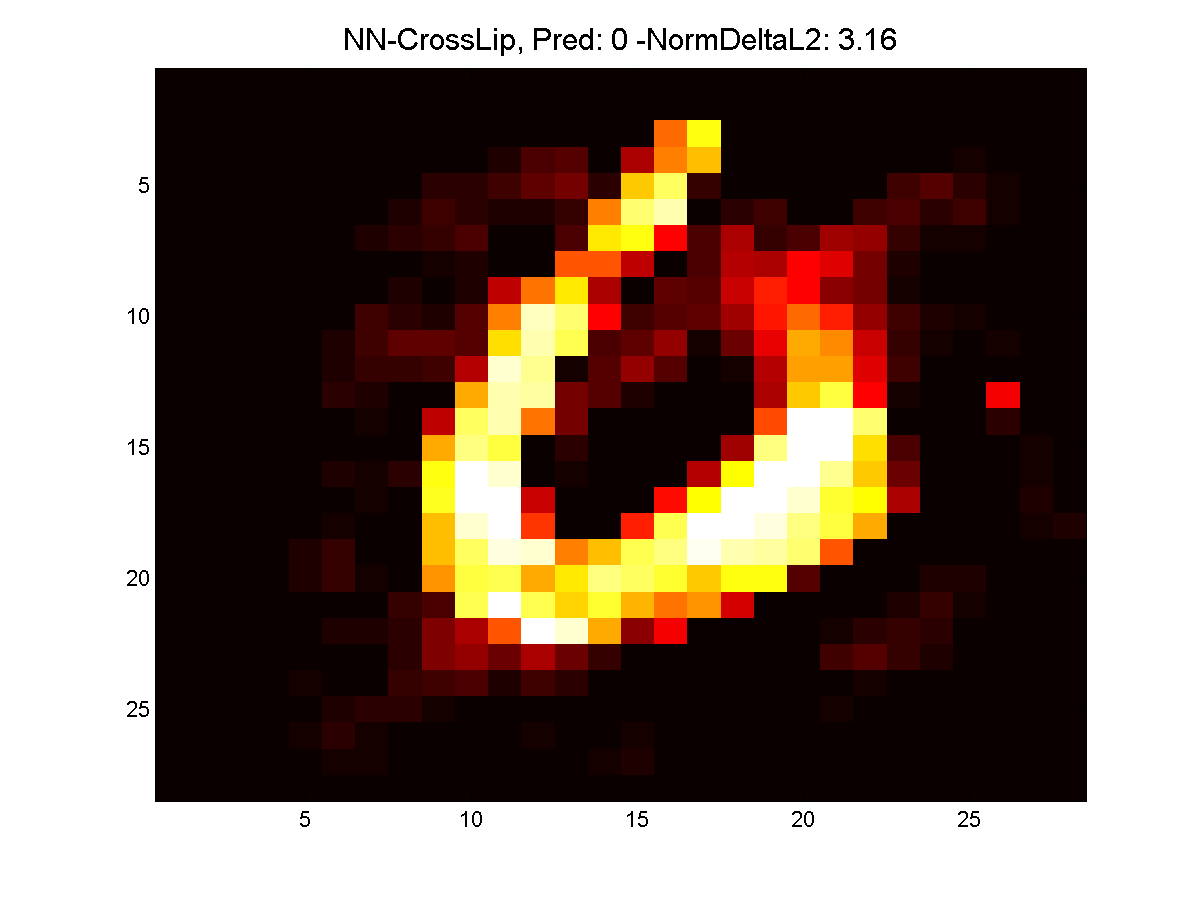}\\
NN-WD, Pred:4, $\norm{\delta}_2=1.4$ & NN-DO, Pred:4, $\norm{\delta}_2=1.9$ & NN-CL, Pred:0, $\norm{\delta}_2=3.2$
\end{tabular}
\captionof{figure}{Top left: original test image, for each classifier we generate the corresponding adversarial sample
which changes the classifier decision (denoted as Pred). Note that for the kernel methods this new decision makes sense, whereas
for all neural network models the change is so small that the new decision is clearly wrong.}
\end{center}
\begin{center}
\begin{tabular}{ccc} \includegraphics[width=0.23\textwidth]{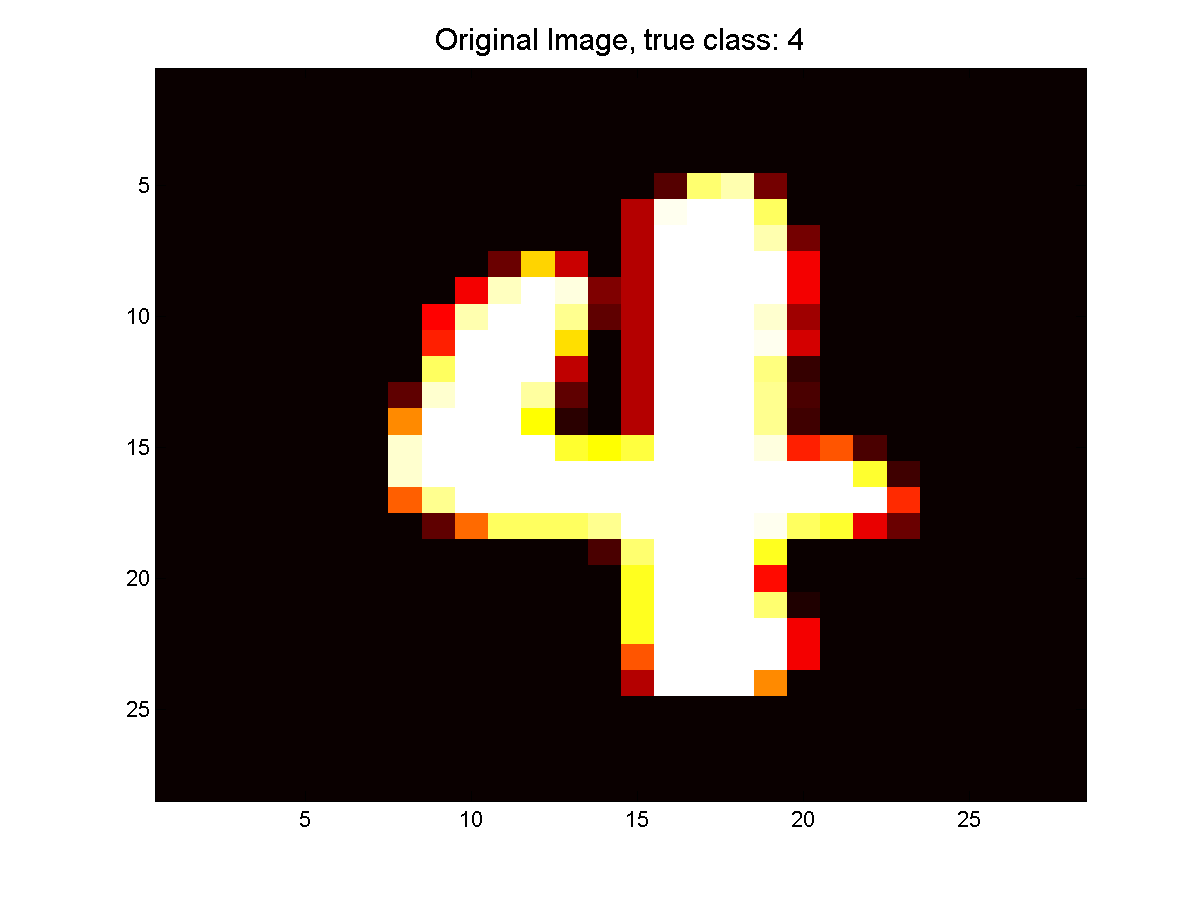}&\includegraphics[width=0.23\textwidth]{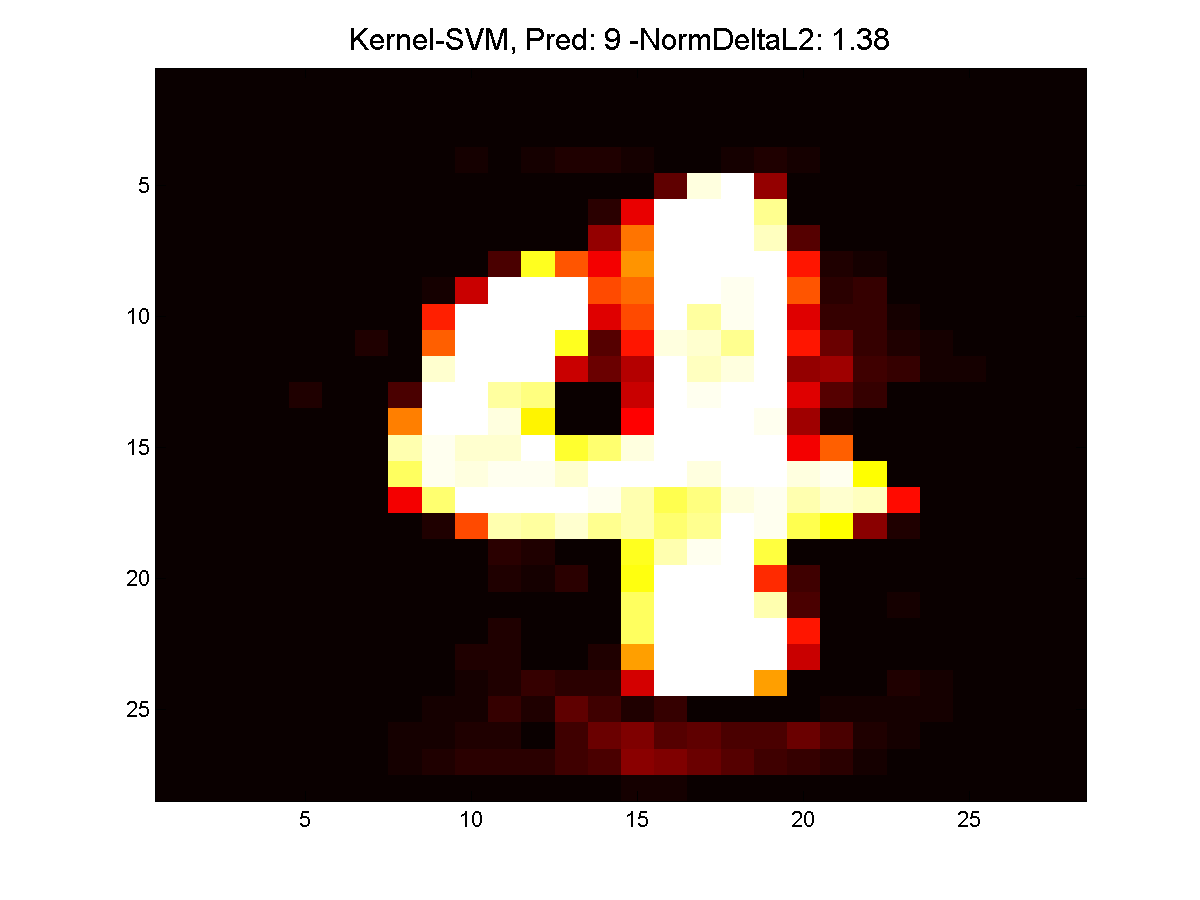}&\includegraphics[width=0.23\textwidth]{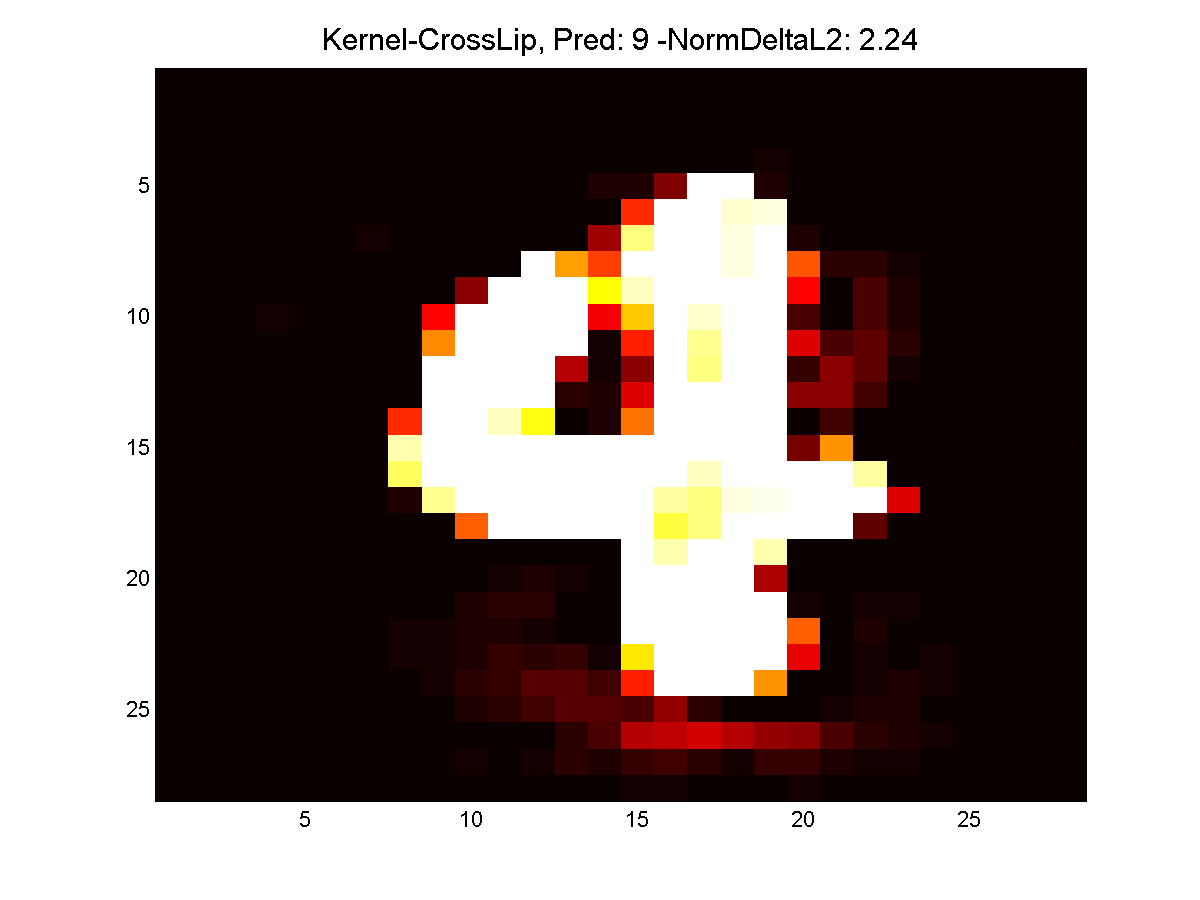}\\
Original, 	Class 4 & K-SVM, Pred:9, $\norm{\delta}_2=1.4$ & K-CL, Pred:9, $\norm{\delta}_2=2.2$\\
\includegraphics[width=0.23\textwidth]{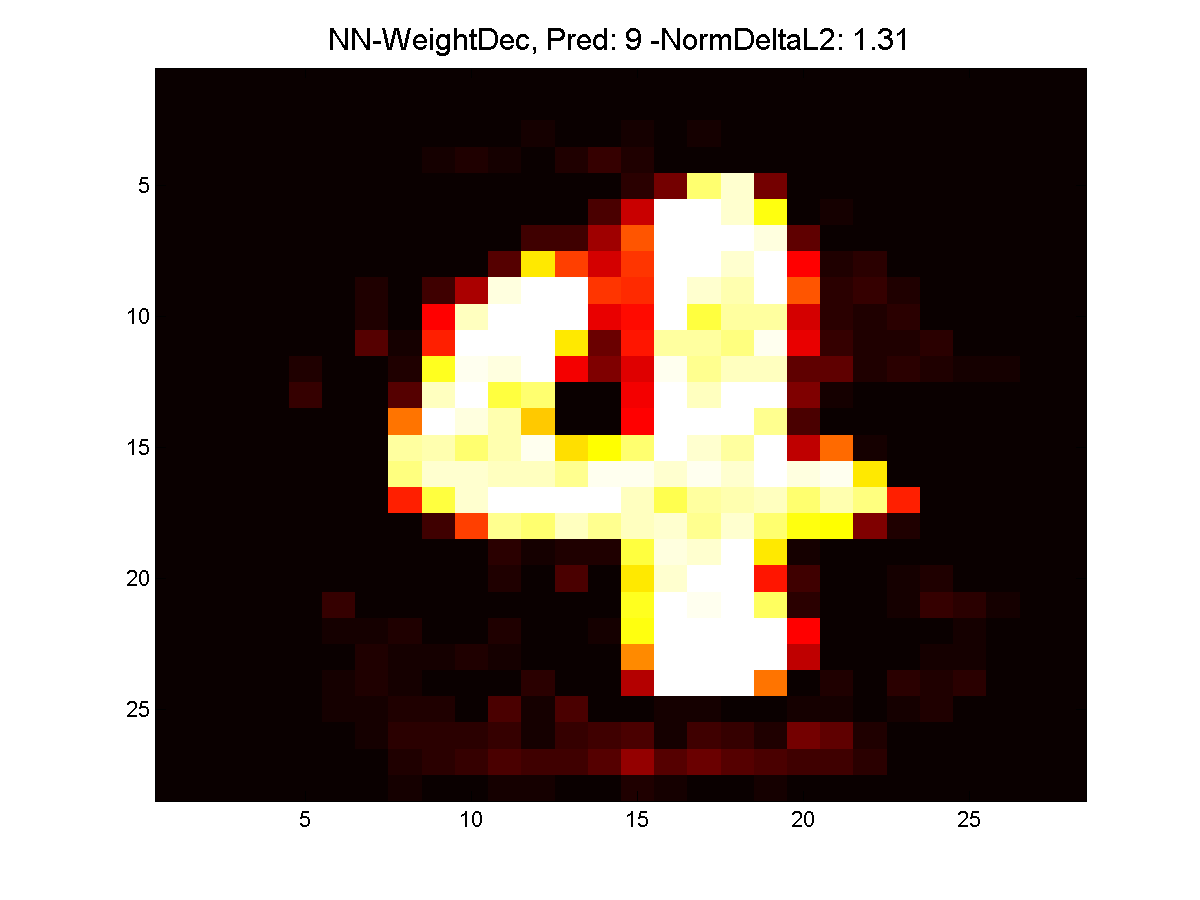}&\includegraphics[width=0.23\textwidth]{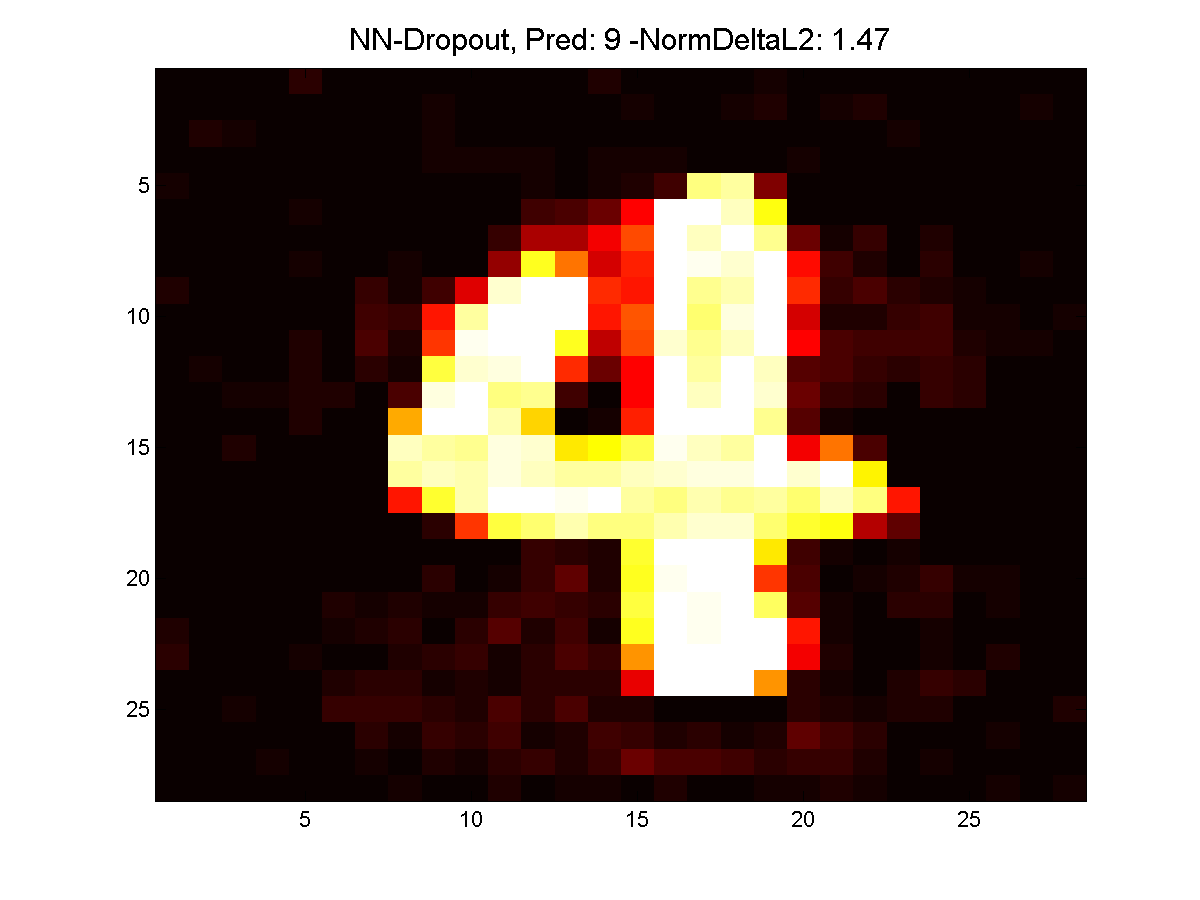}&\includegraphics[width=0.23\textwidth]{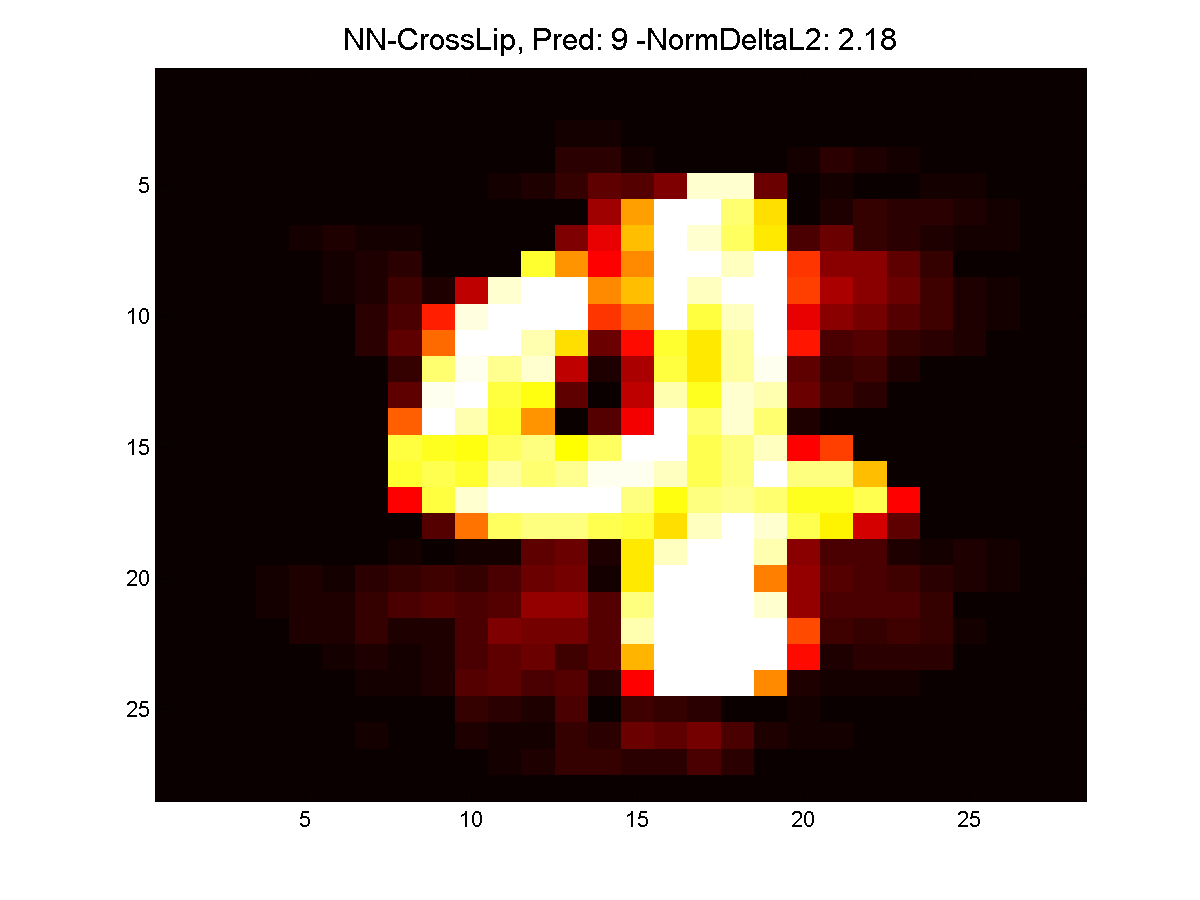}\\
NN-WD, Pred:9, $\norm{\delta}_2=1.3$ & NN-DO, Pred:9, $\norm{\delta}_2=1.5$ & NN-CL, Pred:9, $\norm{\delta}_2=2.2$
\end{tabular}
\captionof{figure}{Top left: original test image, for each classifier we generate the corresponding adversarial sample
which changes the classifier decision (denoted as Pred). Note that for the kernel methods this new decision makes sense, whereas
for all neural network models the change is so small that the new decision is clearly wrong.}
\end{center}
\begin{center}
\begin{tabular}{ccc} \includegraphics[width=0.23\textwidth]{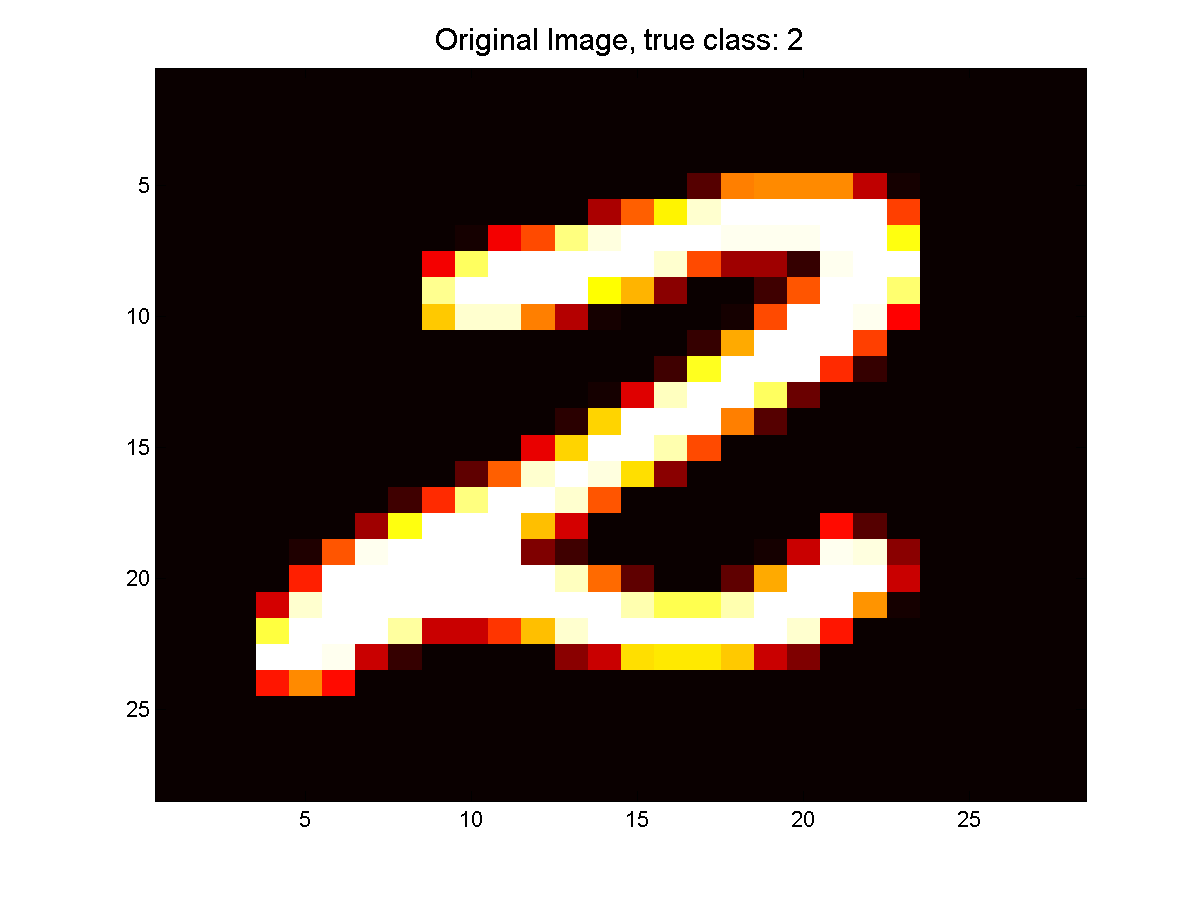}&\includegraphics[width=0.23\textwidth]{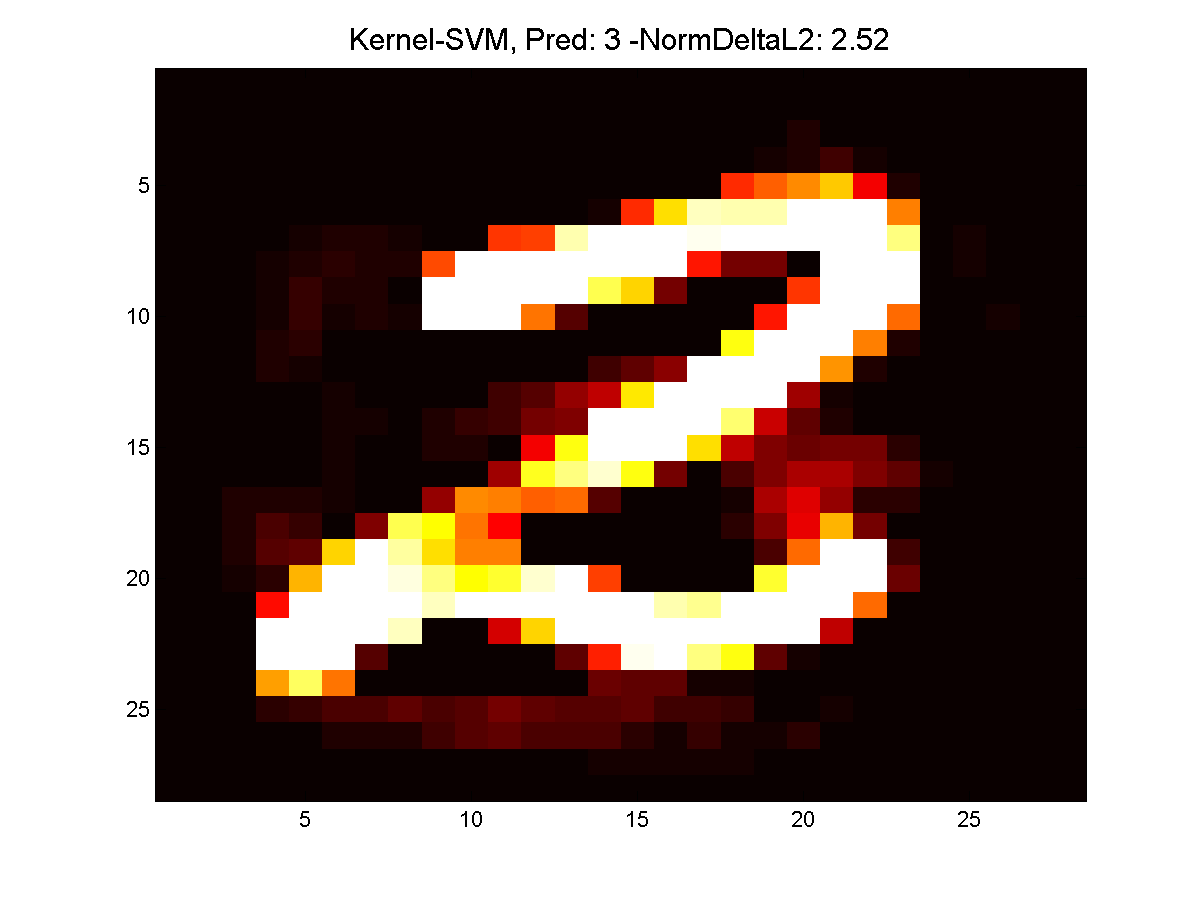}&\includegraphics[width=0.23\textwidth]{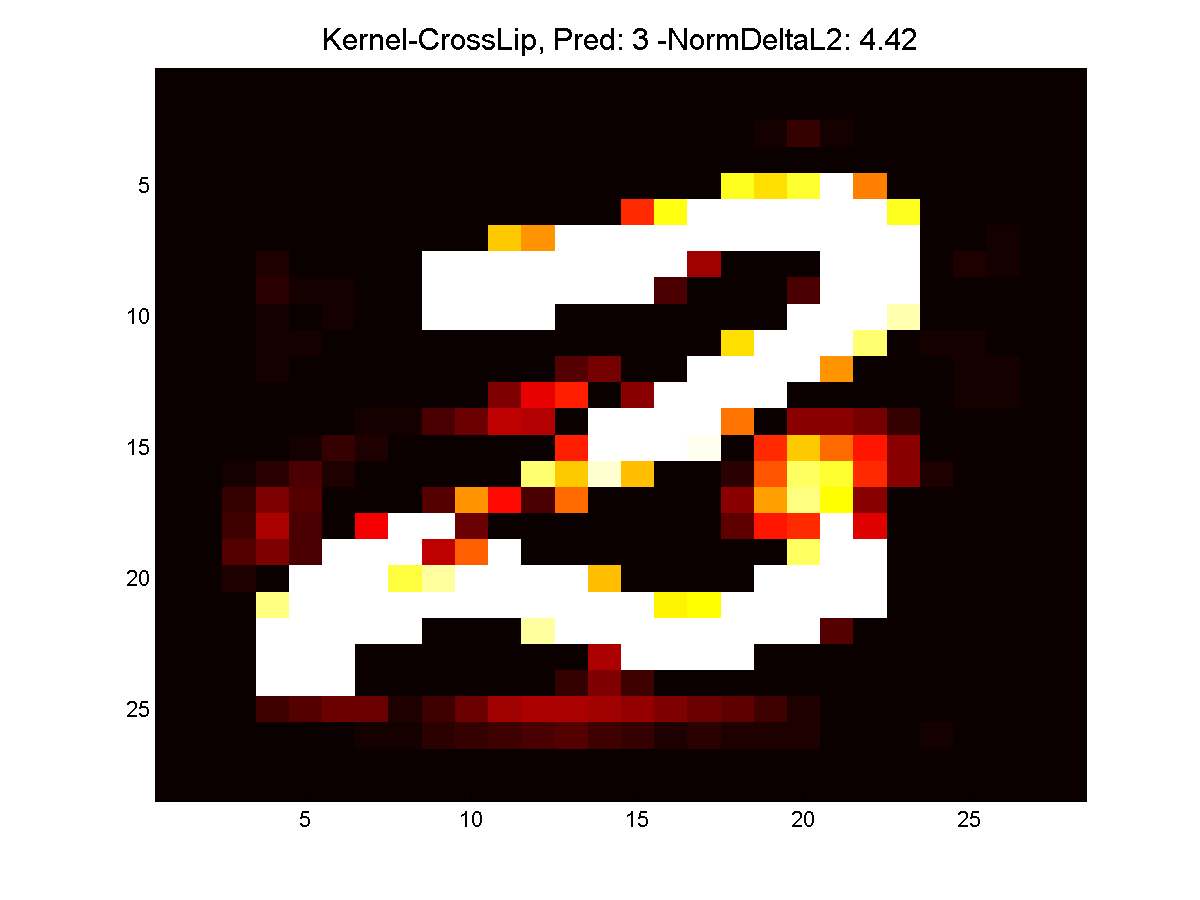}\\
Original, 	Class 2 & K-SVM, Pred:3, $\norm{\delta}_2=2.5$ & K-CL, Pred:3, $\norm{\delta}_2=4.4$\\
\includegraphics[width=0.23\textwidth]{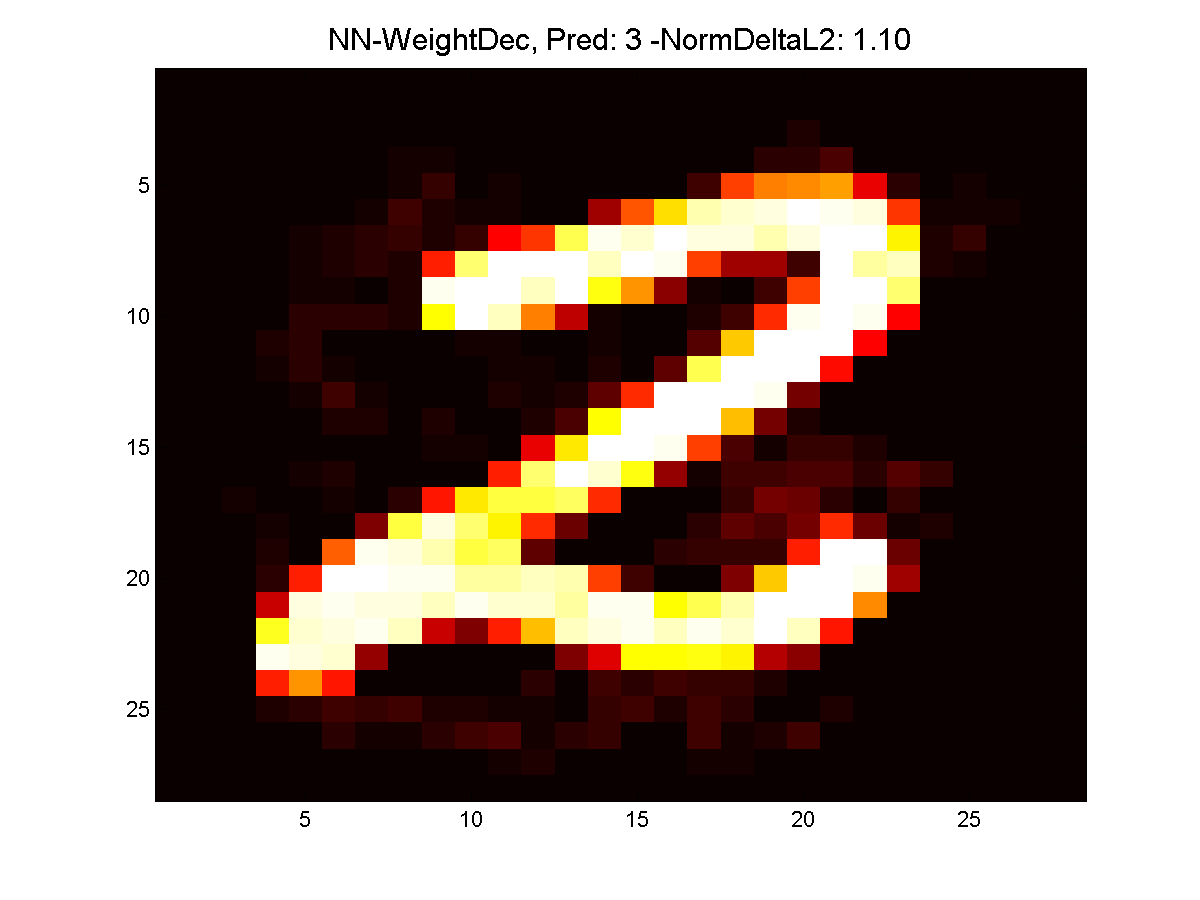}&\includegraphics[width=0.23\textwidth]{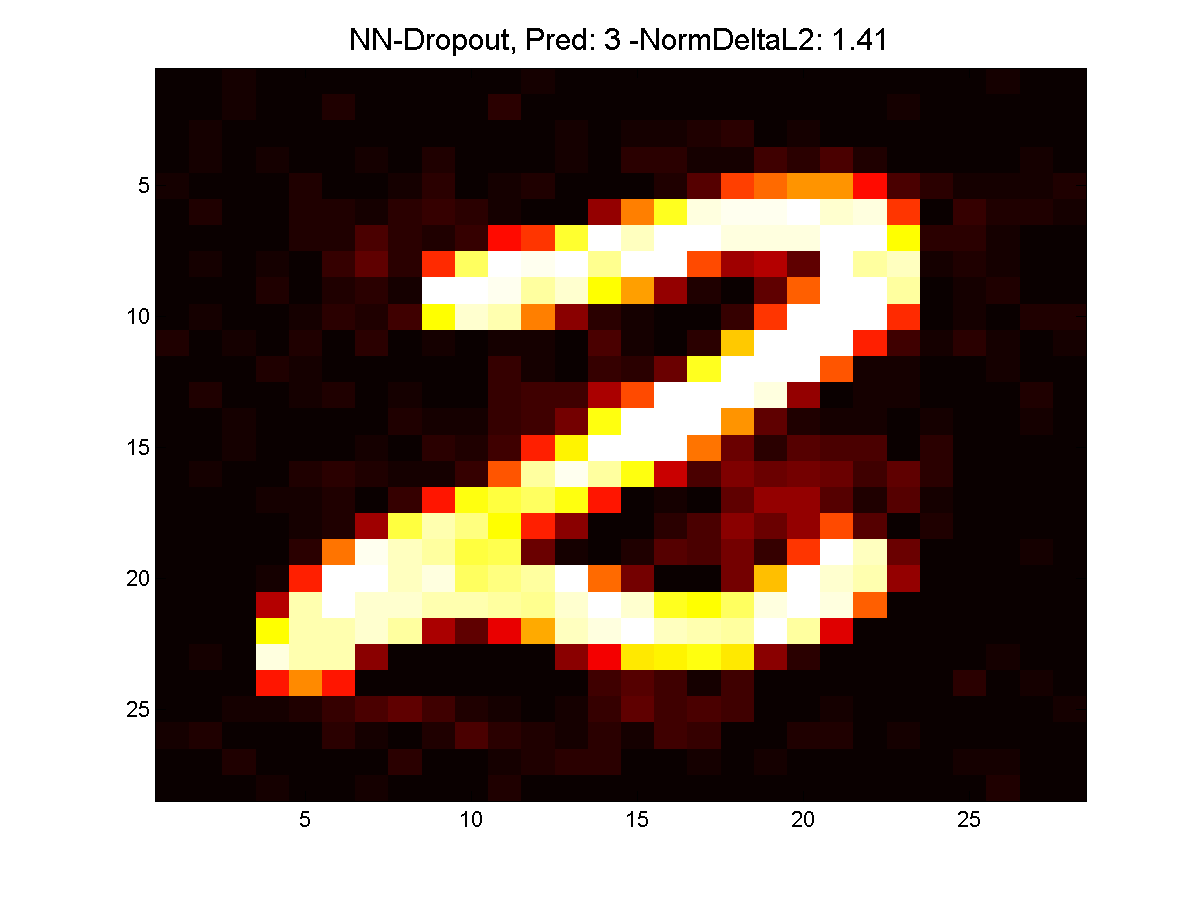}&\includegraphics[width=0.23\textwidth]{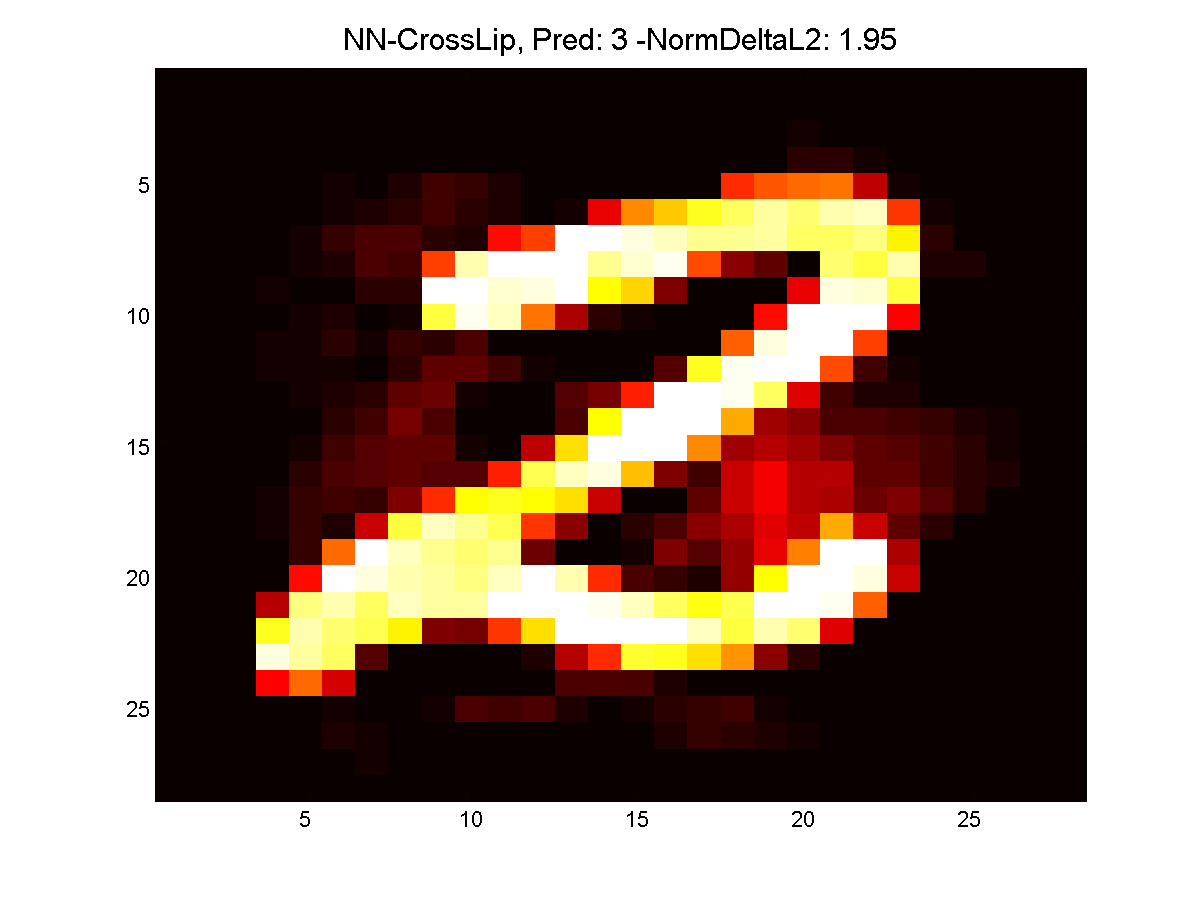}\\
NN-WD, Pred:3, $\norm{\delta}_2=1.1$ & NN-DO, Pred:3, $\norm{\delta}_2=1.4$ & NN-CL, Pred:3, $\norm{\delta}_2=2.0$
\end{tabular}
\captionof{figure}{Top left: original test image, for each classifier we generate the corresponding adversarial sample
which changes the classifier decision (denoted as Pred). Note that for the kernel methods this new decision makes sense, whereas
for all neural network models the change is so small that the new decision is clearly wrong.}
\end{center}
\begin{center}
\begin{tabular}{ccc} \includegraphics[width=0.23\textwidth]{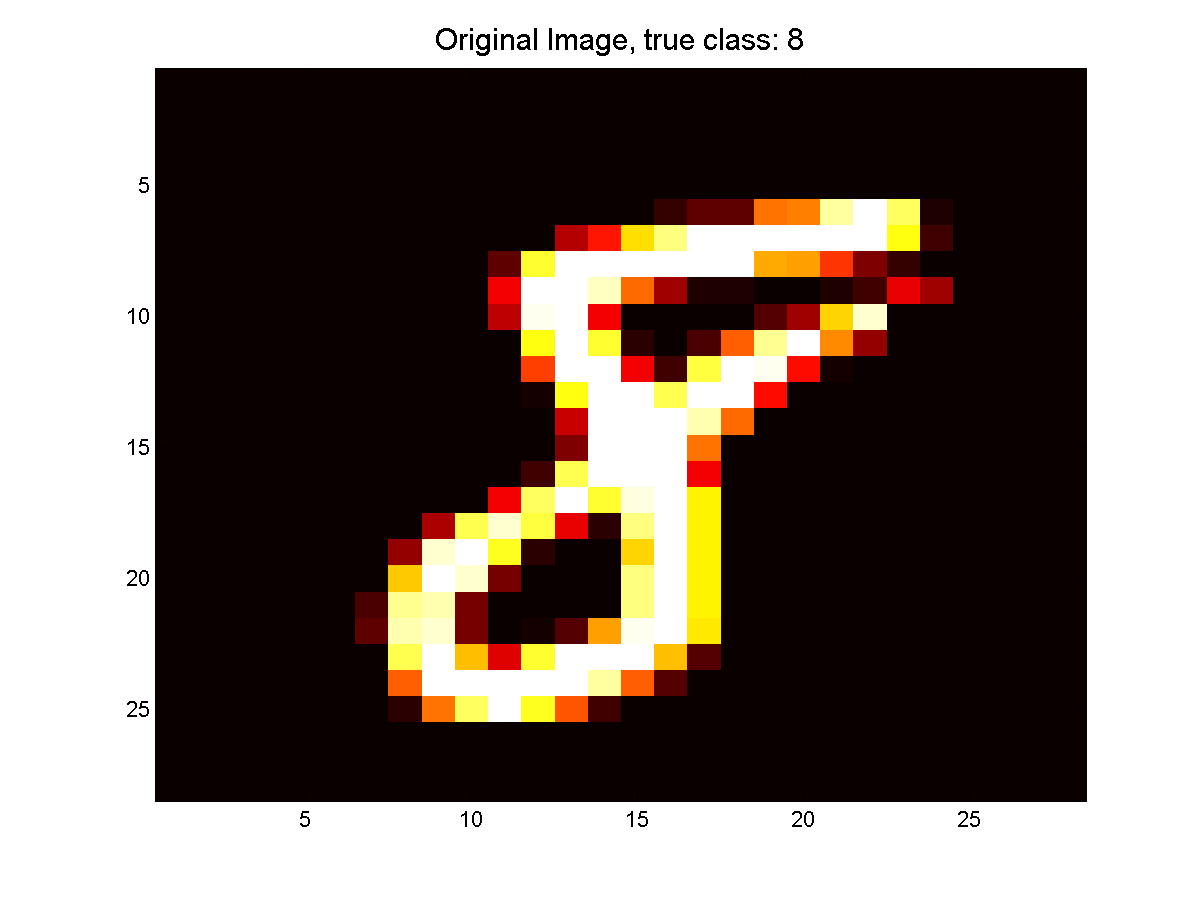}&\includegraphics[width=0.23\textwidth]{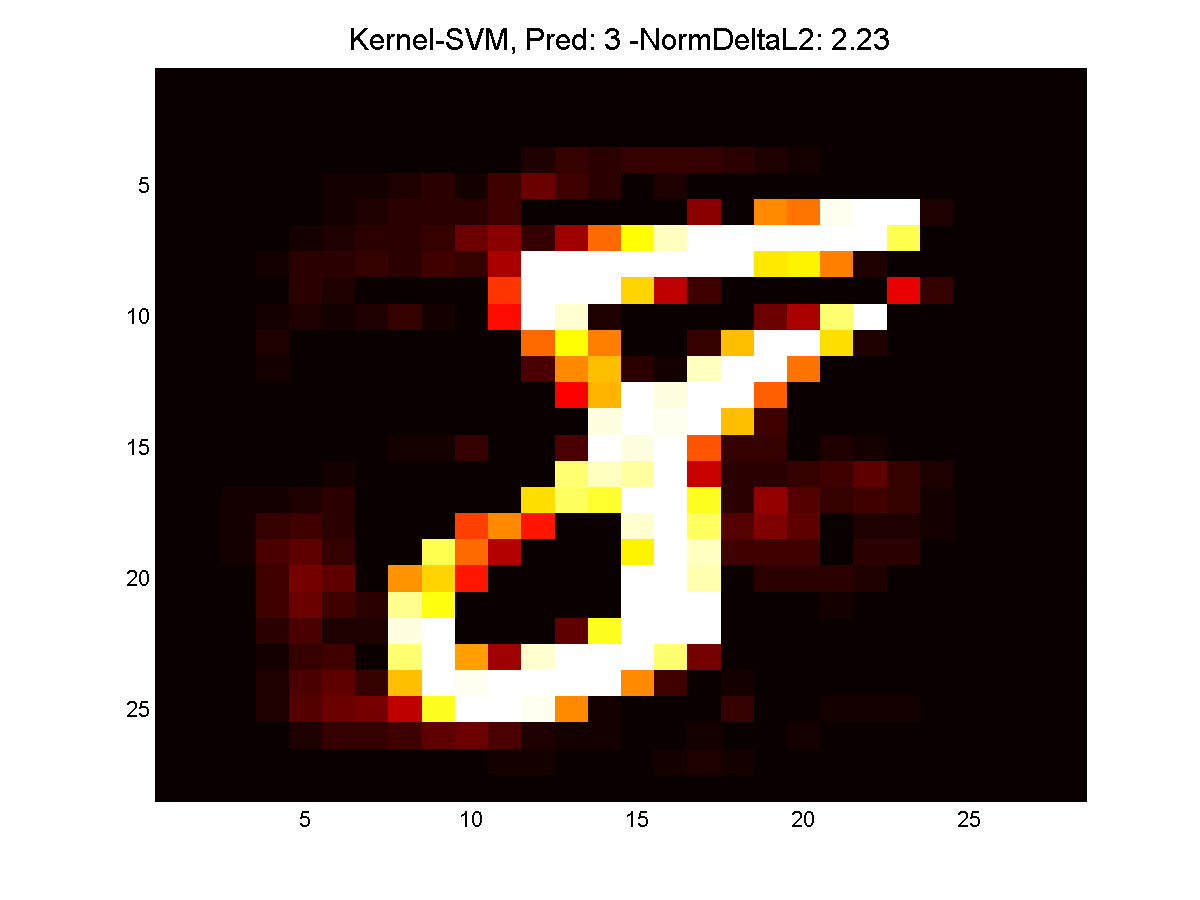}&\includegraphics[width=0.23\textwidth]{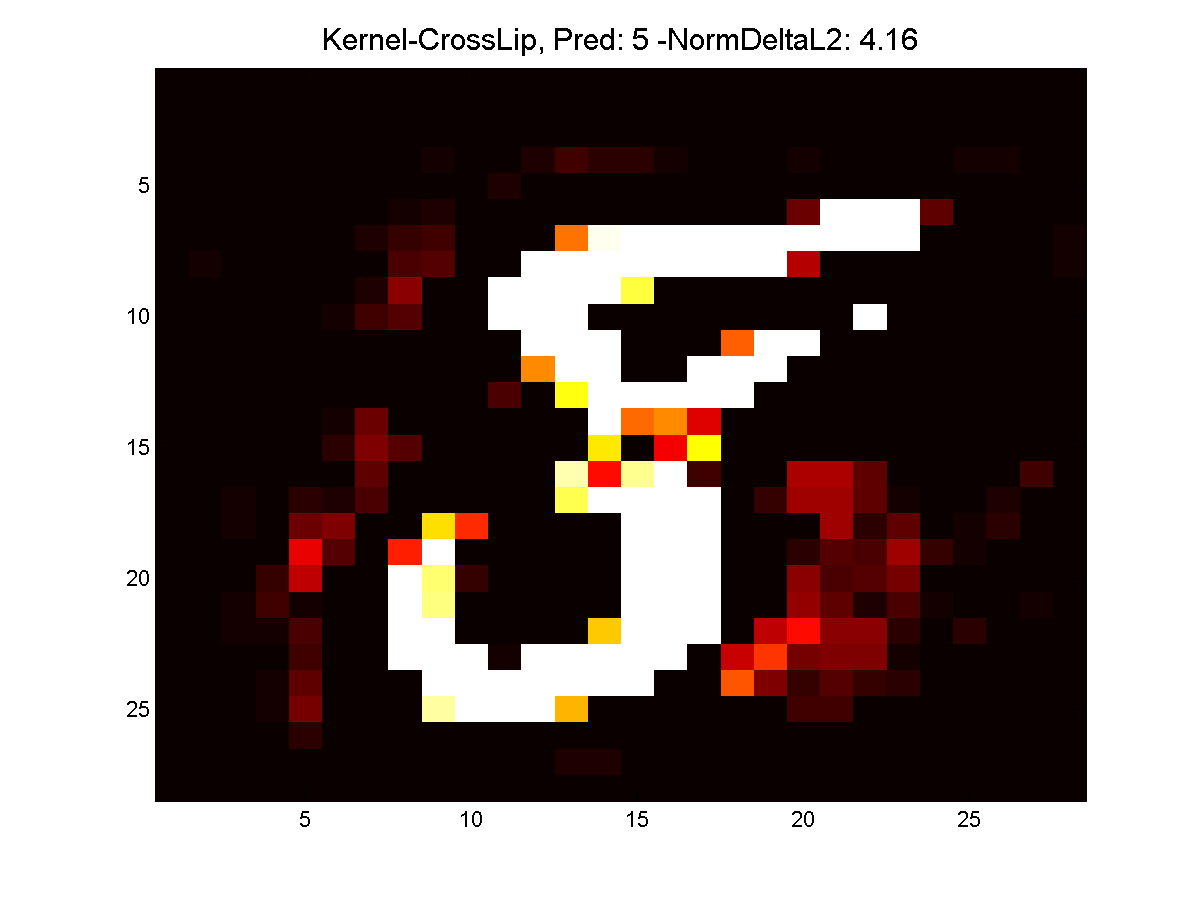}\\
Original, 	Class 8 & K-SVM, Pred:3, $\norm{\delta}_2=2.2$ & K-CL, Pred:5, $\norm{\delta}_2=4.2$\\
\includegraphics[width=0.23\textwidth]{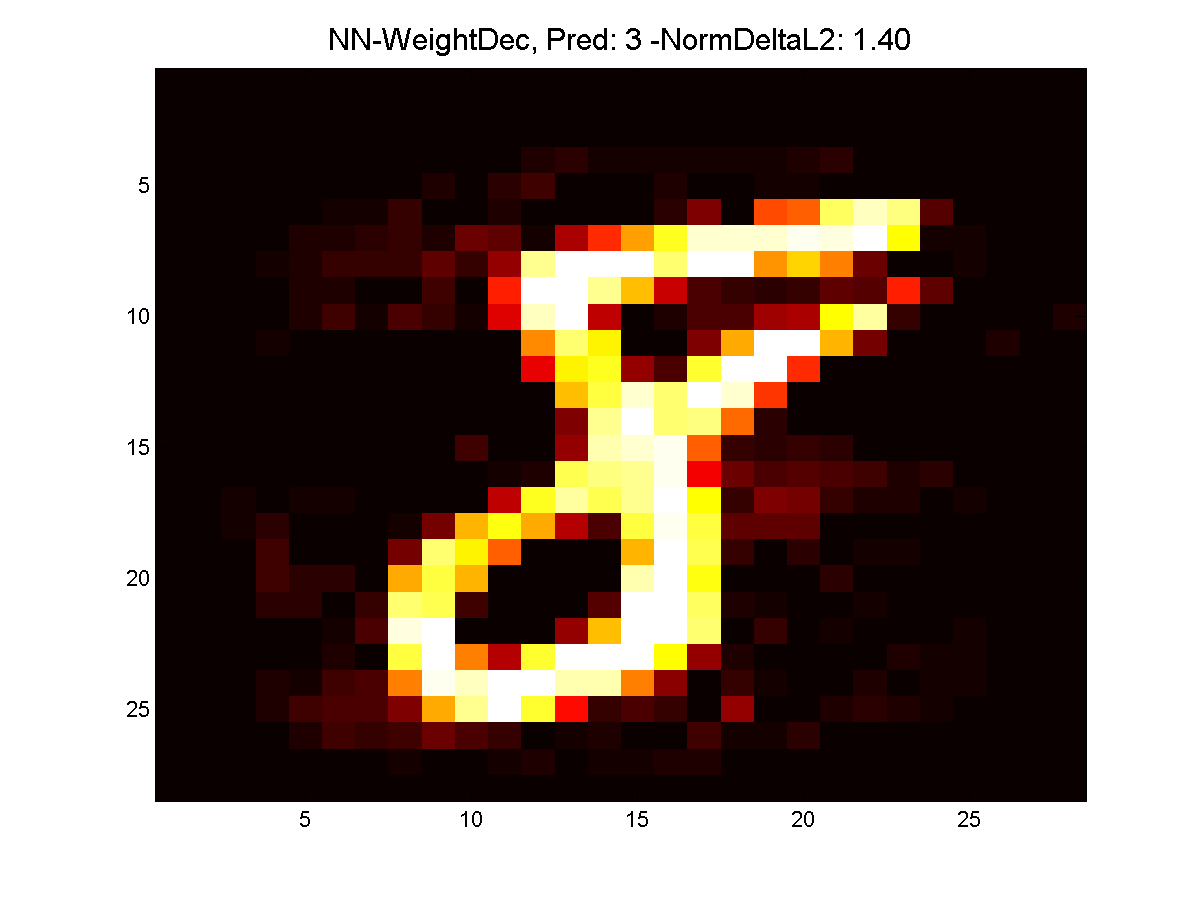}&\includegraphics[width=0.23\textwidth]{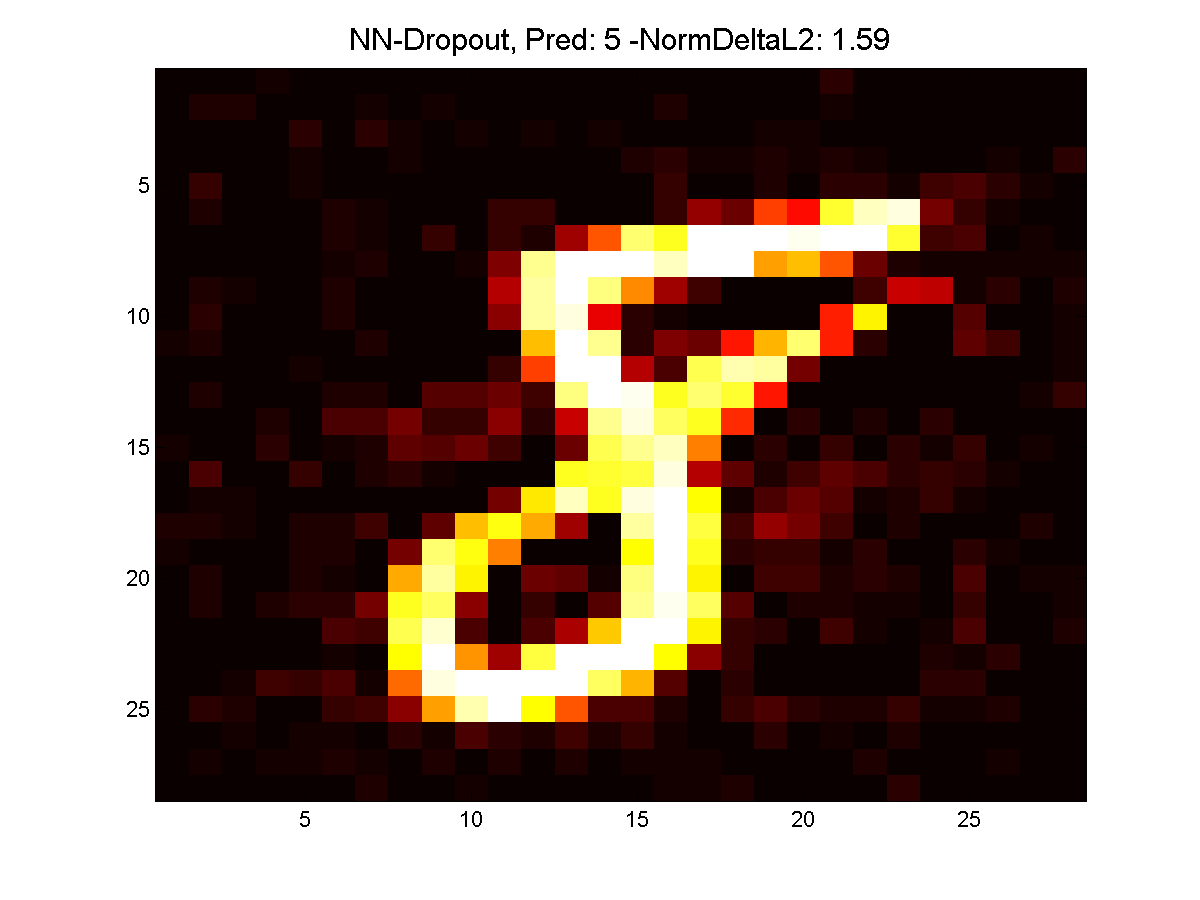}&\includegraphics[width=0.23\textwidth]{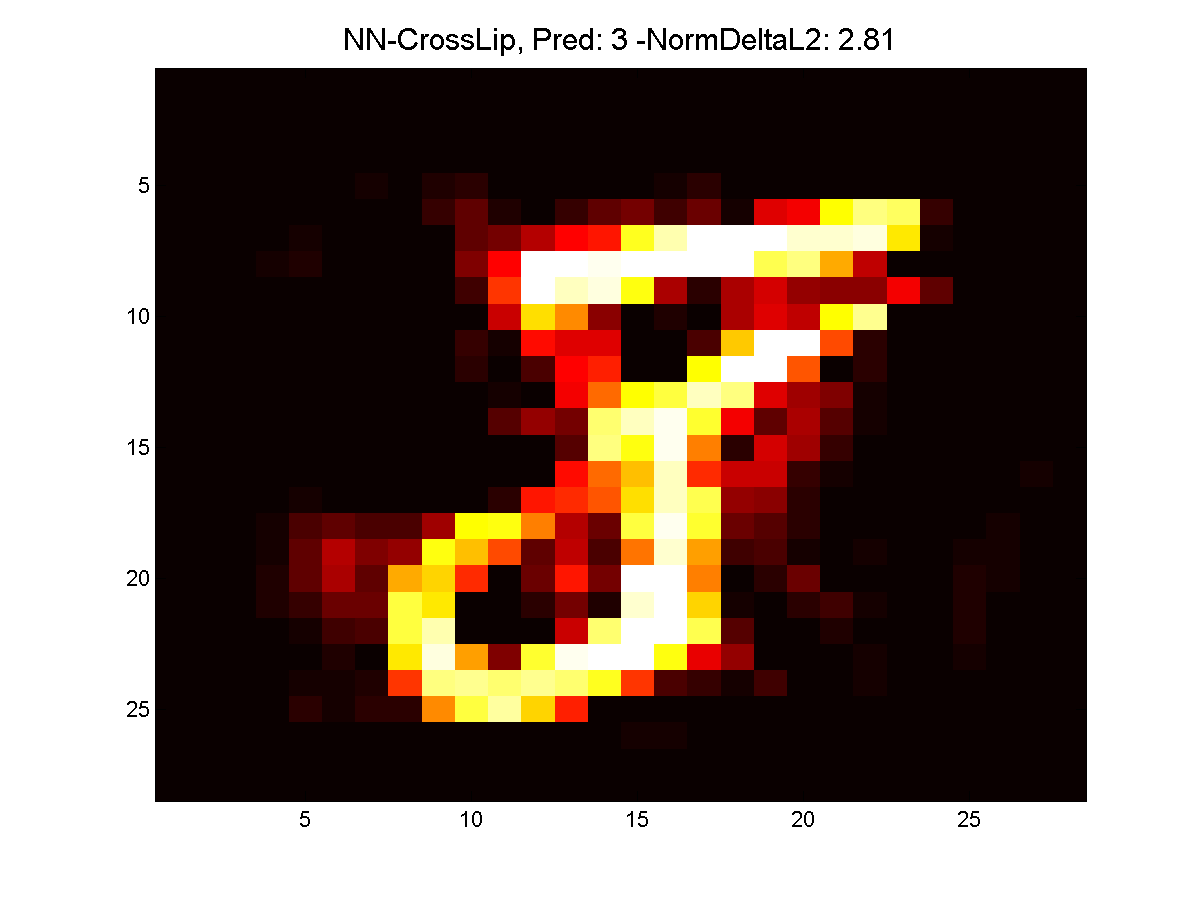}\\
NN-WD, Pred:3, $\norm{\delta}_2=1.4$ & NN-DO, Pred:5, $\norm{\delta}_2=1.6$ & NN-CL, Pred:3, $\norm{\delta}_2=2.8$
\end{tabular}
\captionof{figure}{Top left: original test image, for each classifier we generate the corresponding adversarial sample
which changes the classifier decision (denoted as Pred). Note that for the kernel methods this new decision makes sense, whereas
for all neural network models the change is so small that the new decision is clearly wrong.}
\end{center}
\begin{center}
\begin{tabular}{ccc} \includegraphics[width=0.23\textwidth]{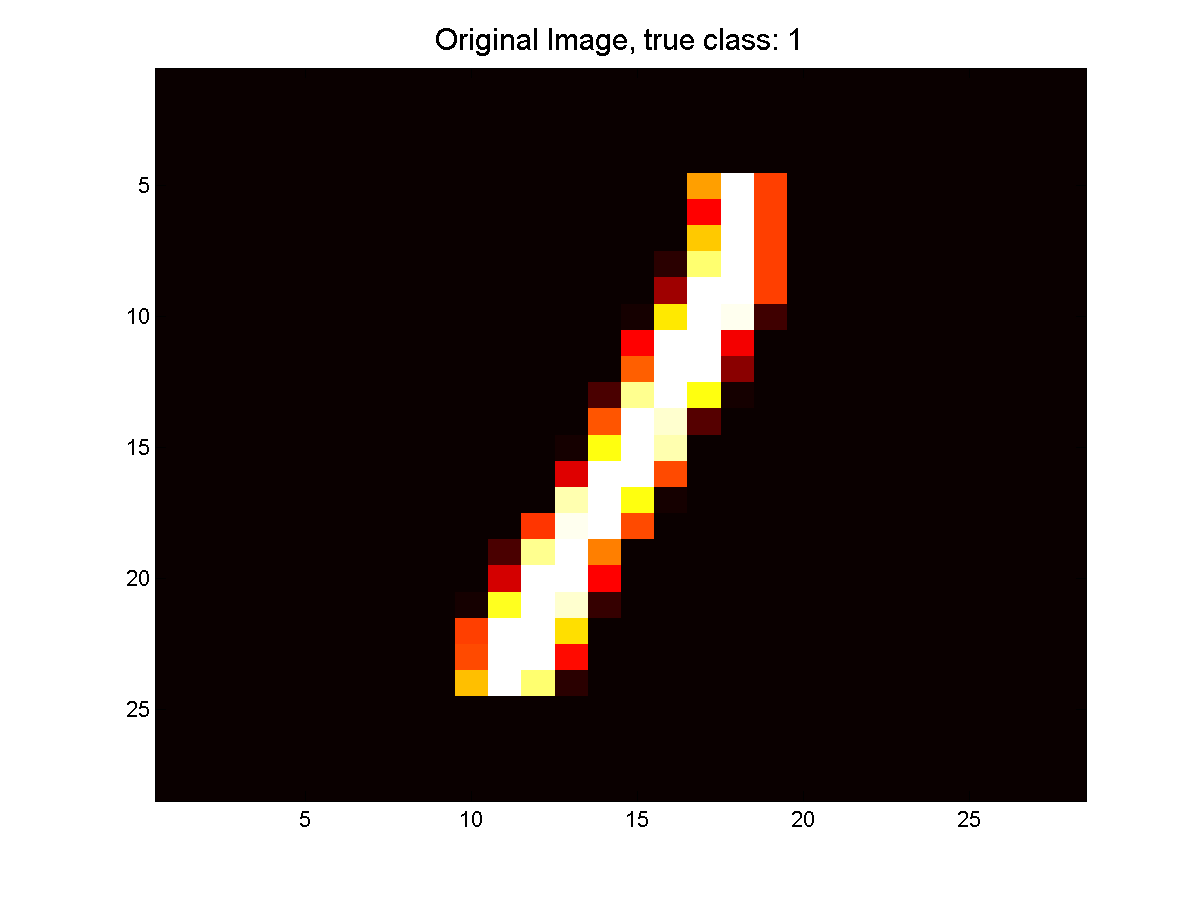}&\includegraphics[width=0.23\textwidth]{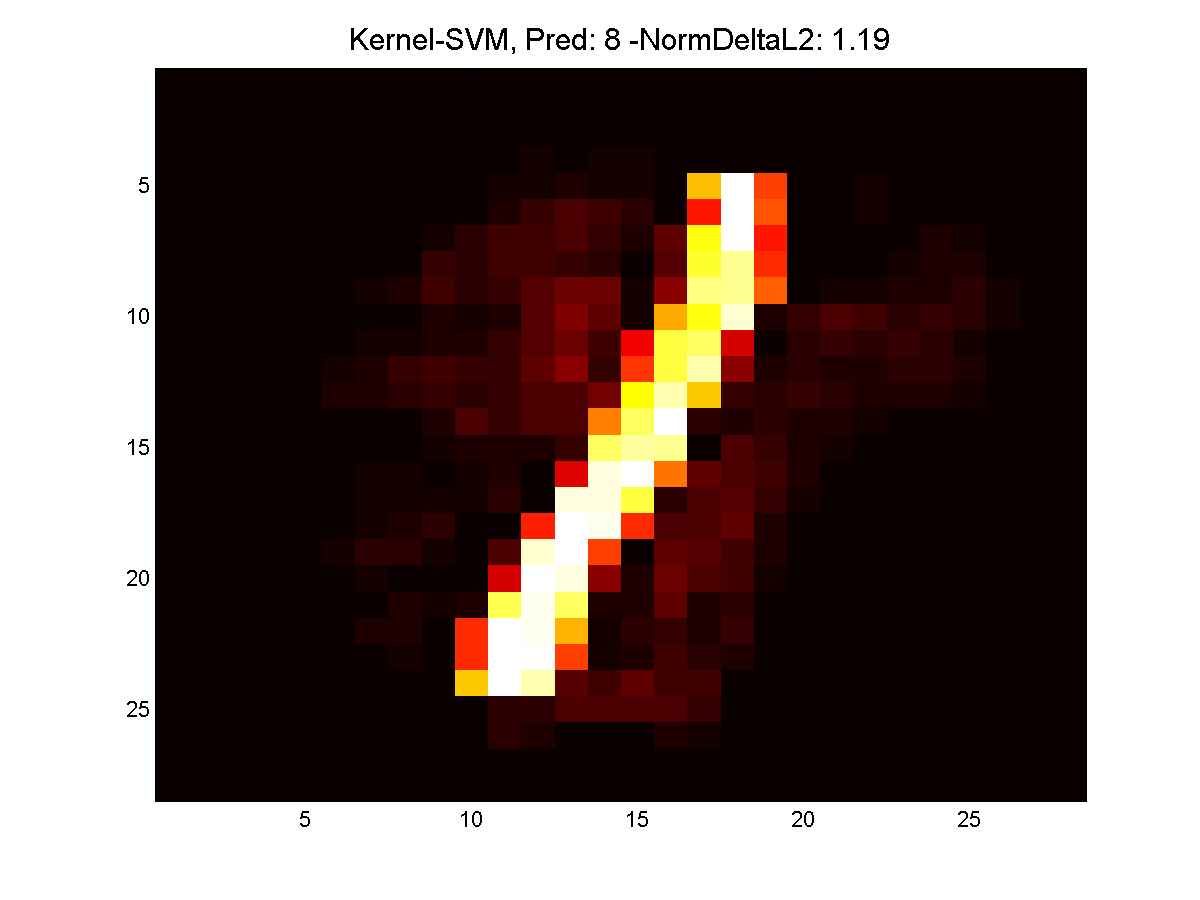}&\includegraphics[width=0.23\textwidth]{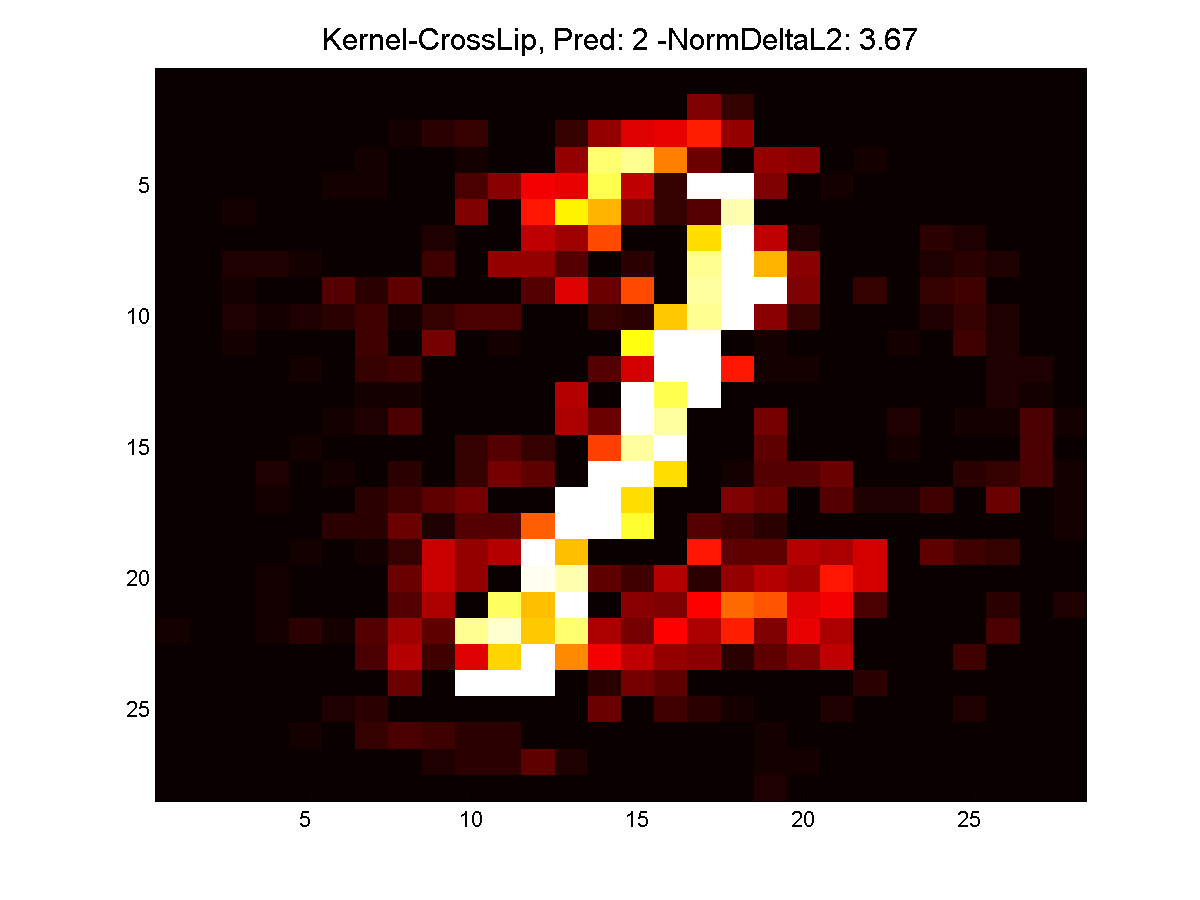}\\
Original, 	Class 1 & K-SVM, Pred:8, $\norm{\delta}_2=1.2$ & K-CL, Pred:2, $\norm{\delta}_2=3.7$\\
\includegraphics[width=0.23\textwidth]{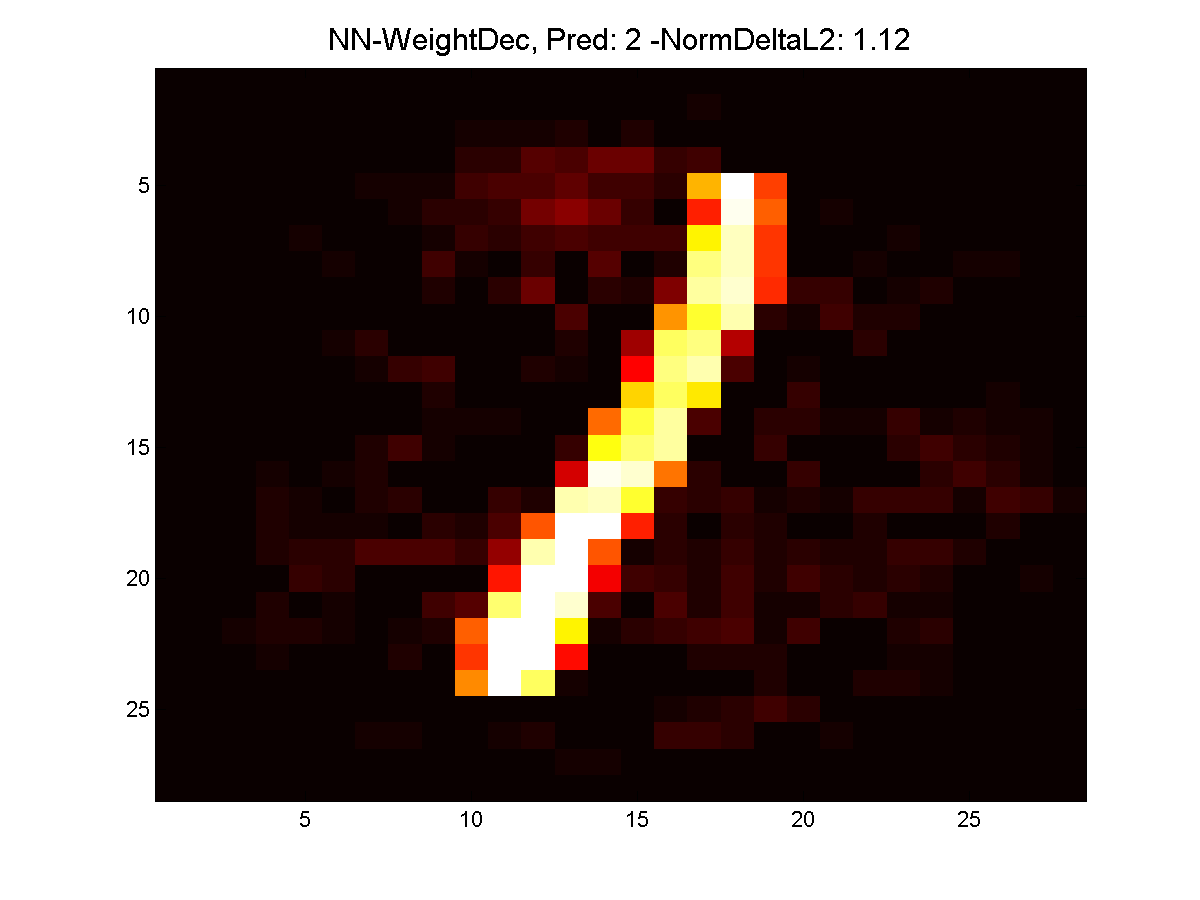}&\includegraphics[width=0.23\textwidth]{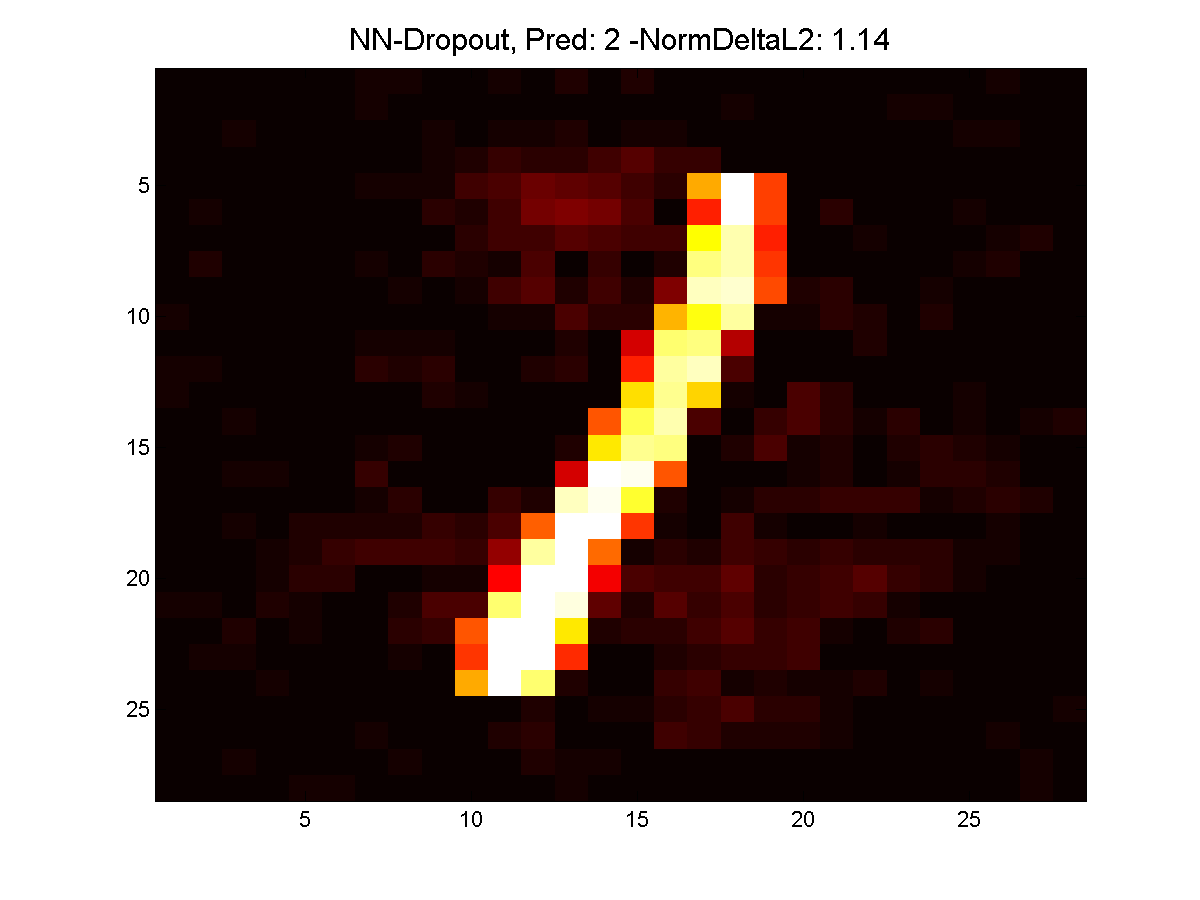}&\includegraphics[width=0.23\textwidth]{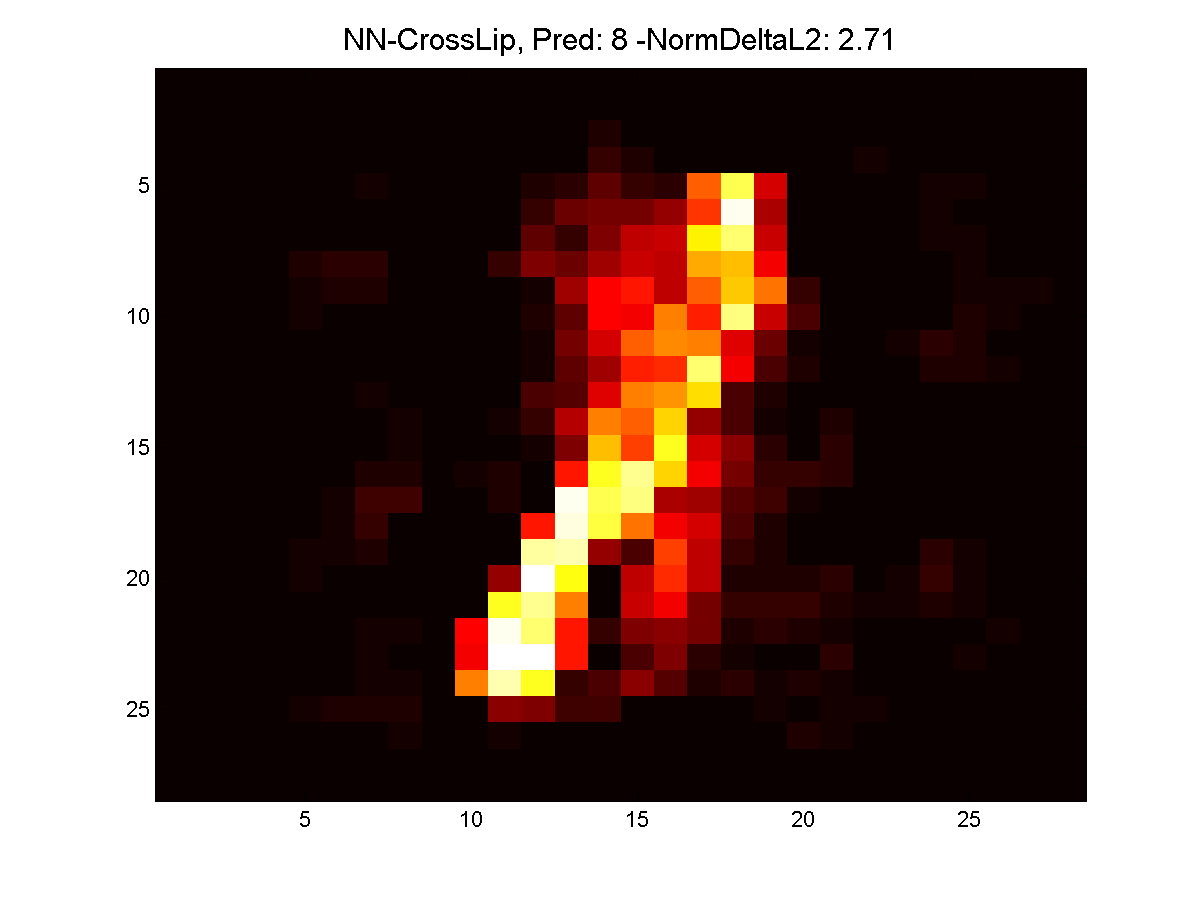}\\
NN-WD, Pred:2, $\norm{\delta}_2=1.1$ & NN-DO, Pred:2, $\norm{\delta}_2=1.1$ & NN-CL, Pred:8, $\norm{\delta}_2=2.7$
\end{tabular}
\captionof{figure}{Top left: original test image, for each classifier we generate the corresponding adversarial sample
which changes the classifier decision (denoted as Pred). Note that for the kernel methods this new decision makes sense, whereas
for all neural network models the change is so small that the new decision is clearly wrong.}
\end{center}
\begin{center}
\begin{tabular}{ccc} \includegraphics[width=0.23\textwidth]{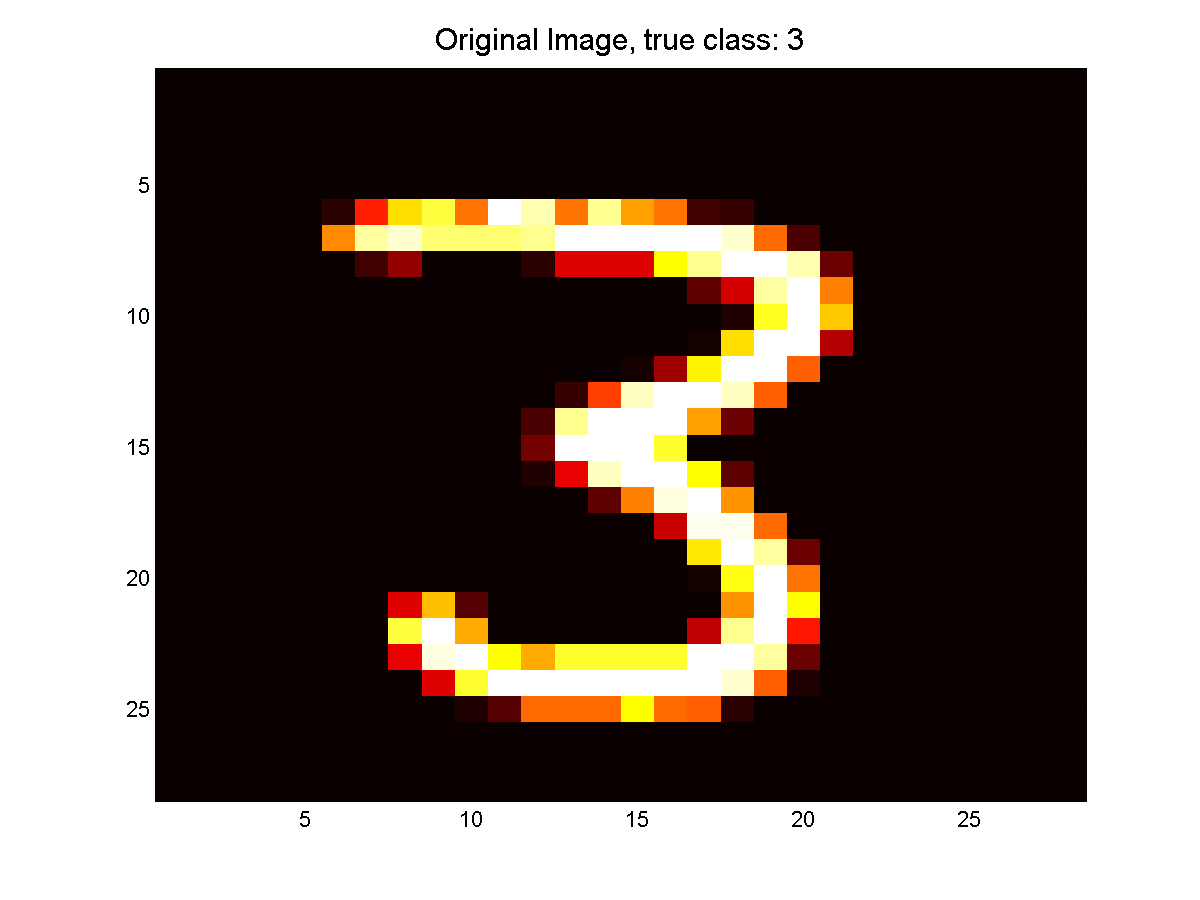}&\includegraphics[width=0.23\textwidth]{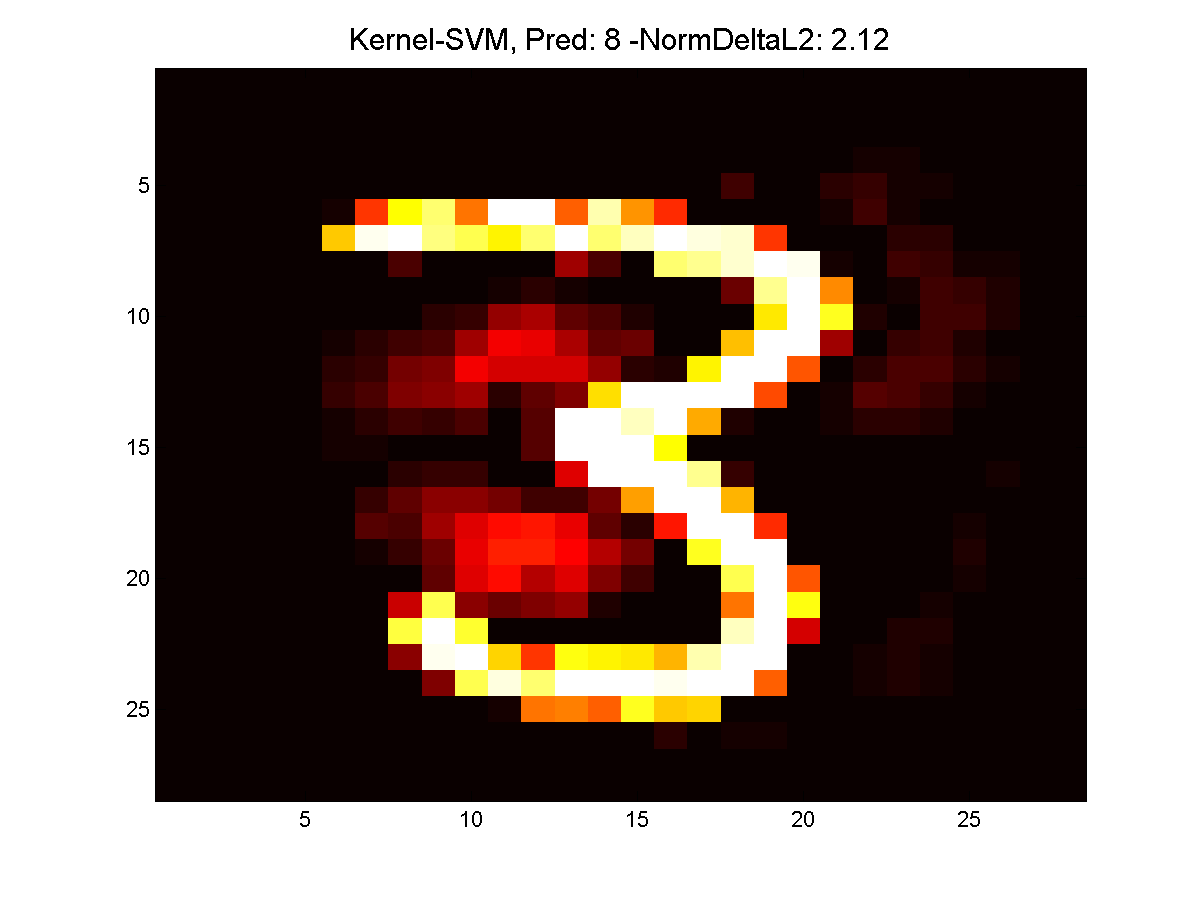}&\includegraphics[width=0.23\textwidth]{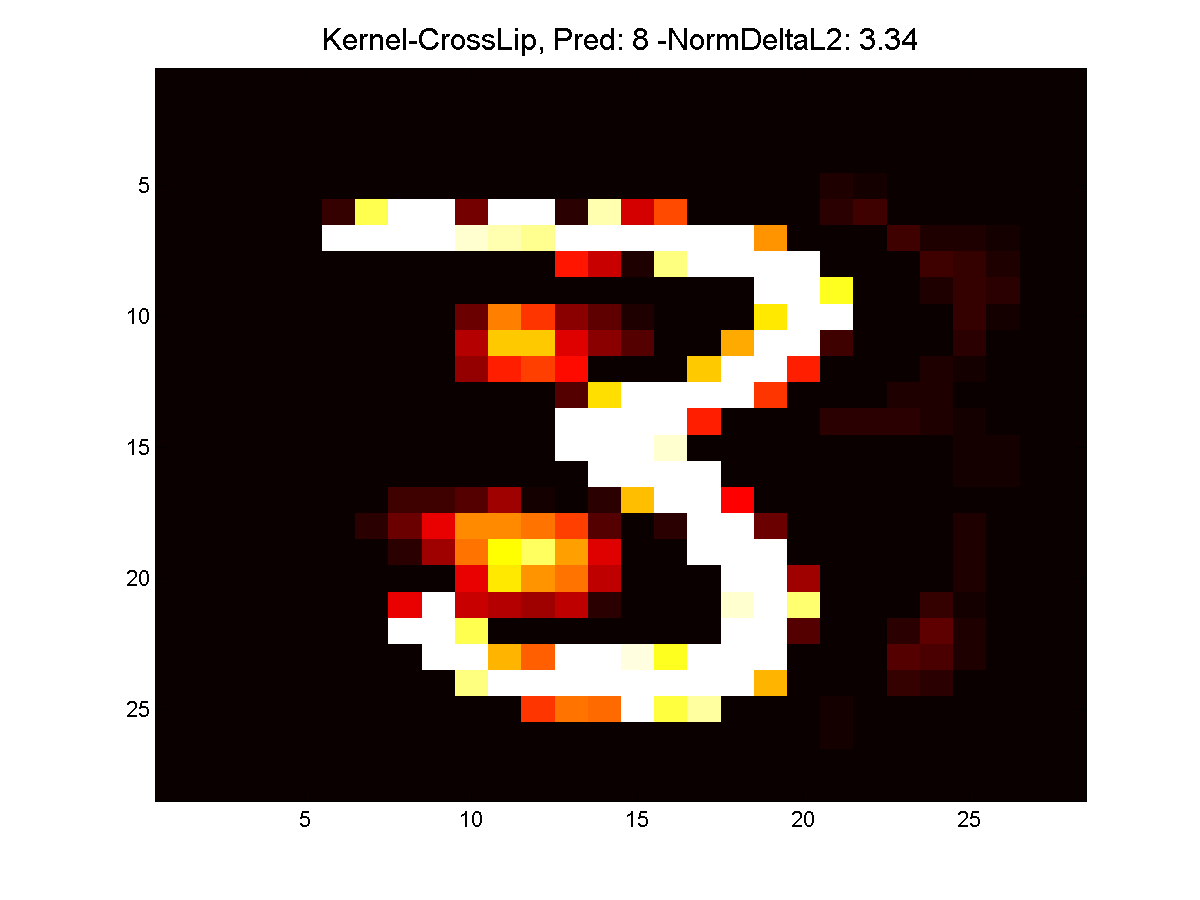}\\
Original, 	Class 3 & K-SVM, Pred:8, $\norm{\delta}_2=2.1$ & K-CL, Pred:8, $\norm{\delta}_2=3.3$\\
\includegraphics[width=0.23\textwidth]{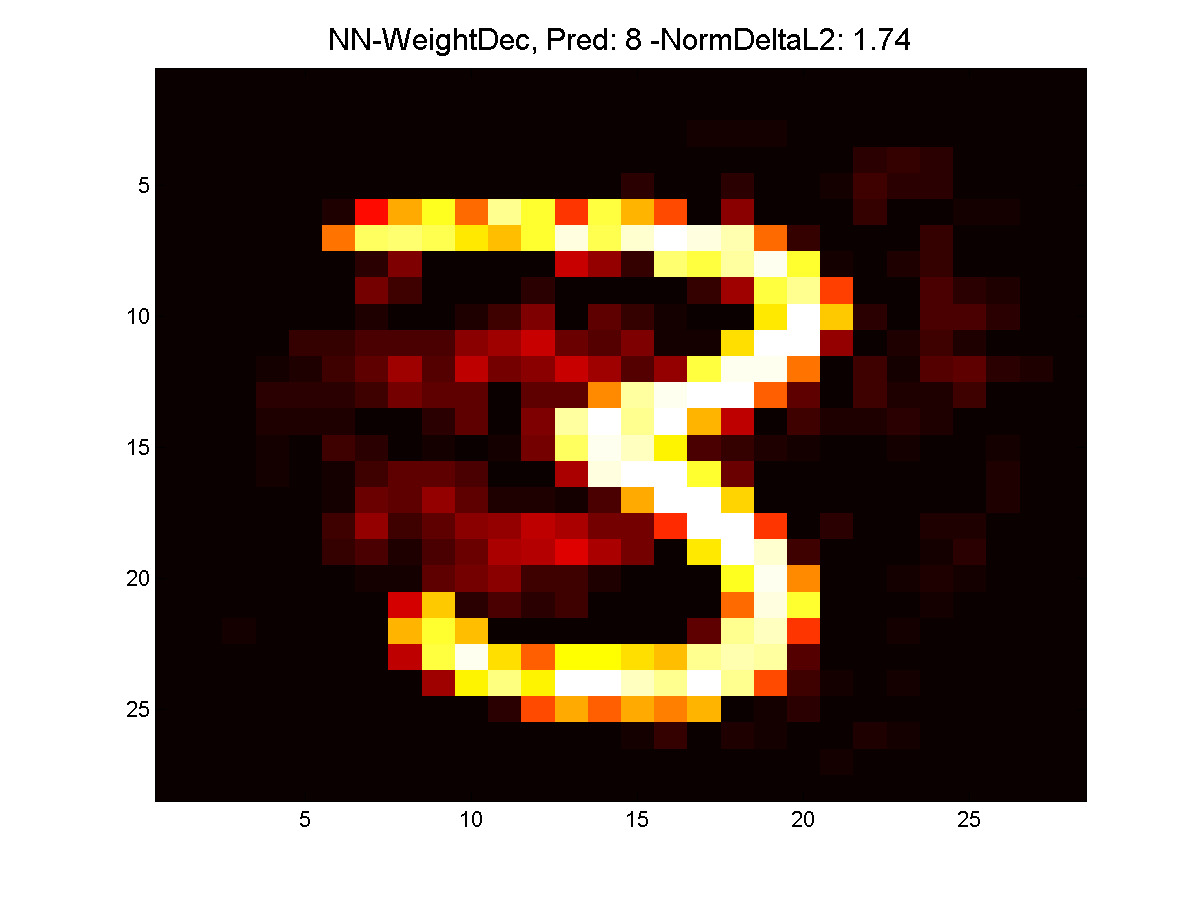}&\includegraphics[width=0.23\textwidth]{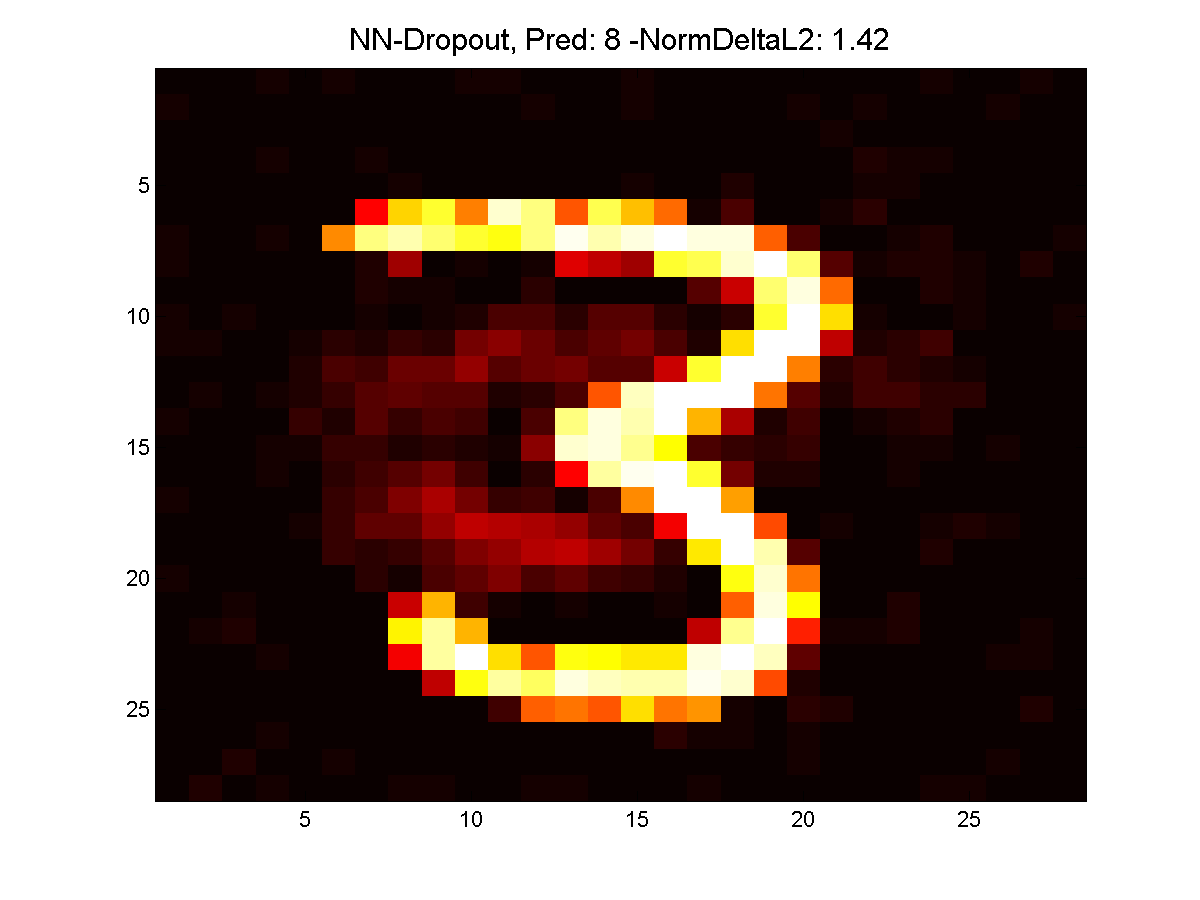}&\includegraphics[width=0.23\textwidth]{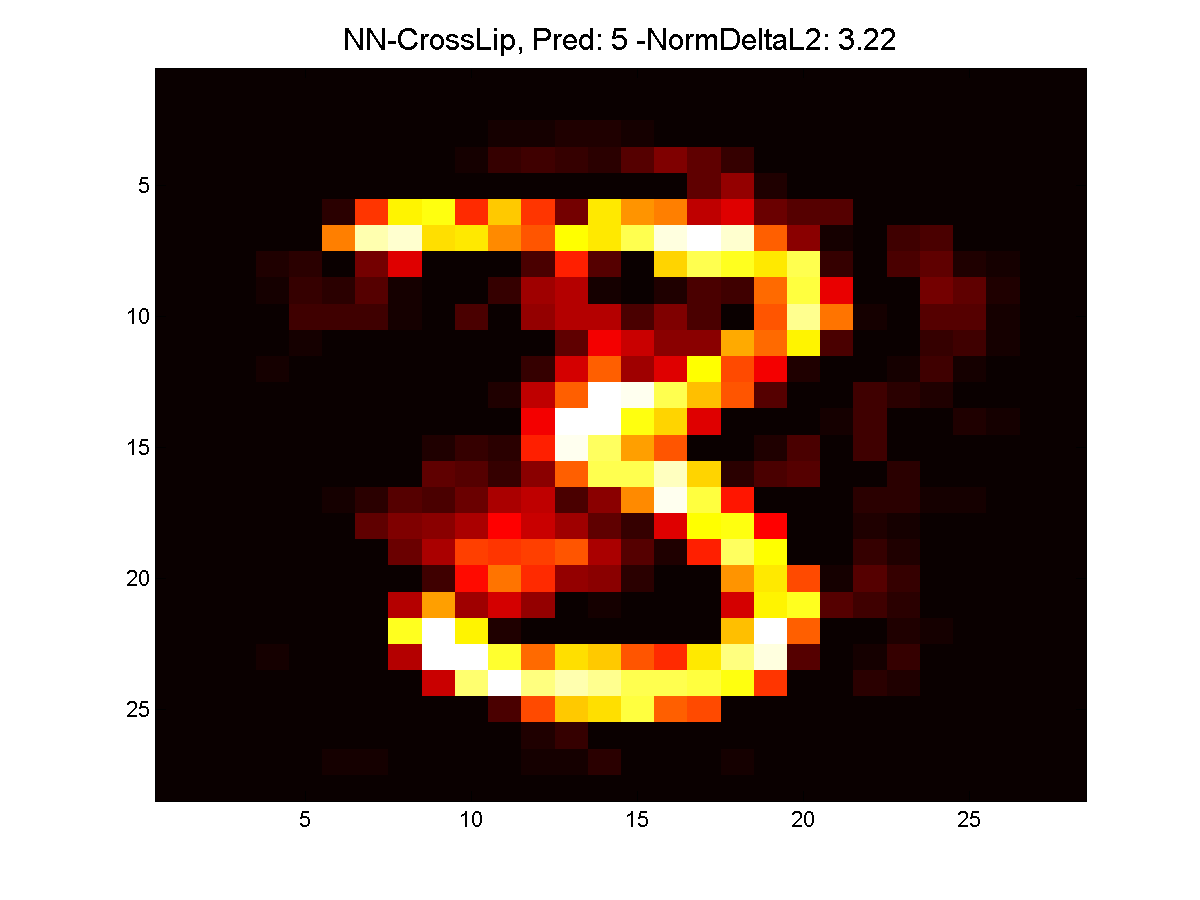}\\
NN-WD, Pred:8, $\norm{\delta}_2=1.7$ & NN-DO, Pred:8, $\norm{\delta}_2=1.4$ & NN-CL, Pred:5, $\norm{\delta}_2=3.2$
\end{tabular}
\captionof{figure}{Top left: original test image, for each classifier we generate the corresponding adversarial sample
which changes the classifier decision (denoted as Pred). Note that for the kernel methods this new decision makes sense, whereas
for all neural network models the change is so small that the new decision is clearly wrong.}
\end{center}
\begin{center}
\begin{tabular}{ccc} \includegraphics[width=0.23\textwidth]{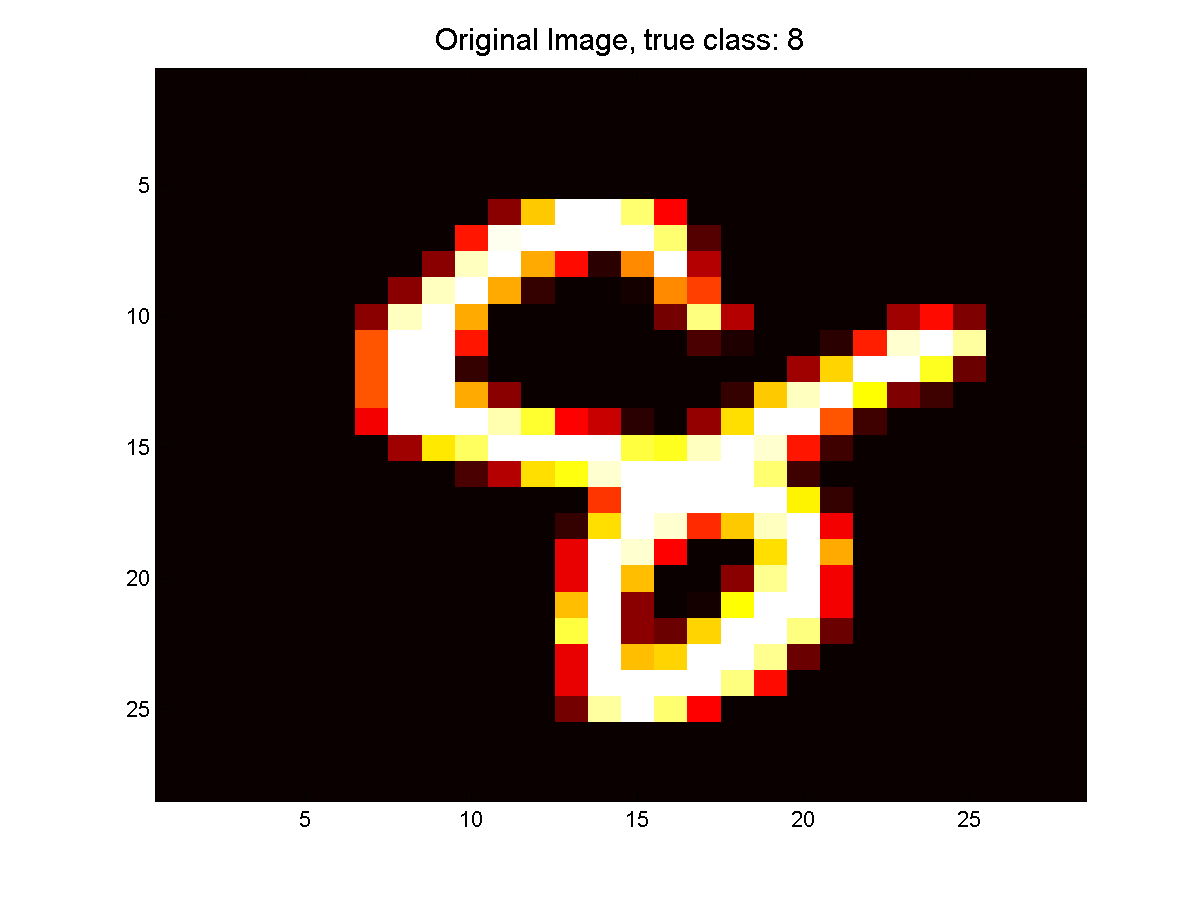}&\includegraphics[width=0.23\textwidth]{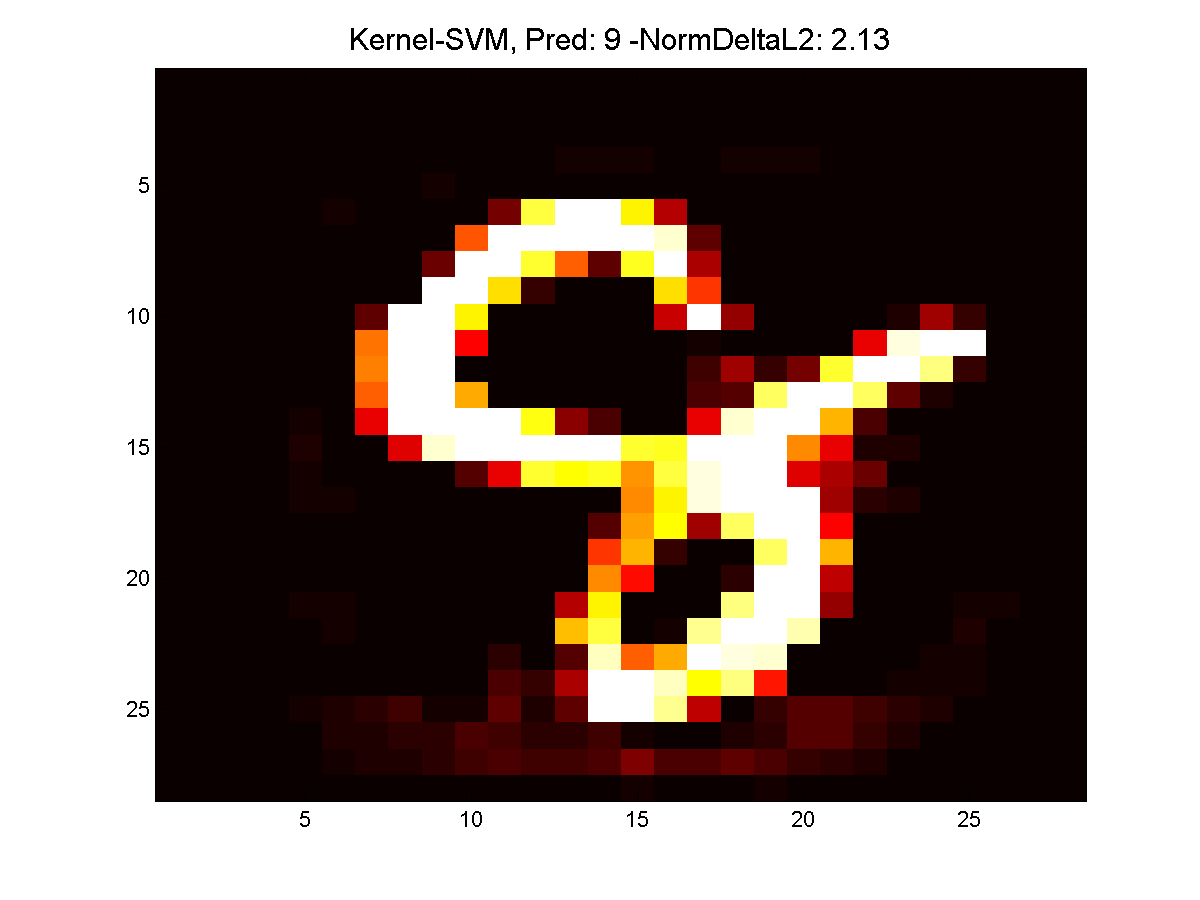}&\includegraphics[width=0.23\textwidth]{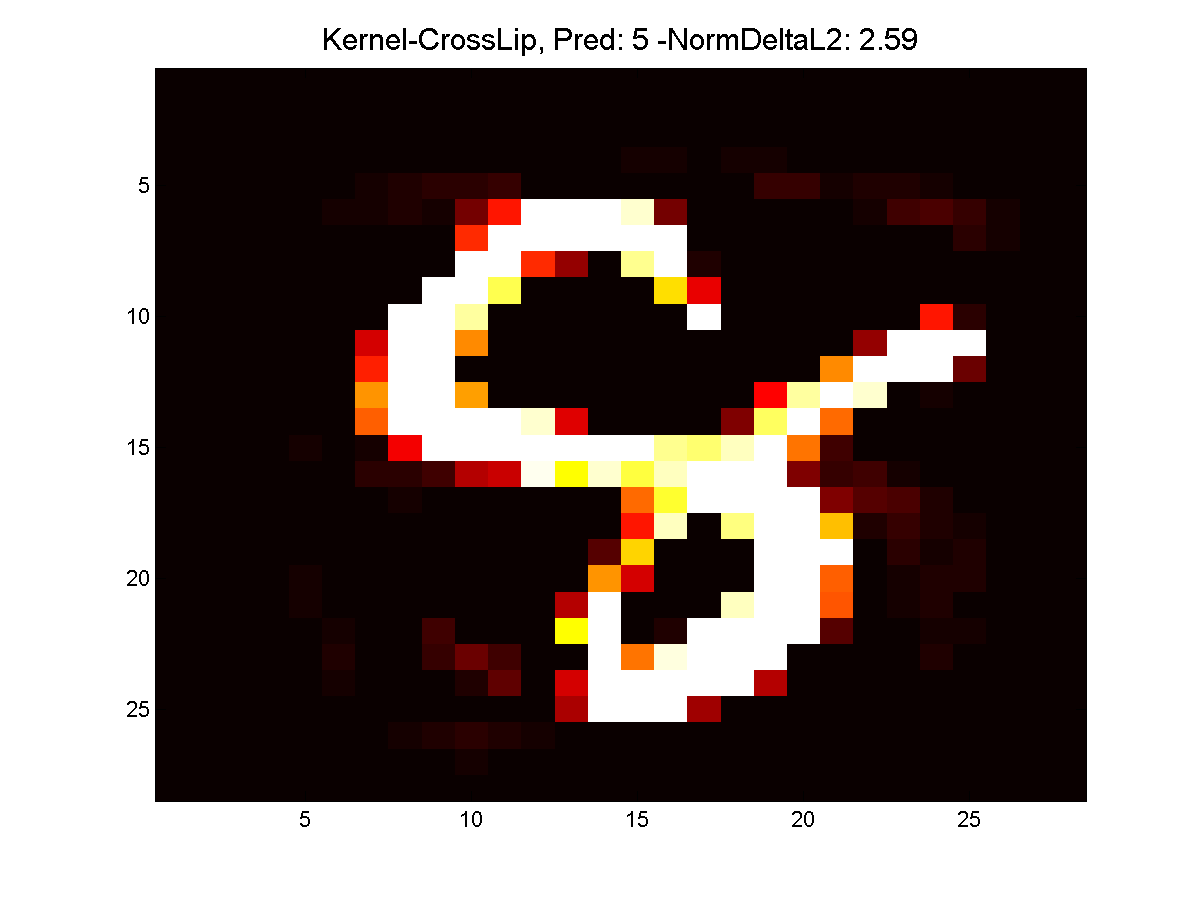}\\
Original, 	Class 8 & K-SVM, Pred:9, $\norm{\delta}_2=2.1$ & K-CL, Pred:5, $\norm{\delta}_2=2.6$\\
\includegraphics[width=0.23\textwidth]{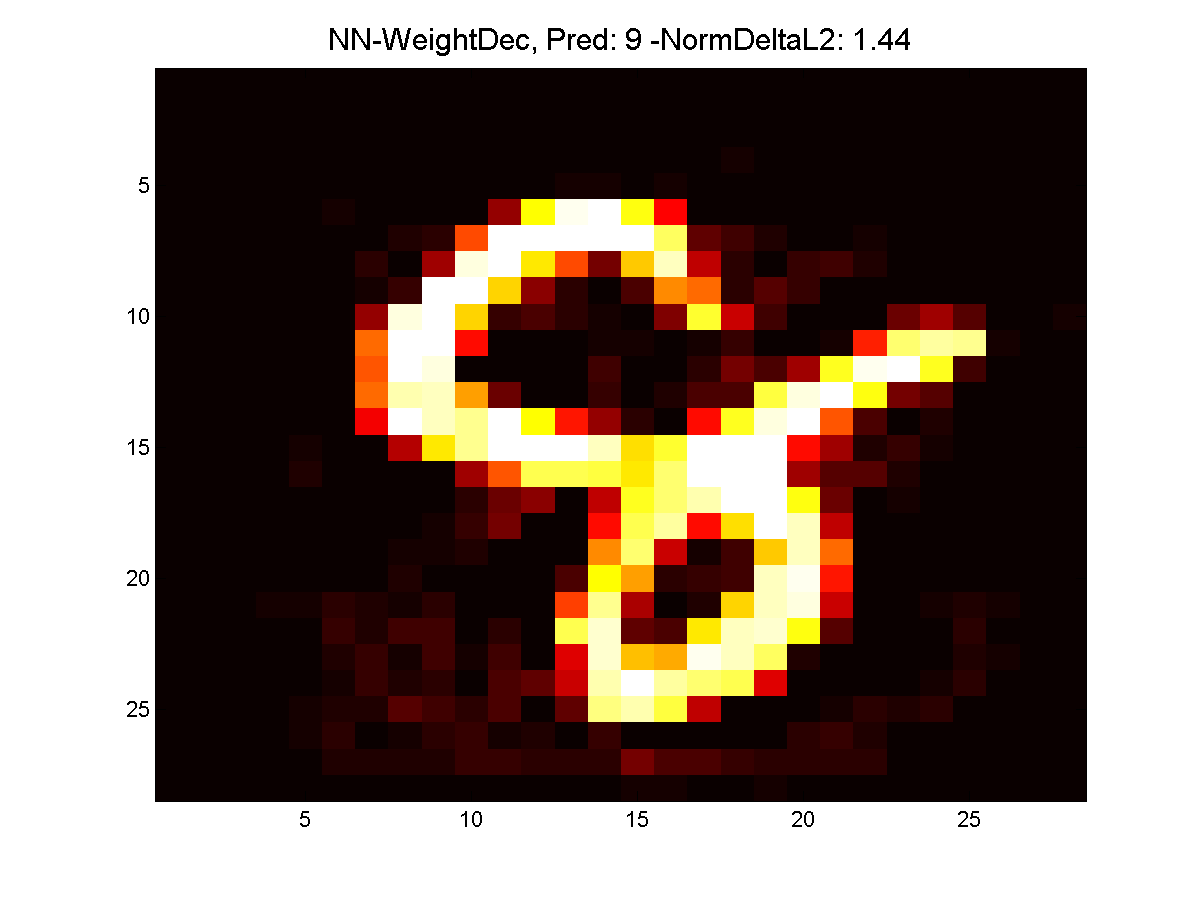}&\includegraphics[width=0.23\textwidth]{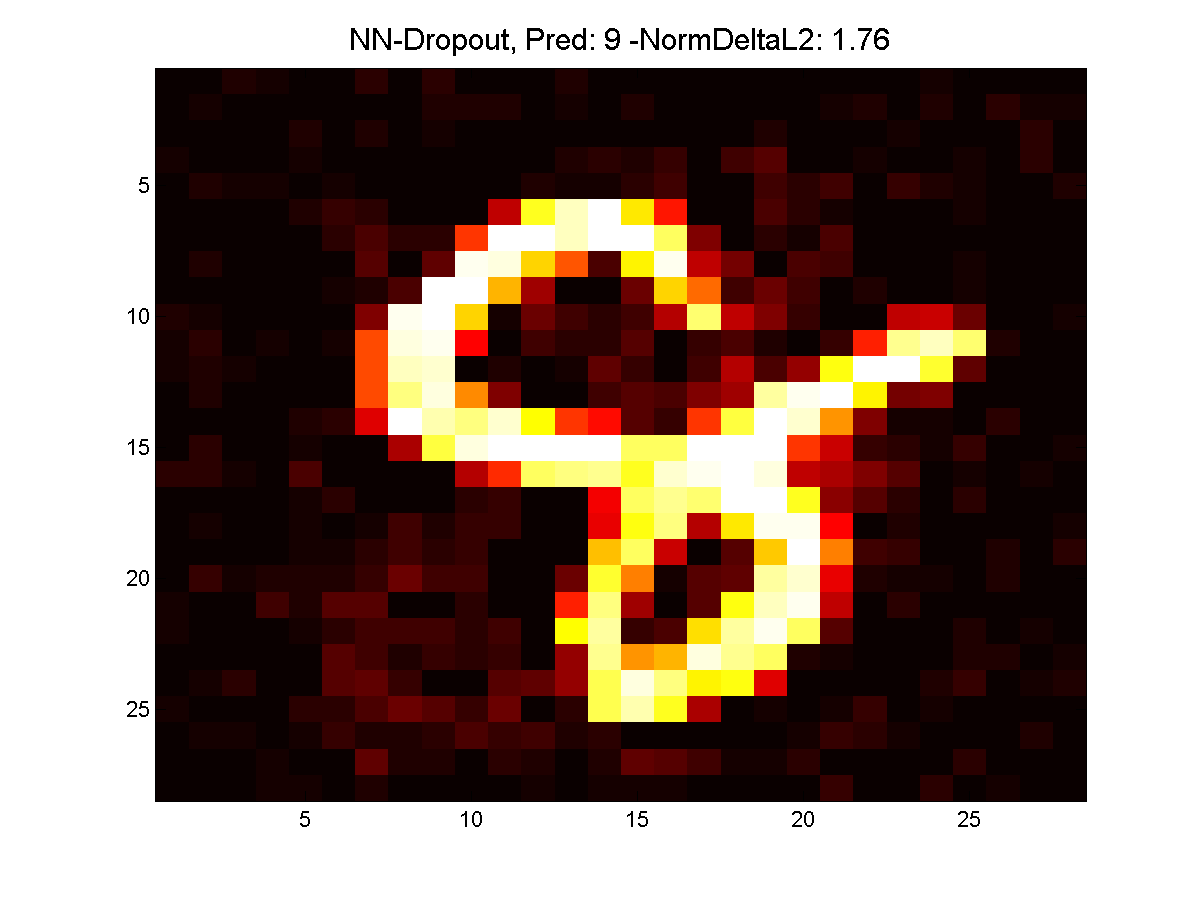}&\includegraphics[width=0.23\textwidth]{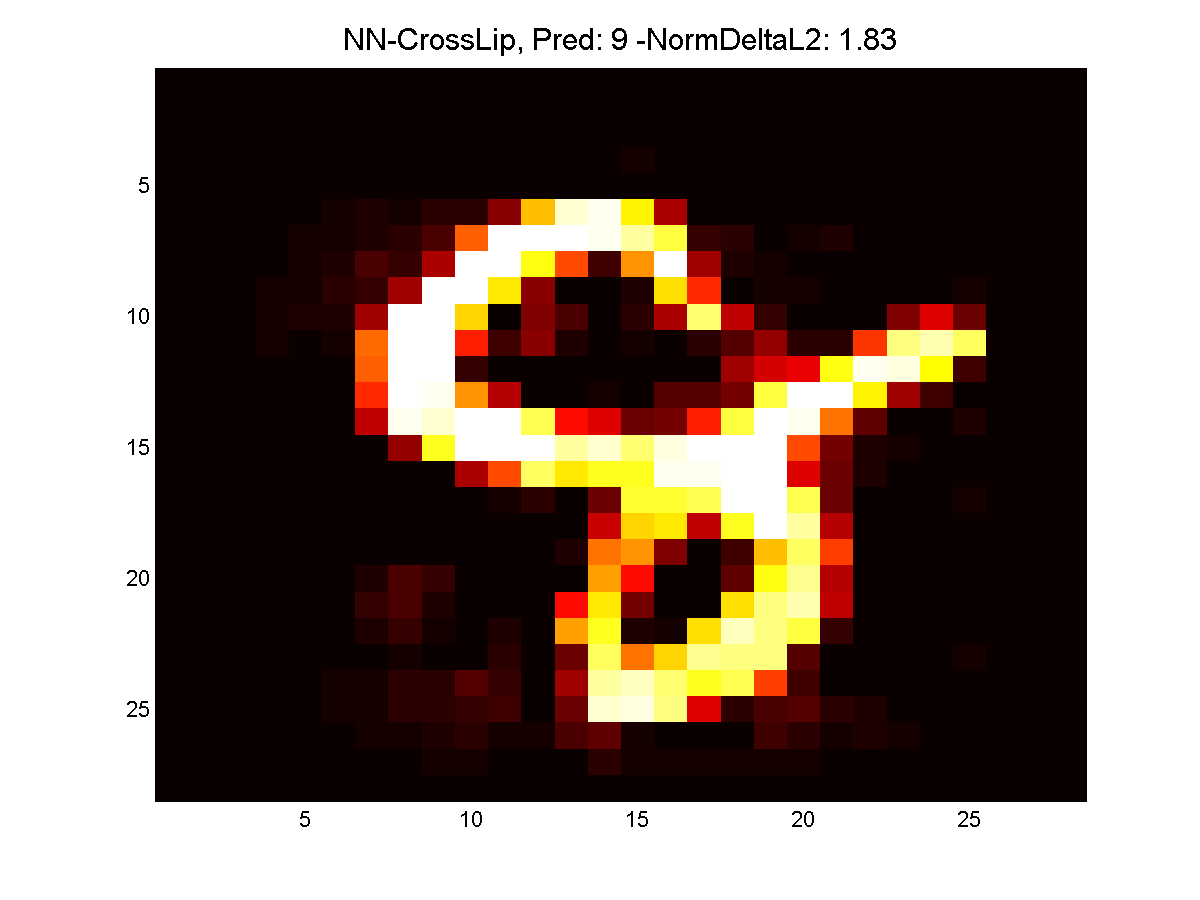}\\
NN-WD, Pred:9, $\norm{\delta}_2=1.4$ & NN-DO, Pred:9, $\norm{\delta}_2=1.8$ & NN-CL, Pred:9, $\norm{\delta}_2=1.8$
\end{tabular}
\captionof{figure}{Top left: original test image, for each classifier we generate the corresponding adversarial sample
which changes the classifier decision (denoted as Pred). Note that for the kernel methods this new decision makes sense, whereas
for all neural network models the change is so small that the new decision is clearly wrong.}
\end{center}
\begin{center}
\begin{tabular}{ccc} \includegraphics[width=0.23\textwidth]{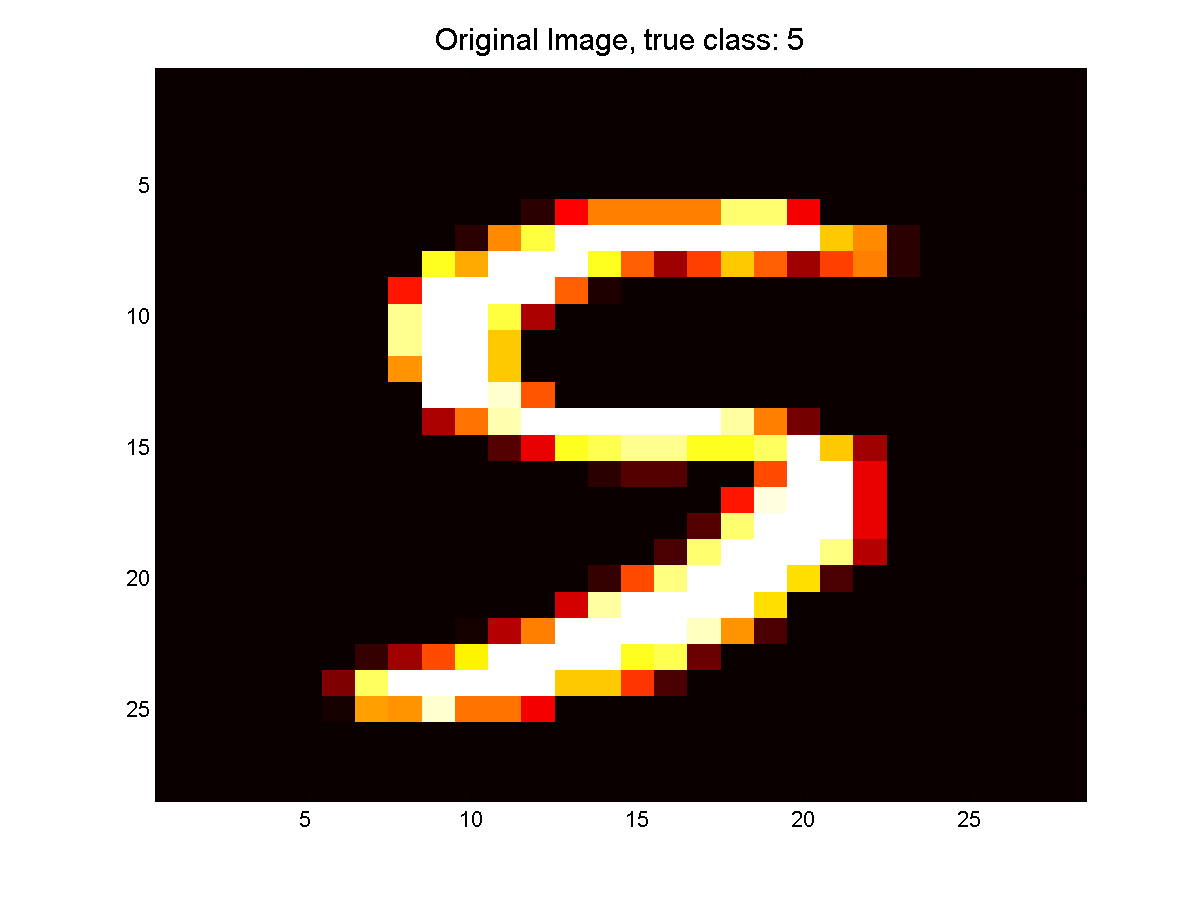}&\includegraphics[width=0.23\textwidth]{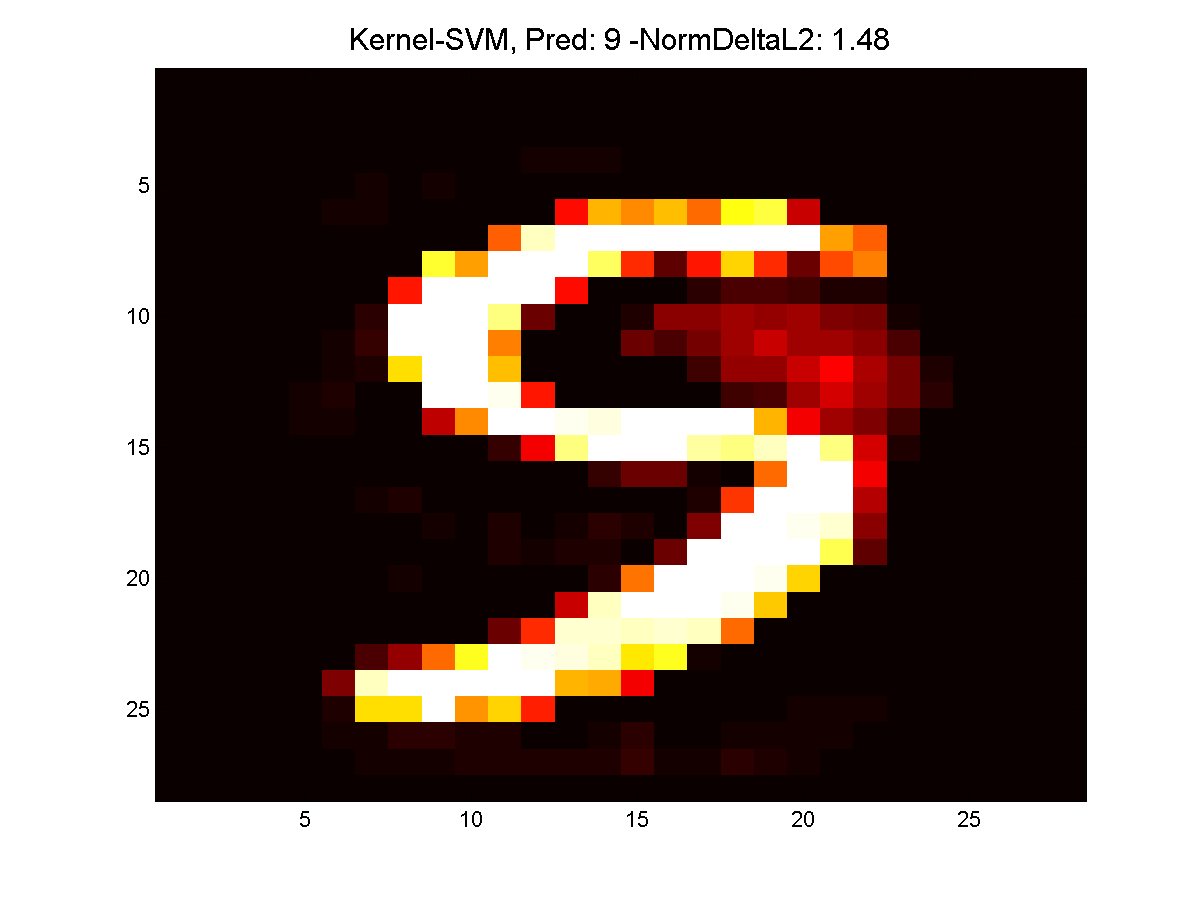}&\includegraphics[width=0.23\textwidth]{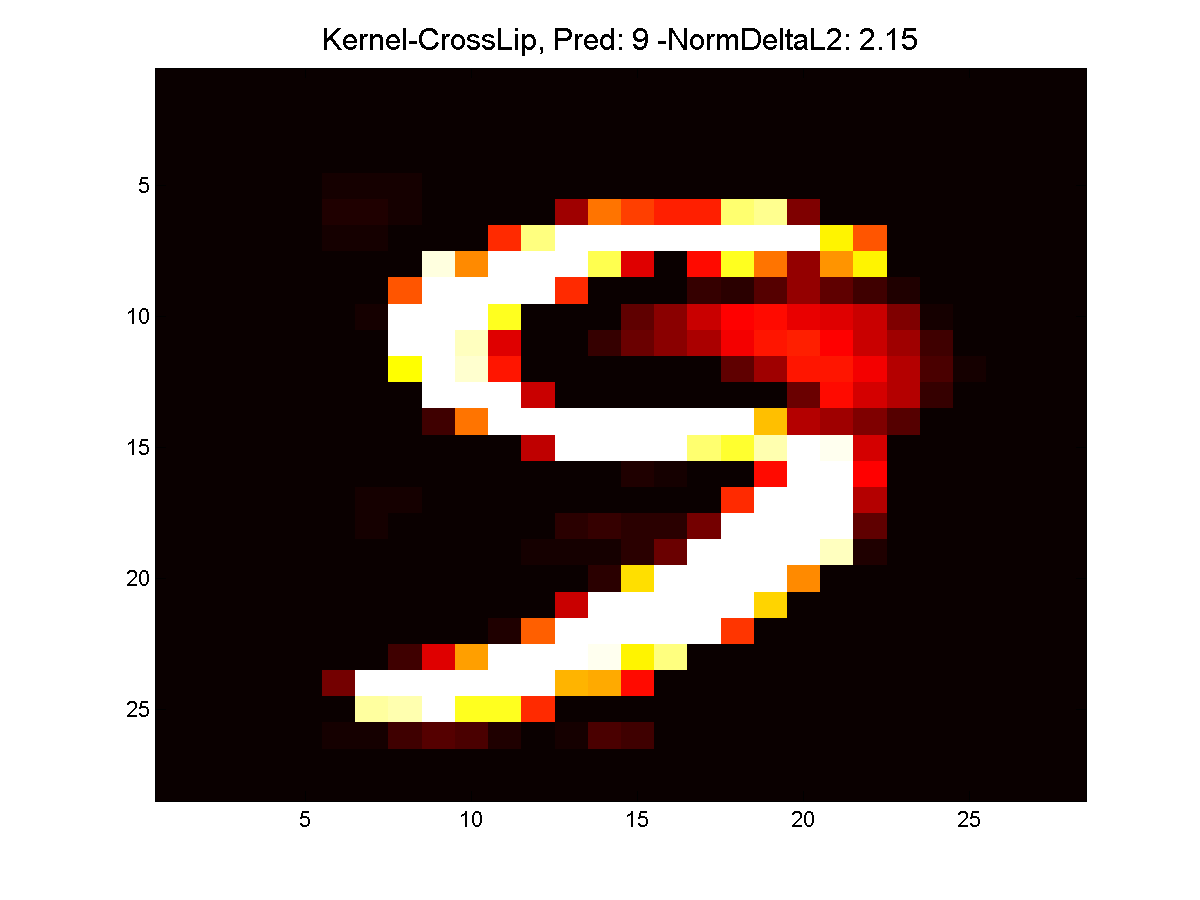}\\
Original, 	Class 5 & K-SVM, Pred:9, $\norm{\delta}_2=1.5$ & K-CL, Pred:9, $\norm{\delta}_2=2.2$\\
\includegraphics[width=0.23\textwidth]{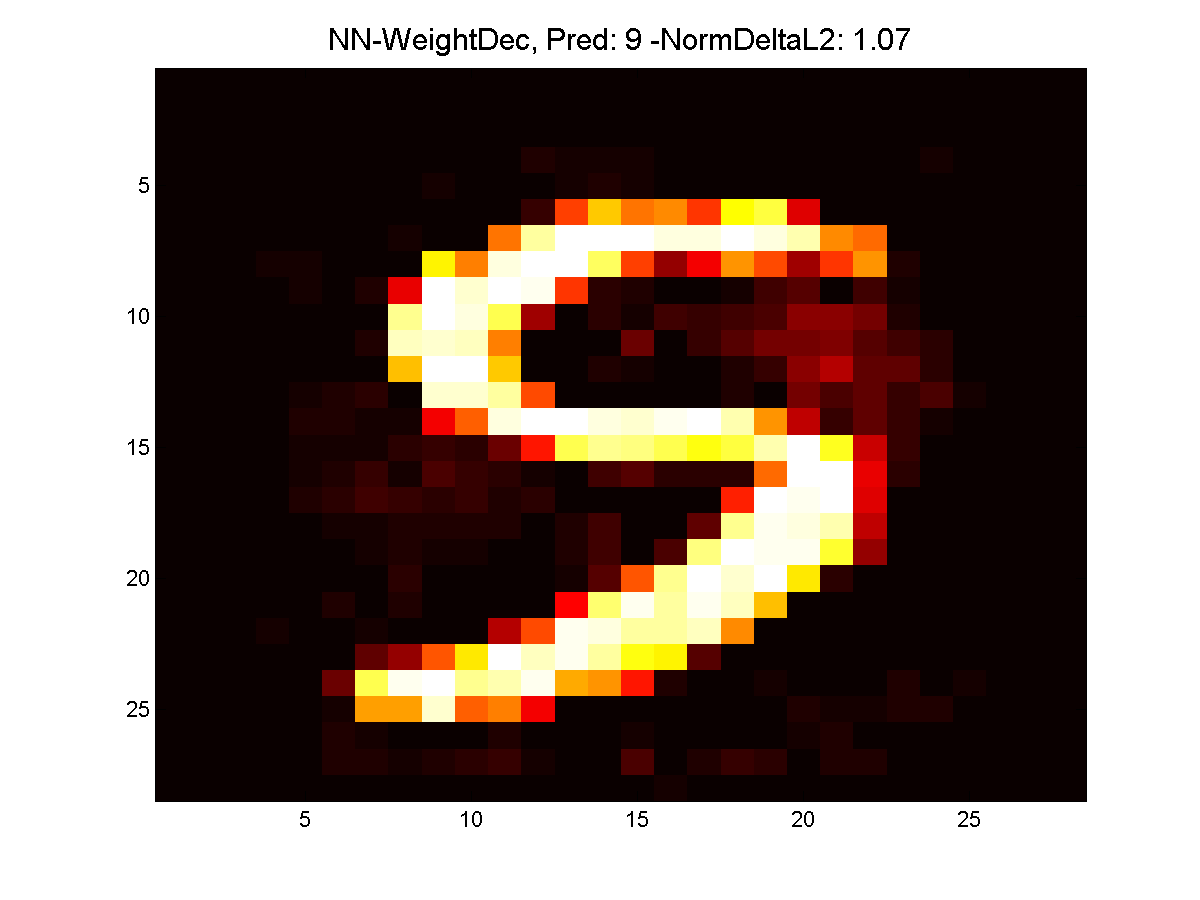}&\includegraphics[width=0.23\textwidth]{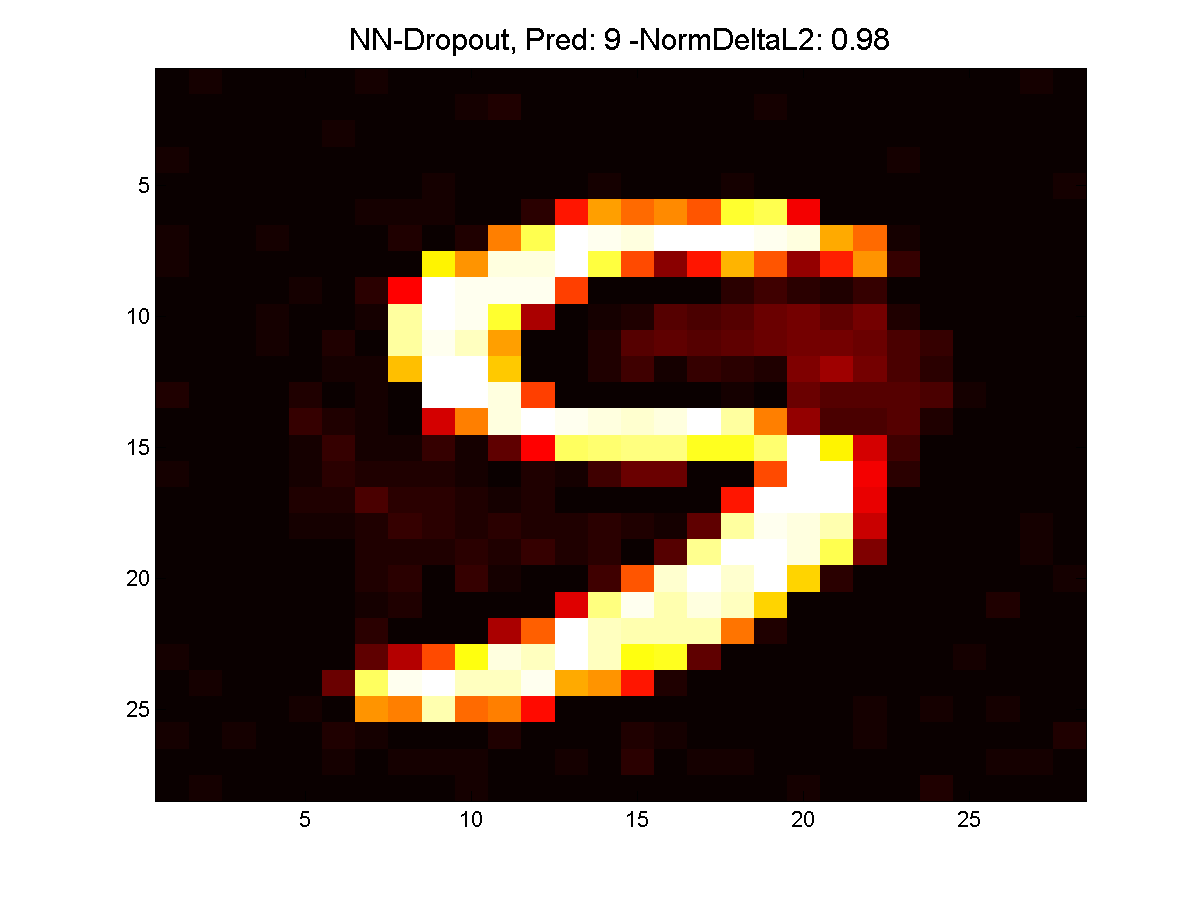}&\includegraphics[width=0.23\textwidth]{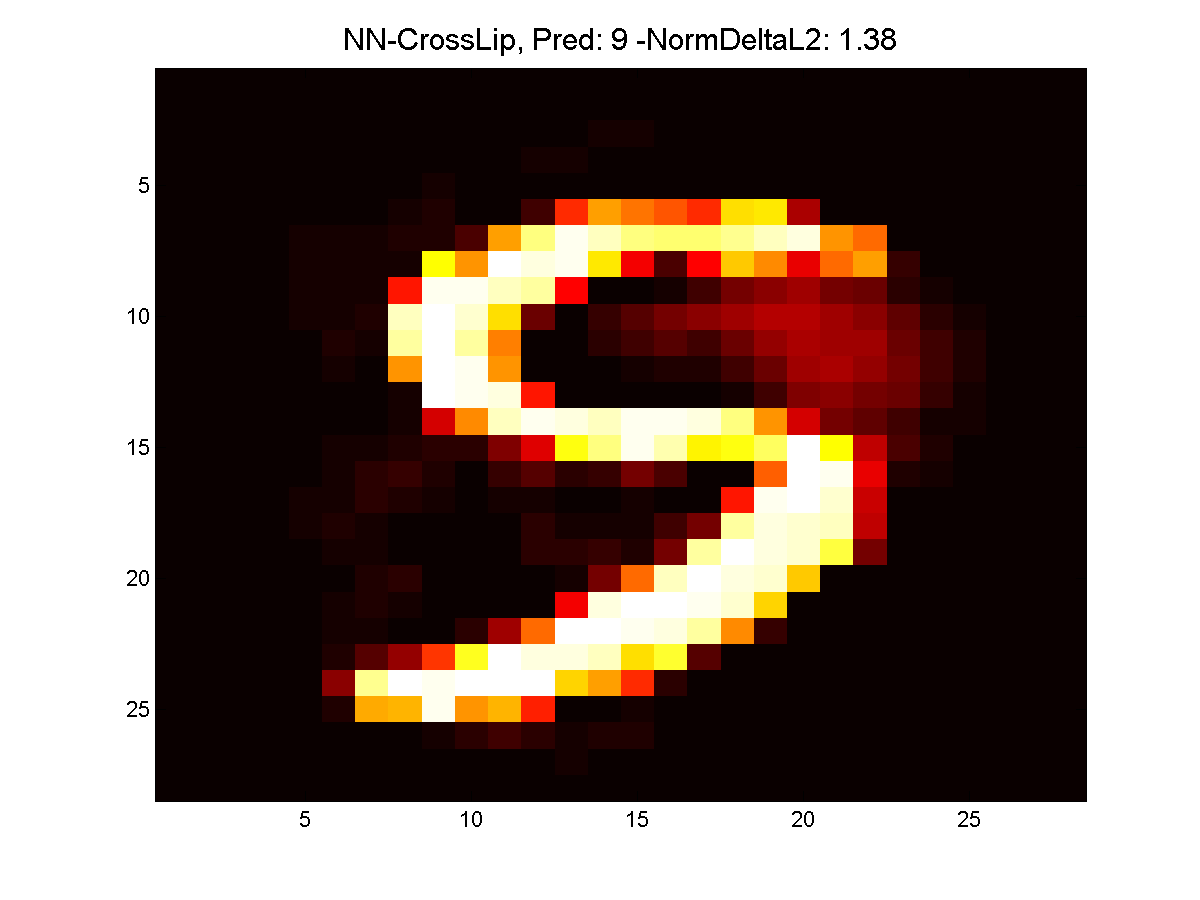}\\
NN-WD, Pred:9, $\norm{\delta}_2=1.1$ & NN-DO, Pred:9, $\norm{\delta}_2=1.0$ & NN-CL, Pred:9, $\norm{\delta}_2=1.4$
\end{tabular}
\captionof{figure}{Top left: original test image, for each classifier we generate the corresponding adversarial sample
which changes the classifier decision (denoted as Pred). Note that for the kernel methods this new decision makes sense, whereas
for all neural network models the change is so small that the new decision is clearly wrong.}
\end{center}
\fi

\ifpaper
\paragraph{German Traffic Sign Benchmark:}  As a third dataset we used the German Traffic Sign Benchmark (GTSB) \cite{GTSB2012}, which consists of images of german traffic signs, which has 43 classes with 34209 training and 12630 test samples.
The results are shown in Figure \ref{exp:NN-GTSB}. For this dataset Cross-Lipschitz regularization improves the upper bounds compared to weight decay but dropout achieves significantly better prediction performance and has similar upper bounds.
The robustness guarantees for weight decay and Cross-Lipschitz are slightly better than for dropout. 
\begin{figure}
{\scriptsize 
\begin{tabular}{c|c}
 Adversarial Resistance (Upper Bound)  & Robustness Guarantee (Lower Bound)\\
  wrt to $L_2$-norm                             & wrt to $L_2$-norm\\
  \includegraphics[width=0.45\textwidth]{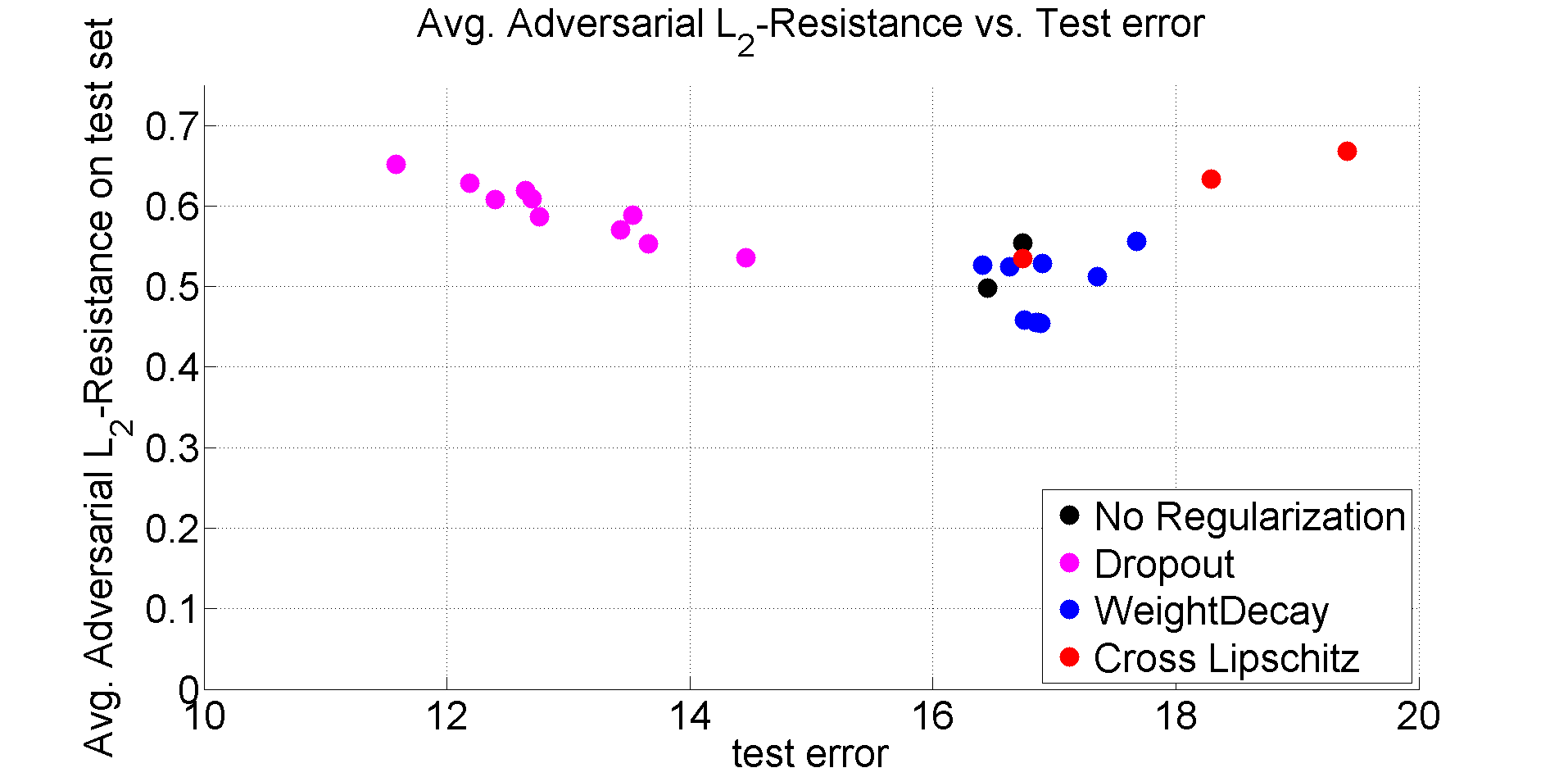}     &\includegraphics[width=0.45\textwidth]{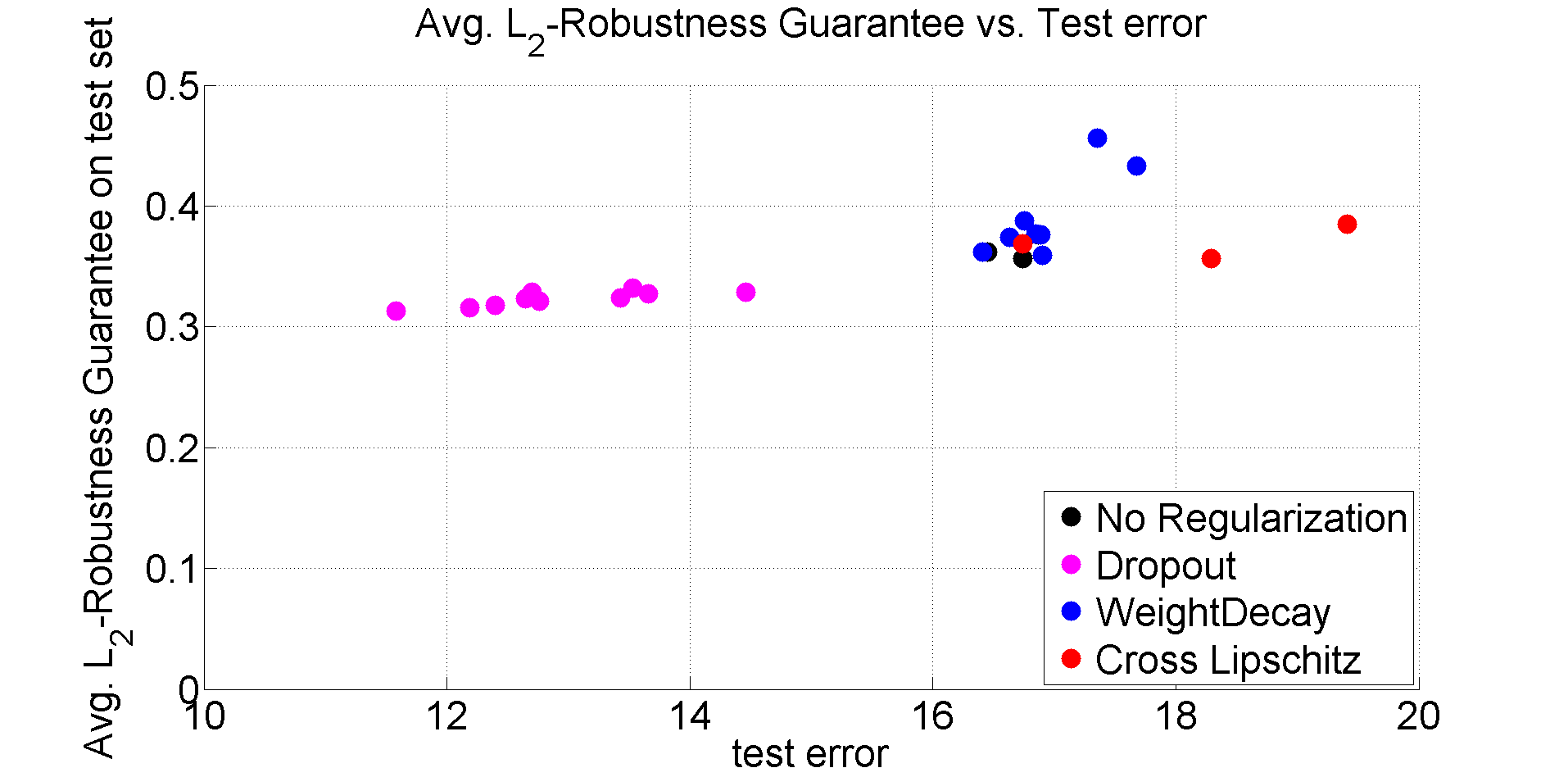}
\end{tabular}}
\caption{\label{exp:NN-GTSB}  \textbf{Left:} Adversarial resistance wrt to $L_2$-norm on test set of the german traffic sign benchmark (GTSB) in the plain setting. \textbf{Right:} Average robustness guarantee on the test set wrt to $L_2$-norm
for the test set of GTSB for different neural networks (one hidden layer, 1024 HU) and hyperparameters. Here dropout performs very well both in terms of performance and robustness.}
\end{figure}
\fi

\ifpaper
\paragraph{Residual Networks:} All experiments so far were done with one hidden layer neural networks so that we can evaluate lower and upper bounds. Now we want to demonstrate that Cross-Lipschitz regularization can also successfully be used for deep networks.
We use residual networks proposed in \cite{HeZhaRen2015} with 32 parameter layers and non-bottleneck residual blocks. We follow basically their setting, apart from that we did not subtract the per-pixel mean so that all images are in $[0,1]^d$ and use random crop  but without any padding as in \cite{HeZhaRen2015}. Similar to \cite{HeZhaRen2015}, we train for 160 epochs, and the learning rate is divided by 10 on the 115-th and 140-th epochs.
For the experiments with dropout we followed the recommendation of \cite{ZagKom2016}, inserting a dropout layer between convolutional layers inside each residual block. For Cross-Lipschitz regularization we use automatic differentiation in TensorFlow \cite{AbaAgaBar2016} 
 to calculate the derivative with respect to the input, which slows done the training by a factor of  $10$.

For the plain setting the learning rate for all methods is chosen from $\{0.2,0.5\}$, except for the runs without regularization, for which it is from $\{0.08, 0.1, 0.2, 0.4, 0.6, 0.8\}$. For weight decay the regularization parameter is chosen from $\{10^{-5},10^{-4},10^{-3},10^{-2}\}$, for Cross-Lipschitz from $\{10^{-4},10^{-3},10^{-2},10^{-1}\}$, and for dropout the probabilities are from $\{0.5, 0.6, 0.7, 0.8\}$. For the data augmentation setting the only difference was in the higher learning rates: no regularization - $\{0.2, 0.5, 0.8, 1.0, 1.5, 2.0, 3.0, 4.0\}$, weight decay - $\{0.1, 0.4\}$, Cross-Lipschitz - $\{0.2, 1.0\}$.
The results are shown in Figure \ref{exp:ResNets-plain-augm}. Cross-Lipschitz regularization improves the upper bounds on the robustness against adversarial manipulation compare to weight decay and dropout by a factor of 2 to 3 both in the plain setting (right) and with data augmentation (left). This comes at a price
of a slightly worse test performance. However, it shows that Cross-Lipschitz regularization is also effective for deep neural networks. It remains interesting future work to come up also with interesting instance-specific lower bounds (robustness guarantees) for deep neural networks.

\begin{figure}
\centering
\begin{tabular}{c c}
	Adversarial Resistance (Upper Bound)  & Adversarial Resistance (Upper Bound)\\
	wrt to $L_2$-norm (ResNets)                           & wrt to $L_2$-norm (ResNets)\\
	\includegraphics[width=0.45\textwidth]{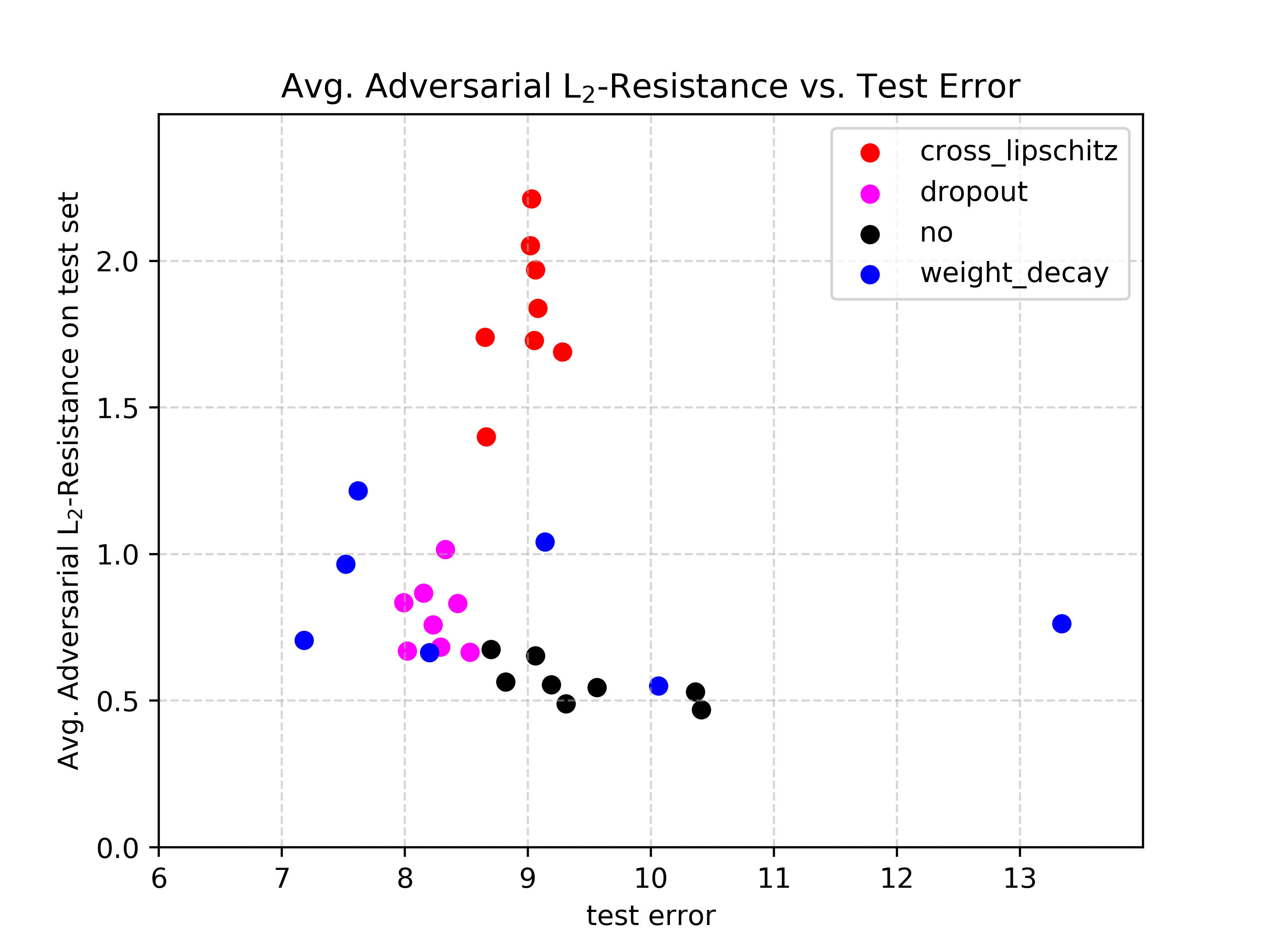}     &\includegraphics[width=0.45\textwidth]{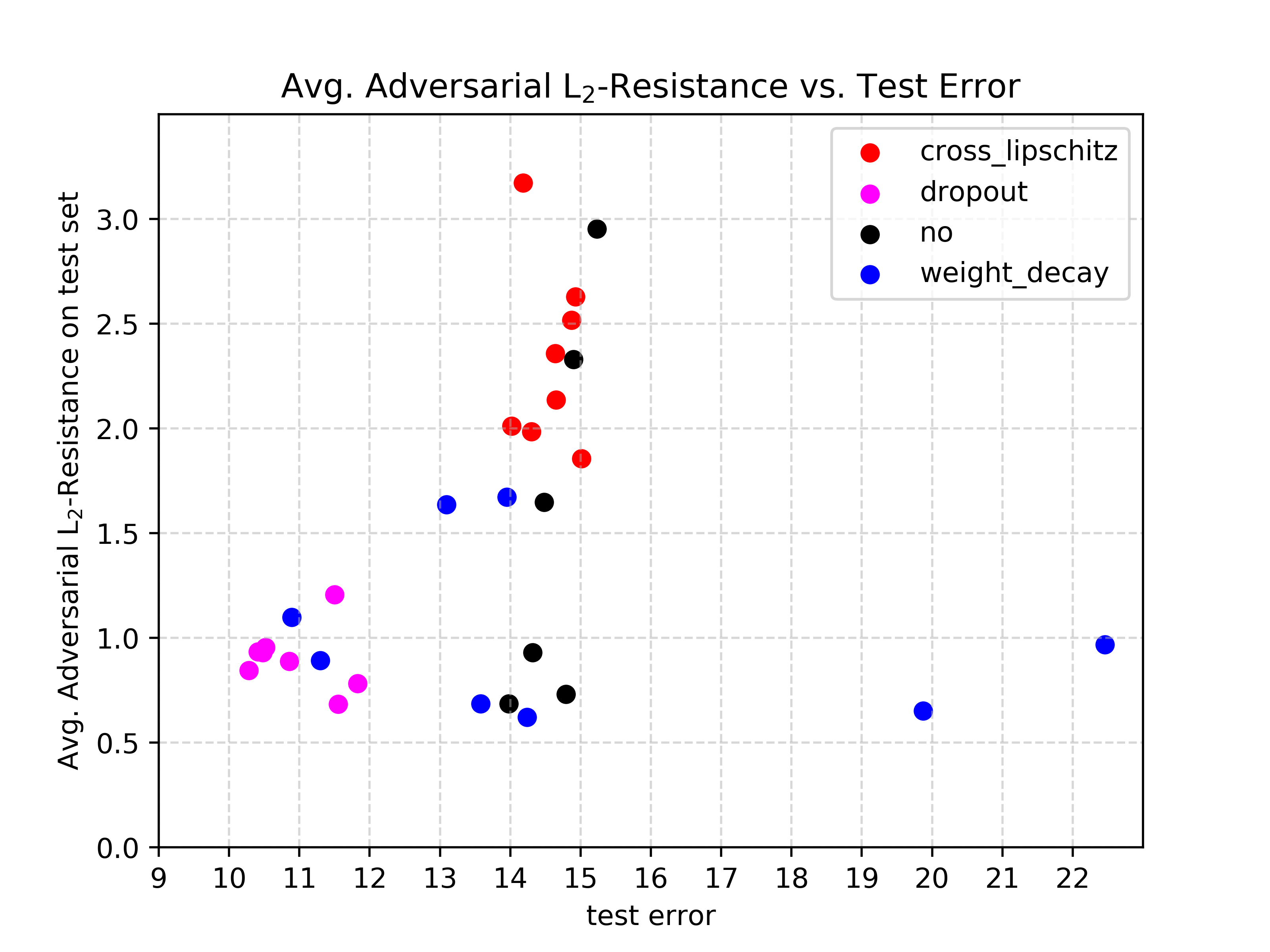}\\
\end{tabular}
\caption{\label{exp:ResNets-plain-augm} Results on CIFAR10 for a residual network with different regularizers. As we only have lower bounds for one hidden layer networks, we can only show upper bounds for adversarial resistance. 
\textbf{Left:} with data augmentation similar to \cite{ZagKom2016}  \textbf{Right:} plain setting }
\end{figure}

\fi

\paragraph{Outlook}
Formal guarantees on machine learning systems are becoming increasingly more important as they are used in safety-critical systems. We think that
there should be more research on robustness guarantees (lower bounds), whereas current research is focused on new attacks (upper bounds).
We have argued that our instance-specific guarantees using our local Cross-Lipschitz constant is more effective than using a global one
and leads to lower bounds which are up to 8 times better. A major open problem is to come up with tight lower bounds for deep networks.

\small
\bibliography{PFMH_bib,regul,Literatur}
\bibliographystyle{plain} 

\end{document}